\theoremstyle{plain}
\newtheorem{theorem}{Theorem}
\newtheorem*{theorem*}{Theorem}
\newtheorem{remark}{Remark}
\newtheorem{assumption}{Assumption}
\newtheorem*{assumption*}{Assumption}
\newtheorem{lemma}{Lemma}
\newtheorem*{lemma*}{Lemma}
\newtheorem{definition}{definition}
\newtheorem*{definition*}{Definition}
\Crefname{theorem}{Thm.}{Thms.}
\Crefname{algorithm}{Alg.}{Algs.}
\Crefname{section}{Sec.}{Secs.}
\Crefname{equation}{Eq.}{Eqs.}
\Crefname{assumption}{Asm.}{Asms.}
\Crefname{lemma}{Lem.}{Lems.}
\Crefname{remark}{Remark}{Remarks}
\Crefname{proposition}{Prop.}{Props.}
\Crefname{definition}{Def.}{Defs.}
\long\def\edit#1{{#1}}
\newcommand{\EluDim}{\op{EluDim}}
\let\cref\Cref
\begin{document}

\begin{frontmatter}
\title{The Central Role of the Loss Function in Reinforcement Learning}
\runtitle{Central Role of Loss Functions in RL}

\begin{aug}
\author[A]{\fnms{Kaiwen}~\snm{Wang}\ead[label=e1]{kw437@cornell.edu}},
\author[B]{\fnms{Nathan}~\snm{Kallus}\ead[label=e2]{kallus@cornell.edu}}
\and
\author[C]{\fnms{Wen}~\snm{Sun}\ead[label=e3]{ws455@cornell.edu}\ead[label=u1,url]{www.foo.com}}

\address[A]{The authors are from Cornell University. Correspondence to Kaiwen Wang (\href{https://kaiwenw.github.io/}{\nolinkurl{https://kaiwenw.github.io}}).}

\end{aug}

\begin{abstract}
This paper illustrates the central role of loss functions in data-driven decision making, providing a comprehensive survey on their influence in cost-sensitive classification (CSC) and reinforcement learning (RL). We demonstrate how different regression loss functions affect the sample efficiency and adaptivity of value-based decision making algorithms. Across multiple settings, we prove that algorithms using the binary cross-entropy loss achieve first-order bounds scaling with the optimal policy's cost and are much more efficient than the commonly used squared loss. Moreover, we prove that distributional algorithms using the maximum likelihood loss achieve second-order bounds scaling with the policy variance and are even sharper than first-order bounds. This in particular proves the benefits of distributional RL. We hope that this paper serves as a guide analyzing decision making algorithms with varying loss functions, and can inspire the reader to seek out better loss functions to improve any decision making algorithm.
\end{abstract}

\begin{keyword}
\kwd{First-Order (Small-Loss) and Second-Order (Variance-Dependent) Bounds}
\kwd{RL with Function Approximation}
\kwd{Distributional RL}
\end{keyword}

\end{frontmatter}

\section{Introduction}

The value-based approach to reinforcement learning (RL) reduces the decision making problem to regression: first predict the expected rewards to go under the optimal policy, given state and action, and then one can simply choose the action that maximizes the prediction at every state. This regression, called $Q$-learning \citep{watkins1992q}, combined with recent decades' advances in deep learning, plays a central role in the empirical successes of deep RL.
Groundbreaking examples are DeepMind's use of deep $Q$-networks to play Atari with no feature engineering \citep{mnih2015human} and OpenAI's use of deep reward models to align large language models with human preferences via RL fine-tuning \citep{ouyang2022training}.

In prediction, we often say a good model is one with low mean-squared error out of sample. Correspondingly, regression is usually done by minimizing the average squared loss between predictions and targets in the training data. However, low mean-squared error may translate loosely to high-quality, downstream decision making. Thus, the natural question arises: is squared loss always the best choice for learning $Q$-functions?

\begin{table}
\caption{The decision-making regret of value-based RL per loss function, where $n$ is the number of samples. In this paper, we'll see that squared loss cannot adapt to small-cost or small-variance settings while binary-cross-entropy (bce) and maximum likelihood estimation (mle) can. \edit{We remark that mle is used with distributional regression and thus requires stronger realizability or completeness conditions. Moreover, all three losses use slightly different eluder dimensions in the online setting. We summarize these nuances in \cref{table:summary-of-rl-results}.}}
\label{table:rates}
\begin{tabular}{@{}cccc@{}}%
\toprule
Loss $\backslash$ Setting & Worst-case & Small cost & Small variance \\ \midrule
$\ell_{\op{sq}}$ & $\Theta(1/\sqrt{n})$ & $\Theta(1/\sqrt{n})$ & $\Theta(1/\sqrt{n})$ \\ \midrule
$\ell_{\op{bce}}$ & $\Theta(1/\sqrt{n})$ & $\color{red}{\Ocal(1/n)}$ & $\Theta(1/\sqrt{n})$ \\ \midrule
$\ell_{\op{mle}}$ & $\Theta(1/\sqrt{n})$ & $\color{red}{\Ocal(1/n)}$ & $\color{red}{\Ocal(1/n)}$ \\ \bottomrule
\end{tabular}
\end{table}

In this survey article, we highlight that the answer to this question is a resounding ``\emph{no}!'' 
\edit{We focus on the theoretical question: when and how do alternative loss functions attain better guarantees for decision making?
We start with the simple setting of cost-sensitive classification (CSC) and show that the binary cross-entropy (bce) loss leads to an improved convergence guarantee that adapts to problem instances with low optimal cost; such a result is called a first-order bound. This result is due to \citet{foster2021efficient}, who first observed that the bce loss can have benefits over square loss when outcomes are heteroskedastic and used this observation to derive first-order regret bounds for CSC and contextual bandits (CB). 
Then, we prove that the maximum likelihood (mle) loss yields an even better convergence guarantee that additionally adapts to problem instances with low variance; such a result is called a second-order bound and is strictly stronger than first-order. This result is due to \citet{wang2024more}, who first observed that the mle loss can have benefits over the bce loss when outcomes have low variance and used this observation to derive second-order bounds for CB and RL. 
We also provide lower bounds to show that the separation between these loss functions is indeed real, with one result new to this paper separating bce and mle in their ability to attain second-order bounds.
The decision making performance that each loss function attains in each type of problem instance is outlined in \cref{table:rates}.

We then turn to extending these observations and intuitions to RL, following the results of \citet{wang2023benefits,wang2024more,ayoub2024switching}.
We systematically derive bounds for both online and offline RL, revealing that the trends in \cref{table:rates} continue to hold for the more challenging RL settings.
Finally, we discuss issues surrounding computational complexity and provide a solution in the hybrid RL setting \citep{song2023hybrid}. 
In summary, we provide a detailed survey on the decision performance of different loss functions, with the aim of elucidating their central role in RL and data-driven decision making generally. }
The technical material is largely based on \citet{foster2021efficient,wang2023benefits,wang2024more,ayoub2024switching} with a couple new results along the way.

\section{Cost-Sensitive Classification}

To best illuminate the phenomenon, we start with the simplest setting of contextual decision making: cost-sensitive classification (CSC), where learning is done offline, decisions have no impact on future contexts, and full feedback is given for all actions.
To make it the simplest CSC setting, we even assume that the action space is finite (an assumption we shed in later sections). An instance of the CSC problem is then characterized by a context space $\Xcal$, a finite number of actions $A$, and a distribution $d$ on $\Xcal\times[0,1]^A$. The value of a policy $\pi:\Xcal\to\{1,\dots,A\}$ is its average cost under this distribution: $V(\pi)=\mathbb E[c(\pi(x))]$, where $x,c(1),\dots,c(A)\sim d$. The optimal value is $V^\star=\min_{\pi:\Xcal\to\{1,\dots,A\}}V(\pi)$.
We are given $n$ draws of $x_i,c_i(1),\dots,c_i(A)\sim d$, sampled independently and identically distributed (i.i.d.), based on which we output a policy $\hat\pi$ with the aim of it having low $V(\hat \pi)$.

Let $C:\Xcal\times\Acal\to\Delta([0,1])$ map $x,a$ to the conditional distribution of $c(a)$ given $x$ under $d$. Here $\Delta([0,1])$ denotes the set of distributions on $[0,1]$ that are absolutely continuous with respect to (w.r.t.) a base measure $\lambda$, such as Lesbesgue measure for continuous distributions or a counting measure for discrete distributions. We identify such distributions by its density function w.r.t. $\lambda$ and we write $C(y\mid x,a)$ for the density of $C(x,a)$ at $y$. We assume that $\lambda$ is common across $x,a$ and is known. We can then write value as an expectation w.r.t. $x$ alone:
$$
V(\pi)=\mathbb E[\bar C(x,\pi(x))],
$$
where the bar notation on a distribution denotes the mean: $\bar p=\int_y y p(y)\diff \lambda(y)$ for any $p\in\Delta([0,1])$.

\subsection{Solving CSC with Squared-Loss Regression}

A value-based approach to CSC is to learn a cost prediction $f(x,a)\approx \bar C(x,a)$ by regressing costs on contexts and then use an induced greedy policy: $\pi_f(x)\in\argmin_a f(x,a)$. \edit{A standard way to learn such a cost prediction is to minimize the squared error \citep{audibert2007fast,luedtke2017faster,hu2022fast}.}

Define the squared loss and the excess squared-loss risk for a prediction function $f$ as:
\begin{align*}
&\textstyle\ell_{\op{sq}}(\hat y,y):=(\hat y-y)^2,
\\&\textstyle\Ecal_{\op{sq}}(f):=\sum_{a}\mathbb E[\ell_{\op{sq}}(f(x,a),c(a))-\ell_{\op{sq}}(\bar C(x,a),c(a))].
\end{align*}
This can be used to expediently bound the sub-optimality of the policy induced by $f$:
\begin{align}
V(\pi_f)-V^\star&=\mathbb E[\bar C(x,\pi_f(x))-\bar C(x,\pi^\star(x))]\notag
\\&\notag\leq \mathbb E[\bar C(x,\pi_f(x))-f(x,\pi_f(x))\\&\notag\phantom{\leq \mathbb E[}+f(x,\pi^\star(x))-\bar C(x,\pi^\star(x))]
\\&\label{eq: bound2}\textstyle\lesssim \left(\sum_{a}\EE(f(x,a)-\bar C(x,a))^2\right)^{1/2}
\\&\notag\textstyle= \left(\Ecal_{\op{sq}}(f)\right)^{1/2},
\end{align}
where $\lesssim$ means $\leq$ up to a universal constant factor (\eg, above in \cref{eq: bound2}, it is $2$).

How do we learn a predictor with low excess squared-loss risk? We minimize the empirical squared-loss risk over a hypothesis class $\mathcal F$ of functions $\mathcal X\times\mathcal A\to[0,1]$:
\begin{equation*}
    \textstyle\hat f^{\op{sq}}_{\mathcal F}\in\argmin_{f\in \mathcal F}\sum_{i=1}^n\sum_{a=1}^A\ell_{\op{sq}}(f(x_i,a),c_i(a)).
\end{equation*}
This procedure is termed nonparametric least squares (since $\Fcal$ is general), and standard results control the
excess risk of $\hat f^{\op{sq}}_{\mathcal F}$.
Here we give a version for finite hypothesis classes, while for infinite classes the excess risk convergence depends on their complexity, such as given by the critical radius \citep{wainwright2019high}.
\begin{assumption}[Realizability]\label{ass:csc-realizability}
    $\bar C\in\Fcal$.
\end{assumption}
Under \cref{ass:csc-realizability}, for any $\delta\in(0,1)$, with probability at least (\wpal) $1-\delta$,
$$\Ecal_{\op{sq}}(\hat f^{\op{sq}}_\Fcal)\lesssim A\log(\edit{A}\abs{\mathcal F}/\delta)/n.$$

Together with \cref{eq: bound2}, we obtain the following probably approximately correct (PAC) bound:
\begin{theorem}
Under \cref{ass:csc-realizability}, for any $\delta\in(0,1)$, \wpal $1-\delta$, plug-in squared loss regression enjoys
$$V(\pi_{\hat f^{\op{sq}}_{\mathcal F}})-V^\star\lesssim \sqrt{A\log(\edit{A}\abs{\mathcal F}/\delta)/n}.$$
\end{theorem}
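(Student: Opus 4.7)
The plan is to combine the two results that the excerpt has essentially already laid out: the deterministic sub-optimality-to-excess-risk reduction in \cref{eq: bound2}, and the stated high-probability excess-risk bound $\Ecal_{\op{sq}}(\hat f^{\op{sq}}_\Fcal) \lesssim A\log(A\abs{\Fcal}/\delta)/n$. Substituting $f = \hat f^{\op{sq}}_\Fcal$ in \cref{eq: bound2} and then plugging in the excess-risk bound immediately yields $V(\pi_{\hat f^{\op{sq}}_\Fcal}) - V^\star \lesssim \sqrt{A\log(A\abs{\Fcal}/\delta)/n}$. So the proof reduces entirely to justifying the excess-risk bound, which the excerpt states but does not derive.

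For that, I would use a Bernstein-plus-union-bound argument relying on a variance-to-mean comparison specific to squared loss. For each $a \in \{1,\ldots,A\}$ and $f \in \Fcal$, define the per-sample excess losses
$$Z_{i,a}(f) := \ell_{\op{sq}}(f(x_i,a),c_i(a)) - \ell_{\op{sq}}(\bar C(x_i,a),c_i(a)).$$
The algebraic identity $(\hat y - y)^2 - (y' - y)^2 = (\hat y - y')(\hat y + y' - 2y)$ together with $\bar C(x,a) = \mathbb E[c(a) \mid x]$ gives both $\mathbb E[Z_{i,a}(f)] = \mathbb E[(f(x,a) - \bar C(x,a))^2]$ and $\mathrm{Var}(Z_{i,a}(f)) \lesssim \mathbb E[(f(x,a) - \bar C(x,a))^2]$, since $\abs{f + \bar C - 2c} \leq 4$ when $f, \bar C, c \in [0,1]$. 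Bernstein's inequality combined with a union bound over the $A\abs{\Fcal}$ pairs $(a,f)$ then yields, \wpal $1-\delta$, simultaneously for all $f$ and $a$,
$$\mathbb E[(f(x,a) - \bar C(x,a))^2] \lesssim \frac{1}{n}\sum_{i=1}^n Z_{i,a}(f) + \frac{\log(A\abs{\Fcal}/\delta)}{n}.$$
Summing over $a$ and specializing to $f = \hat f^{\op{sq}}_\Fcal$, the empirical-average term is non-positive: by \cref{ass:csc-realizability} we have $\bar C \in \Fcal$, and by definition of the empirical minimizer $\sum_{i,a}\ell_{\op{sq}}(\hat f^{\op{sq}}_\Fcal(x_i,a),c_i(a)) \leq \sum_{i,a}\ell_{\op{sq}}(\bar C(x_i,a),c_i(a))$. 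This gives $\Ecal_{\op{sq}}(\hat f^{\op{sq}}_\Fcal) \lesssim A\log(A\abs{\Fcal}/\delta)/n$, and combining with \cref{eq: bound2} closes the proof.

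The one delicate step is the variance-to-mean inequality; without it, Hoeffding would give only $\Ecal_{\op{sq}}(\hat f^{\op{sq}}_\Fcal) \lesssim \sqrt{A\log(A\abs{\Fcal}/\delta)/n}$, and square-rooting via \cref{eq: bound2} would yield the useless $n^{-1/4}$ rate. The self-bounded variance of squared loss is exactly what produces the fast $1/n$ excess-risk rate, hence the $1/\sqrt n$ final sub-optimality. Everything else is bookkeeping: the reduction \cref{eq: bound2} (which is Cauchy--Schwarz after adding and subtracting $f(x,\pi_f(x))$ and $f(x,\pi^\star(x))$ and using that $\pi_f$ minimizes $f(x,\cdot)$), the union bound, and the ERM optimality step.
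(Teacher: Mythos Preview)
The proposal is correct and takes essentially the same approach as the paper: combine the deterministic reduction \cref{eq: bound2} with the high-probability excess-risk bound $\Ecal_{\op{sq}}(\hat f^{\op{sq}}_\Fcal)\lesssim A\log(A|\Fcal|/\delta)/n$, which the paper states as a standard result and you fill in via Bernstein plus the self-bounded-variance property of squared loss. Your argument is the textbook derivation of that fast excess-risk rate under realizability, so there is no meaningful difference from the paper's route.
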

This PAC bound shrinks at a nice parametric rate of $\Ocal(n^{-1/2})$ as the number of samples $n$ grows, but can we do better?
The step in \cref{eq: bound2}, which translates error in predicted means to excess risk in the squared loss, was rather loose. 
\edit{\citet{foster2021efficient} thus investigated the bce loss and showed that it can achieve a first-order bound}, which we present next after introducing an important second-order lemma.

\subsection{The Second-Order Lemma}
We know that estimating a mean of a random variable is easier when the random variable has smaller variance. Our next result recovers this intuition \emph{as a completely deterministic statement about comparing bounded scalars}:
\begin{lemma}[Second-Order Mean Comparison]\label{lemma: second order}
Let $p,q$ be two densities on $[0,1]$ with respect to a common measure $\lambda'$. Then
$$
\abs{\bar p-\bar q}\leq
6\sigma(p)h(p,q)+8h^2(p,q),
$$
where the variance and the squared Hellinger distance are defined as
\begin{align*}
    &\textstyle\sigma^2(p)=\int_yy^2p(y)\diff \lambda'(y)-\bar p^2
    \\&\textstyle h^2(p,q)=\frac12\int_y(\sqrt{p(y)}-\sqrt{q(y)})^2\diff\lambda'(y).
\end{align*}
\end{lemma}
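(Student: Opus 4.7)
The plan is to treat $\bar p - \bar q$ as an integral and decompose it carefully so that, after applying Cauchy--Schwarz, a variance term involving $p$ alone appears. First, since $p$ and $q$ both integrate to $1$, any additive constant can be inserted inside the integral for free, so I would center the integrand at $\bar p$:
\[
\bar p - \bar q = \int y\,(p(y)-q(y))\,\mathrm{d}\lambda'(y) = \int (y-\bar p)\,(p(y)-q(y))\,\mathrm{d}\lambda'(y).
\]
This centering is the key move that will let a variance of $p$ appear later.

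Next, the idea is to rewrite $p-q$ in terms of $\sqrt p - \sqrt q$ so that the Hellinger distance can be pulled out. Instead of the symmetric factorization $p-q=(\sqrt p-\sqrt q)(\sqrt p+\sqrt q)$ (which would force $\sigma(q)$ to appear), I would use the asymmetric expansion obtained from $q=(\sqrt p+(\sqrt q-\sqrt p))^2$, namely
\[
p - q \;=\; 2\sqrt{p}\,(\sqrt p - \sqrt q) \;-\; (\sqrt p - \sqrt q)^2.
\]
Plugging this into the centered integral splits $\bar p-\bar q$ into a linear-in-$(\sqrt p-\sqrt q)$ term and a quadratic-in-$(\sqrt p-\sqrt q)$ term.

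I would then bound the two terms separately. For the linear term, Cauchy--Schwarz with the pairing $f(y)=(y-\bar p)\sqrt{p(y)}$ and $g(y)=\sqrt{p(y)}-\sqrt{q(y)}$ gives $\int f^2\,\mathrm{d}\lambda' = \sigma^2(p)$ and $\int g^2\,\mathrm{d}\lambda' = 2h^2(p,q)$, producing a bound of the form $C_1\,\sigma(p)\,h(p,q)$. For the quadratic term, I would just use $y,\bar p\in[0,1]$ so that $|y-\bar p|\le 1$, which collapses the integral to $\int(\sqrt p-\sqrt q)^2\,\mathrm{d}\lambda' = 2h^2(p,q)$, producing a bound of the form $C_2\,h^2(p,q)$. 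Combining these yields the advertised form $|\bar p-\bar q|\lesssim \sigma(p)h(p,q)+h^2(p,q)$; the specific constants $6$ and $8$ would fall out of the constants $C_1,C_2$ (with some looseness).

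The nonobvious step, and really the whole content of the lemma, is the asymmetric expansion $p-q=2\sqrt p(\sqrt p-\sqrt q)-(\sqrt p-\sqrt q)^2$: it is what breaks the symmetry between $p$ and $q$ and lets the final bound reference only the variance of $p$. Everything else is routine Cauchy--Schwarz and a boundedness estimate. A minor sanity check I would keep in mind is that both sides of the claimed inequality are well-defined even when $p$ and $q$ do not overlap in support, since then $h^2(p,q)=1$ and the RHS is trivially larger than $|\bar p-\bar q|\le 1$.
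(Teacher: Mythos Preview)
Your proof is correct and actually yields sharper constants ($2\sqrt{2}$ and $2$ in place of $6$ and $8$), but the approach is genuinely different from the paper's. The paper works through the triangular discrimination $\triangle(p,q)=\int\frac{(p-q)^2}{p+q}\,\mathrm{d}\lambda'$, which is equivalent to $h^2$ up to constants. It centers at the midpoint $c=(\bar p+\bar q)/2$ and applies Cauchy--Schwarz to get a bound involving \emph{both} variances $\sigma^2(p)+\sigma^2(q)$ plus a $(\bar p-\bar q)^2$ term; the latter is absorbed by an implicit inequality, and the two variances are then reduced to the smaller one via a separate variance-comparison lemma ($\sigma^2(p)-\sigma^2(q)\lesssim\sqrt{\sigma^2(q)\triangle}+\triangle$). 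Your asymmetric expansion $p-q=2\sqrt p(\sqrt p-\sqrt q)-(\sqrt p-\sqrt q)^2$ sidesteps all of that machinery: it breaks the $p$/$q$ symmetry from the outset so that only $\sigma(p)$ ever appears, and no auxiliary variance lemma or implicit-inequality step is needed. The paper's route has the side benefit of producing that variance-comparison inequality, which it reuses later in the distributional RL analysis; your route is shorter and more elementary for the mean comparison alone.
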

Here, $h^2(p,q)$ is the squared Hellinger distance, which is an $f$-divergence, and it is bounded in $[0,1]$.
This lemma is equivalent to Lemma 4.3 of \citep{wang2024more} and we provide a simplified proof in \cref{sec:proof-second-order-lemma}. 
Interpreting the inequality, which is a completely deterministic statement, in terms of estimating means, it says that estimation error can be bounded by two terms: one involves the standard deviation times a discrepancy and the other is a \emph{squared} discrepancy.
As variance shrinks, the first term vanishes and the second term dominates, which as a squared term we expect to decay quickly.
\edit{We note that \citep{foster2021statistical,foster2022complexity} obtain related first-moment and second-moment bounds, which are generally looser and do not directly imply the variance bound in \cref{lemma: second order}.}

\subsection{Regression with the Binary-Cross-Entropy Loss: First-Order PAC Bounds for CSC}
One way to instantiate \cref{lemma: second order} is to let $\lambda'$ be the counting measure on $\{0,1\}$ and, given any $f,g\in[0,1]$, set $p,q$ as the Bernoulli distributions with means $f,g$, respectively. Bounding $f(1-f)\leq f$, this leads to
\begin{align}\label{eq: first order}
\abs{f-g}\leq 8\sqrt fh_{\op{Ber}}(f,g)+20h^2_{\op{Ber}}(f,g),
\end{align}
where $h^2_{\op{Ber}}(f,g)=\frac12(\sqrt f-\sqrt g)^2+\frac12(\sqrt {1-f}-\sqrt {1-g})^2$ is the squared Hellinger distance between Bernoullis with means $f$ and $g$.
This recovers the key inequalities in \citep{foster2021efficient,wang2023benefits,ayoub2024switching}.

Replacing the bound in \cref{eq: bound2} with \cref{eq: first order} and using Cauchy-Schwartz, we obtain
\begin{align*}%
&\textstyle V(\pi_f)-V^\star\lesssim\sqrt{(V(\pi_f)+V^\star)\cdot\delta_{\op{Ber}}(f)}+\delta_{\op{Ber}}(f),
\\&\text{where}\,\,\textstyle \delta_{\op{Ber}}(f):=\sum_a\EE[h_{\op{Ber}}^2(\bar C(x,a),f(x,a))]. \nonumber
\end{align*}
Applying the inequality of arithmetic and geometric means (AM-GM), we see that this implies $V(\pi_f)\lesssim V^\star + \delta_{\op{Ber}}(f)$.
Plugging this implicit inequality back into the above, we have that
\begin{equation}\label{eq: first order csc pac}
V(\pi_f)-V^\star\lesssim
\sqrt{V^\star\cdot\delta_{\op{Ber}}(f)}+\delta_{\op{Ber}}(f).
\end{equation}
Since $V^\star\leq 1$, \cref{eq: first order csc pac} also implies $V(\pi_f)-V^\star\lesssim \sqrt{\delta_{\op{Ber}}(f)}$. That is, if we learn a predictor with low $\sqrt{\delta_{\op{Ber}}(f)}$, then its induced policy has correspondingly low suboptimality.
However, \cref{eq: first order csc pac} also crucially involves $V^\star$. Thus, if the optimal policy incurs little expected costs so that the first term in \cref{eq: first order csc pac} is negligible, we get to \emph{square} the rate of convergence.

How do we learn a predictor with low $\delta_{\op{Ber}}(f)$? Since $\delta_{\op{Ber}}(f)$ is an average divergence between Bernoulli distributions, we could try to fit Bernoullis to the costs. Define the binary-cross-entropy (bce) loss as
$$
\ell_{\op{bce}}(\hat y,y):=-y \ln\hat y-(1-y)\ln(1-\hat y).
$$
We adopt the convention that $0\ln 0 = 0$.
Then, $\delta_{\op{Ber}}(f)$ is bounded by an exponentiated excess bce-loss risk \citep{foster2021efficient}.
\begin{lemma}\label{lem:exponentiated-excess-bce} For any $f:\mathcal X\times\mathcal A\to[0,1]$,
\begin{align*}\textstyle\delta_{\op{Ber}}(f)\leq \Ecal_{\op{bce}}(f),\end{align*}
where $\Ecal_{\op{bce}}(f):=-\sum_{a=1}^A\ln\EE[\exp(\frac12\ell_{\op{bce}}(\bar C(x,a),c(a))-\frac12\ell_{\op{bce}}(f(x,a),c(a)))]$.
\end{lemma}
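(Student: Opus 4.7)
The plan is to do the calculation pointwise in $x$ and $a$, then aggregate. Fixing $a$ and conditioning on $x$, let $p = \bar C(x,a)$ and $q = f(x,a)$. Expanding the bce definition, the exponentiated excess bce simplifies to
$$\exp\Bigl(\tfrac12\ell_{\op{bce}}(p, c(a)) - \tfrac12\ell_{\op{bce}}(q, c(a))\Bigr) = (q/p)^{c(a)/2}\bigl((1-q)/(1-p)\bigr)^{(1-c(a))/2},$$
which, viewed as a function $g$ of the scalar $c(a)$, has $\ln g$ linear in $c(a)$ and is therefore exponential in $c(a)$, hence convex.

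The key move is to bound $g$ by its secant on $[0,1]$: for $c \in [0,1]$,
$$g(c) \leq (1-c)\,g(0) + c\,g(1) = (1-c)\sqrt{(1-q)/(1-p)} + c\sqrt{q/p}.$$
Then I would take the conditional expectation given $x$; since $\EE[c(a)\mid x] = p$, the right-hand side averages to $\sqrt{pq} + \sqrt{(1-p)(1-q)}$, which by the standard identity equals $1 - h^2_{\op{Ber}}(p,q)$.

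Next I take the outer expectation over $x$ and apply $-\ln$ (monotone decreasing), followed by the elementary inequality $-\ln(1-u)\geq u$ for $u\in[0,1)$:
$$-\ln\EE\bigl[g(c(a))\bigr] \;\geq\; -\ln\!\bigl(1 - \EE[h^2_{\op{Ber}}(\bar C(x,a), f(x,a))]\bigr) \;\geq\; \EE[h^2_{\op{Ber}}(\bar C(x,a), f(x,a))].$$
Summing over $a$ yields $\Ecal_{\op{bce}}(f) \geq \delta_{\op{Ber}}(f)$.

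The main obstacle is the first move: $c(a)$ is not Bernoulli but only $[0,1]$-valued, so the classical log-loss/Hellinger duality for discrete outcomes does not apply directly. The resolution is to observe that $g$ is convex in $c$, so the secant bound reduces everything to the Bernoulli endpoints, and afterwards only the conditional mean $\EE[c(a)\mid x] = \bar C(x,a)$ is required. Edge cases where $p$ or $q$ lies in $\{0,1\}$ need minor care (using the convention $0\ln 0 = 0$ the statement already adopts, or a small continuity argument), but are routine.
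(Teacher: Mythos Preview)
Your proposal is correct and is essentially the paper's own proof. The paper phrases your ``secant bound on the convex function $g$'' as introducing an auxiliary $z\sim\operatorname{Ber}(c(a))$ and applying Jensen to $\exp(\cdot)$, which is the same inequality; it then uses $-\ln x\ge 1-x$ where you use $-\ln(1-u)\ge u$, and the remaining algebra is identical.
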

\begin{proof}
For each $a$, let $z\sim\op{Ber}(c(a))$,
\begin{align*}
    &\textstyle-\ln\EE[\exp(\frac12\ell_{\op{bce}}(\bar C(x,a),c(a))-\frac12\ell_{\op{bce}}(f(x,a),c(a)))]
    \\&\textstyle=-\ln\EE[\exp(\frac12(c(a)\ln\frac{f(x,a)}{\bar C(x,a)} + (1-c(a))\ln\frac{1-f(x,a)}{1-\bar C(x,a)} ))]
    \\&\overset{(i)}\geq\textstyle-\ln\EE[ \exp(\frac12(z\ln\frac{f(x,a)}{\bar C(x,a)} + (1-z)\ln\frac{1-f(x,a)}{1-\bar C(x,a)} )) ]
    \\&\textstyle=-\ln\EE[ \sqrt{f(x,a)\bar C(x,a)} + \sqrt{(1-f(x,a))(1-\bar C(x,a))} ]
    \\&\overset{(ii)}\geq\textstyle1-\EE[ \sqrt{f(x,a)\bar C(x,a)} + \sqrt{(1-f(x,a))(1-\bar C(x,a))} ]
    \\&\overset{(iii)}=\textstyle h_{\op{Ber}}^2(\bar C(x,a),f(x,a)).
\end{align*}
where (i) is by Jensen's inequality, (ii) is by $-\ln x \geq 1-x$, (iii) is by completing the square.
\end{proof}

To learn a predictor with low $\Ecal_{\op{bce}}$, we may consider minimizing the empirical bce-loss risk, simply replacing $\ell_{\op{sq}}$ by $\ell_{\op{bce}}$ in nonparametric least squares:
$$\textstyle
\hat f^{\op{bce}}_{\mathcal F}\in\argmin_{f\in \mathcal F}\sum_{i=1}^n\sum_{a=1}^A\ell_{\op{bce}}(f(x_i,a),c_i(a)).
$$

The bce loss $\ell_{\op{bce}}(\hat y,y)$ is exactly the negative log-likelihood of observing $y$ from a Bernoulli distribution with mean $\hat y$. Nevertheless, even if $y$ is \emph{not binary}, it can be used as a general-purpose surrogate loss for regression (sometimes under the moniker ``log loss" \citep{foster2021efficient,ayoub2024switching}). In particular, for any density $p\in\Delta([0,1])$, the mean $\bar p$ minimizes expected bce loss:
$$
\EE_{y\sim p}[\ell_{\op{bce}}(f,y)-\ell_{\op{bce}}(\bar p,y)]\geq 2(f-\bar p)^2.
$$
This inequality also means that we could use the excess bce-loss risk to bound \cref{eq: bound2} in place of excess squared-loss risk. The point of using bce loss, however, is to do better than \cref{eq: bound2} via \cref{eq: first order}.

For the final part of the proof, we need to show that minimizing the empirical bce-loss risk gives good control on $\Ecal_{\op{bce}}(\hat f^{\op{bce}}_{\mathcal F})$.
\edit{This is implied by the following tail bound.}
\begin{lemma}\label{lem:logsumexp-symmetrization}
Let $Z_1,\dots,Z_n$ denote $n$ \emph{i.i.d.} random variables.
For any $\delta\in(0,1)$, \wpal $1-\delta$,
\begin{equation*}
    -n\ln\EE[\exp(-Z_1)]\leq\textstyle\sum_{i=1}^nZ_i+\ln(1/\delta).
\end{equation*}
\end{lemma}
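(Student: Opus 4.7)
The plan is to reduce this to Markov's inequality applied to a carefully chosen non-negative random variable with unit mean. Concretely, let $M := \mathbb{E}[\exp(-Z_1)]$ (assumed positive and finite; otherwise the bound is trivial or needs truncation). I would introduce the likelihood-ratio-style weights $W_i := \exp(-Z_i)/M$. Each $W_i$ is non-negative and satisfies $\mathbb{E}[W_i]=1$ by construction.

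Next I would exploit the i.i.d.\ assumption. Since the $Z_i$ are independent, so are the $W_i$, and hence the product $\prod_{i=1}^n W_i$ has expectation $\prod_{i=1}^n \mathbb{E}[W_i] = 1$. Applying Markov's inequality to this non-negative, unit-mean random variable yields, for any $\delta \in (0,1)$,
$$
\Pr\Bigl(\prod_{i=1}^n W_i \geq 1/\delta\Bigr) \leq \delta.
$$

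Then I would unpack the definition of $W_i$ on the complementary event. With probability at least $1-\delta$,
$$
\prod_{i=1}^n \frac{\exp(-Z_i)}{M} < \frac{1}{\delta},
\quad\text{equivalently,}\quad
\exp\!\Bigl(-\sum_{i=1}^n Z_i - n\ln M\Bigr) < \frac{1}{\delta}.
$$
Taking logarithms of both sides and rearranging gives
$$
-n\ln M \leq \sum_{i=1}^n Z_i + \ln(1/\delta),
$$
which is precisely the claim.

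The argument is short and there is no real obstacle beyond recognizing the right ``change of measure'' device; the only subtlety worth noting is that the inequality is vacuous (or needs a separate trivial treatment) when $\mathbb{E}[\exp(-Z_1)] = 0$ or $+\infty$, and that the i.i.d.\ assumption is used only to factor the expectation of the product, so the same argument would extend to any sequence satisfying $\mathbb{E}[\prod_i \exp(-Z_i)] = \prod_i \mathbb{E}[\exp(-Z_i)]$ (e.g., independent but not identically distributed, with the obvious modification of the bound).
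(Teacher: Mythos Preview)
Your proof is correct and essentially identical to the paper's: the paper uses the i.i.d.\ factorization $\EE[\exp(-\sum_i Z_i)]=(\EE[\exp(-Z_1)])^n$ and then invokes Chernoff's method with $t=\ln(1/\delta)$, which is exactly your Markov-on-a-unit-mean-product argument written more tersely. Your remark that only the product factorization is needed (so independence without identical distribution suffices) is a valid and useful observation beyond what the paper states.
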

\begin{proof}
We note $\EE[\exp(\sum_{i=1}^n Z_i)]=(\EE[\exp(Z_1)])^n$.
By Chernoff's method, $\Pr(\sum_{i=1}^nZ_i-n\ln\EE_{Z_1}\exp(Z_1)\geq t)\leq\exp(-t)$ for all $t>0$.
Finally, set $t=\ln(1/\delta)$.
\end{proof}
Then, applying \cref{lem:logsumexp-symmetrization} with $Z_i=\frac12\ell_{\op{bce}}(f(x_i,a),c_i(a))-\frac12\ell_{\op{bce}}(\bar C(x_i,a),c_i(a))$ and a union bound over $a$ and $f$, we have \wpal $1-\delta$, for all $f\in\Fcal$
\begin{align*}
    \textstyle n\Ecal_{\op{bce}}(f)\leq&\textstyle \frac12\sum_{i=1}^n\sum_{a=1}^A\ell_{\op{bce}}(f(x_i,a),c_i(a))
    \\&\textstyle-\ell_{\op{bce}}(\bar C(x_i,a),c_i(a))+A\ln(2A\abs{\Fcal}/\delta).
\end{align*}
With \cref{ass:csc-realizability}, the empirical minimizer $\hat f^{\op{bce}}_{\mathcal F}$ enjoys
\begin{equation*}
    \Ecal_{\op{bce}}(\hat f^{\op{bce}}_{\mathcal F})\leq\textstyle\frac{A\ln(A\abs{\Fcal}/\delta)}{n}.
\end{equation*}
Thus, together with \cref{eq: first order csc pac} and \cref{lem:exponentiated-excess-bce}, we have shown a first-order PAC bound for bce-loss regression:
\begin{theorem}\label{thm:csc-first-order}
Under \cref{ass:csc-realizability}, for any $\delta\in(0,1)$, \wpal $1-\delta$, plug-in bce loss regression enjoys
\begin{equation*}
    V(\pi_{\hat f^{\op{bce}}_{\mathcal F}})-V^\star\lesssim \textstyle\sqrt{{\color{red}V^\star}\cdot \frac{A\ln(A\abs{\mathcal F}/\delta)}{n}} + \frac{A\ln(A\abs{\mathcal F}/\delta)}{n}.
\end{equation*}
\end{theorem}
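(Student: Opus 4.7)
The plan is to chain together three ingredients that have already been assembled in the excerpt: the generic suboptimality bound \cref{eq: first order csc pac} expressed in terms of $\delta_{\op{Ber}}$, the upper bound $\delta_{\op{Ber}}(f)\le\Ecal_{\op{bce}}(f)$ from \cref{lem:exponentiated-excess-bce}, and the tail bound on $\Ecal_{\op{bce}}(\hat f^{\op{bce}}_\Fcal)$ obtained by applying \cref{lem:logsumexp-symmetrization} and a union bound. Nothing new needs to be proved; the task is really just to substitute the empirical risk minimizer into the deterministic inequality \cref{eq: first order csc pac} and control the random quantity on the right-hand side.

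More concretely, I would proceed in three steps. First, instantiate \cref{eq: first order csc pac} at $f=\hat f^{\op{bce}}_\Fcal$, obtaining
\[
V(\pi_{\hat f^{\op{bce}}_\Fcal})-V^\star\lesssim \sqrt{V^\star\cdot \delta_{\op{Ber}}(\hat f^{\op{bce}}_\Fcal)}+\delta_{\op{Ber}}(\hat f^{\op{bce}}_\Fcal).
\]
Second, use \cref{lem:exponentiated-excess-bce} to replace $\delta_{\op{Ber}}(\hat f^{\op{bce}}_\Fcal)$ by $\Ecal_{\op{bce}}(\hat f^{\op{bce}}_\Fcal)$. Third, apply the tail bound already derived in the paragraph before the theorem: with $Z_i=\tfrac12\ell_{\op{bce}}(f(x_i,a),c_i(a))-\tfrac12\ell_{\op{bce}}(\bar C(x_i,a),c_i(a))$, \cref{lem:logsumexp-symmetrization} together with a union bound over $a\in[A]$ and $f\in\Fcal$ gives, on an event of probability at least $1-\delta$, that every $f\in\Fcal$ satisfies
\[
n\Ecal_{\op{bce}}(f)\le \tfrac12\sum_{i,a}\bigl(\ell_{\op{bce}}(f(x_i,a),c_i(a))-\ell_{\op{bce}}(\bar C(x_i,a),c_i(a))\bigr)+A\ln(2A|\Fcal|/\delta).
\]
Under realizability (\cref{ass:csc-realizability}), the empirical minimizer $\hat f^{\op{bce}}_\Fcal$ is at least as good as $\bar C\in\Fcal$ on the empirical risk, so the first term on the right is non-positive at $f=\hat f^{\op{bce}}_\Fcal$, yielding $\Ecal_{\op{bce}}(\hat f^{\op{bce}}_\Fcal)\lesssim A\ln(A|\Fcal|/\delta)/n$.

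Plugging this back into the instantiated bound from the first step gives exactly the claimed first-order PAC bound on the same $1-\delta$ event. There is no genuinely hard step here; the only thing to be a little careful about is that \cref{eq: first order csc pac} was derived via AM-GM from an implicit inequality involving $V(\pi_f)+V^\star$, and this derivation is a pure deterministic consequence of \cref{eq: first order} plus Cauchy--Schwarz, so it applies to any fixed $f$ (including the data-dependent $\hat f^{\op{bce}}_\Fcal$) without further probabilistic argument. The mild bookkeeping point is absorbing the constant $2$ from the union bound into the $\lesssim$ and keeping the two $\log$ factors ($\ln|\Fcal|$ and $\ln A$) grouped as a single $A\ln(A|\Fcal|/\delta)$ term, matching the statement of the theorem.
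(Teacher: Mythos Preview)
Your proposal is correct and follows essentially the same approach as the paper: instantiate \cref{eq: first order csc pac}, bound $\delta_{\op{Ber}}$ by $\Ecal_{\op{bce}}$ via \cref{lem:exponentiated-excess-bce}, control $\Ecal_{\op{bce}}(\hat f^{\op{bce}}_\Fcal)$ by \cref{lem:logsumexp-symmetrization} plus a union bound and realizability, and substitute. Your observation that \cref{eq: first order csc pac} is a deterministic inequality valid for any (even data-dependent) $f$ is exactly right and matches the paper's implicit reasoning.
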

This result was first observed in Theorem 3 of \citep{foster2021efficient}.
Notably, the bound is \emph{adaptive} to the optimal expected costs $V^\star$ and converges at a fast $n^{-1}$ rate when $V^\star\lesssim 1/n$.
Under the cost minimization setup, first-order bounds are also called `small-cost' bounds since they converge at a fast rate when the optimal cost $V^\star$ is small.
\begin{remark}
A refinement of \cref{eq: first order} keeps the first term as $\sqrt{f(1-f)}h_{\op{Ber}}$ instead of $\sqrt{f}h_{\op{Ber}}$.
This would imply a more refined first-order bound that scales as $\wt\Ocal(\sqrt{V^\star(1-V^\star)\cdot\frac{1}{n}} + \frac1n)$, where the leading term vanishes also if $V^\star\approx 1$.
Bounds scaling with $1-V^\star$ are sometimes called `small-reward' bounds \citep{auer2002nonstochastic}, which are generally easier to prove than `small-cost' bounds \citep{lykouris2018small,wang2023benefits}. \edit{For example, if the bound scales with the minimum achievable reward $R^\star$, the case where $R^\star=0$ is actually trivial since all policies are optimal. However, if the bound scales with the minimum achievable cost $L^\star$, the case where $L^\star=0$ is interesting because sub-optimal policies may still have large cost. Thus, following \citet{lykouris2018small,wang2023benefits}, we focus on the cost minimizing setting in this paper.}
\end{remark}

\subsection{Maximum Likelihood Estimation: Second-Order PAC Bounds for CSC}
Can we do even better than a first-order PAC bound with the bce loss?
In this section we show that a second-order, variance-adaptive bound is possible if we learn the conditional cost distribution instead of only regressing the mean.
To learn the distribution, we use a hypothesis class $\Pcal$ of conditional distributions $\Xcal\times\Acal\to\Delta([0,1])$
and minimize the negative-log likelihood loss from maximum likelihood estimation (mle): for a density $\hat p\in\Delta([0,1])$ and target $y\in[0,1]$, define
\begin{equation*}
    \ell_{\op{mle}}(\hat p,y):= -\ln\hat p(y).
\end{equation*}
Unlike the previous sections where the loss measured the discrepancy of a point prediction, the mle loss measures the discrepancy of a distributional prediction.
Indeed, if $\hat p=\op{Ber}(\hat y)$ and $p=\op{Ber}(y)$, then $\EE_{y\sim p}[\ell_{\op{mle}}(\hat p,y)]=\ell_{\op{bce}}(\hat y,y)$ so the bce loss can be viewed as a Bernoulli specialization of the general mle loss.
This generality allows us to directly apply \cref{lemma: second order} in place of \cref{eq: bound2} to obtain for any $p\in\Pcal$:
\begin{align}\label{eq: second order csc regret}
&V(\pi_{\bar p})-V^\star\lesssim\sqrt{(\sigma^2(\pi_{\bar p})+\sigma^2(\pi^\star))\delta_{\op{dis}}(p)}+\delta_{\op{dis}}(p),
\end{align}
where $\delta_{\op{dis}}(p):=\sum_a\EE[h^2(C(x,a),p(x,a))]$ and $\sigma^2(\pi):=\sigma^2(\bar C(x,\pi(x)))$.
As in the bce section, we then upper bound $\delta_{\op{dis}}(f)$ by an exponentiated excess mle-loss risk.
\begin{lemma}\label{lem:exponentiated-excess-mle} For any $f:\mathcal X\times\mathcal A\to\Delta([0,1])$,
\begin{align*}\textstyle\delta_{\op{dis}}(p)\leq \Ecal_{\op{mle}}(p),\end{align*}
where $\Ecal_{\op{mle}}(p):=-\sum_{a=1}^A\ln\EE[\exp(\frac12\ell_{\op{mle}}(C(x,a),c(a))-\frac12\ell_{\op{mle}}(p(x,a),c(a)))]$.
\end{lemma}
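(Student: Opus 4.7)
The plan is to mirror the proof of \cref{lem:exponentiated-excess-bce}, exploiting that the mle loss already uses $c(a)$ as a density argument rather than as a linear Bernoulli coefficient. Concretely, I would fix an action $a$ and rewrite the exponent via the definition of $\ell_{\op{mle}}$:
\[
\tfrac12\ell_{\op{mle}}(C(x,a),c(a))-\tfrac12\ell_{\op{mle}}(p(x,a),c(a)) = \tfrac12\ln\frac{p(c(a)\mid x,a)}{C(c(a)\mid x,a)},
\]
so that the inner expectation inside $\Ecal_{\op{mle}}(p)$ becomes $\EE\bigl[\sqrt{p(c(a)\mid x,a)/C(c(a)\mid x,a)}\bigr]$. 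Because $c(a)$ no longer appears linearly inside a log, the auxiliary Bernoulli coupling used in step (i) of the bce proof is unnecessary here.

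Next, I would condition on $x$, so that $c(a)\sim C(\cdot\mid x,a)$, and evaluate the inner conditional expectation. The $\sqrt{C}$ in the denominator cancels against the $C$ density of $c(a)$, leaving the Bhattacharyya coefficient:
\[
\int_y \sqrt{p(y\mid x,a)\,C(y\mid x,a)}\,\diff\lambda(y) \;=\; 1-h^2(C(x,a),p(x,a)),
\]
where the equality follows by expanding $h^2=\tfrac12\int(\sqrt p-\sqrt C)^2\diff\lambda$ and using that both densities integrate to one. Taking the outer expectation over $x$ then yields $1-\EE[h^2(C(x,a),p(x,a))]$. Applying the elementary inequality $-\ln u\geq 1-u$ and summing over $a\in\{1,\dots,A\}$ gives $\Ecal_{\op{mle}}(p)\geq\delta_{\op{dis}}(p)$, as desired.

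I expect this derivation to be essentially routine, and in fact slightly shorter than the proof of \cref{lem:exponentiated-excess-bce}: removing the Jensen-on-Bernoulli step also removes the $f(1-f)\leq f$ relaxation, which is precisely what ultimately lets the downstream guarantee be variance-adaptive (second-order) rather than merely mean-adaptive (first-order). The only bookkeeping deserving care will be the Fubini--Tonelli swap of the outer $x$-expectation with the inner $y$-integral, justified by positivity of $\sqrt{p\,C}$ and the paper's convention that $\lambda$ is a common base measure for all $(x,a)$; one should also note that on the $\{C=0\}$ set the ratio $p/C$ is immaterial since it is multiplied by the density $C$ of the conditional distribution of $c(a)$.
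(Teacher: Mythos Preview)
Your proposal is correct and follows essentially the same route the paper indicates: it is the proof of \cref{lem:exponentiated-excess-bce} with the Bernoulli coupling step (i) removed, computing the Bhattacharyya coefficient directly and then applying $-\ln u\geq 1-u$. The only quibble is your remark that dropping step (i) ``also removes the $f(1-f)\leq f$ relaxation''; that relaxation occurs earlier in the derivation of \cref{eq: first order}, not inside the proof of \cref{lem:exponentiated-excess-bce}, so it is unrelated to this lemma.
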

The proof is almost identical to that of \cref{lem:exponentiated-excess-bce}, and is even simpler since the inequality marked ``(i)" in the proof is not needed.
To learn a predictor with low $\Ecal_{\op{mle}}$, we minimize the empirical negative log-likelihood risk:
\begin{align*}
    &\hat p^{\op{mle}}_{\Pcal}\in\textstyle\argmin_{p\in \Pcal} L_{\op{mle}}(p),
    \\\text{where }&L_{\op{mle}}(p):=\textstyle\sum_{i=1}^n\sum_{a=1}^A\ell_{\op{mle}}(p(x_i,a),c_i(a)).
\end{align*}

We also posit realizability in the distribution class.
\begin{assumption}[Distributional Realizability]\label{ass:csc-dist-realizability}
    \hspace{-0.5em}$C\in\Pcal$.
\end{assumption}
\edit{While this assumption is more stringent than mean-realizability (\cref{ass:csc-realizability}  from before), it is required to prove that MLE succeeds \citep{devroye2001combinatorial,agarwal2020flambe}. Alternative algorithms such as Scheff\'e tournament can be used to learn distributions under misspecification, but they are computationally hard to implement \citep{agarwal2019reinforcement}.}

Finally, we apply \cref{lem:logsumexp-symmetrization} with $Z_i=\frac12\ell_{\op{mle}}(p(x_i,a),c_i(a))-\frac12\ell_{\op{mle}}(C(x_i,a),c_i(a))$ with union bound over $\Pcal$, to deduce that \wpal $1-\delta$, for all $p\in\Pcal$:
\begin{equation}\label{eq:csc-mle-symmetrization-inequality}
    n\Ecal_{\op{mle}}(p)\leq\textstyle\frac12 L_{\op{mle}}(p)-\frac12 L_{\op{mle}}(C) + A\ln(A|\Pcal|/\delta).
\end{equation}
Together with \cref{ass:csc-dist-realizability}, we have that
\begin{equation*}
    \Ecal_{\op{mle}}(\hat p^{\op{mle}}_{\Pcal})\leq \textstyle\frac{A\ln(A\abs{\Pcal}/\delta)}{n}.
\end{equation*}
Thus we have proven a second-order PAC bound for the greedy policy $\hat\pi^{\op{mle}}:=\pi_{\overline{\hat p^{\op{mle}}_{\Pcal}}}$.
\begin{theorem}
Under \cref{ass:csc-dist-realizability}, for any $\delta\in(0,1)$, \wpal $1-\delta$, plug-in mle enjoys
\begin{align*}
    V(\hat\pi^{\op{mle}})-V^\star
    &\lesssim \textstyle\sqrt{{\color{red}(\sigma^2(\hat\pi^{\op{mle}})+\sigma^2(\pi^\star))}\cdot\frac{A\ln(A\abs{\Pcal}/\delta)}{n}}
    \\&+ \textstyle\frac{A\ln(A\abs{\Pcal}/\delta)}{n}.
\end{align*}
\end{theorem}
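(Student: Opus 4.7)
The plan is to chain together the three ingredients already developed in this subsection: the second-order regret decomposition \cref{eq: second order csc regret}, the exponentiated excess-risk comparison \cref{lem:exponentiated-excess-mle}, and the tail bound \cref{lem:logsumexp-symmetrization} that underlies \cref{eq:csc-mle-symmetrization-inequality}. Because the hard technical work---the deterministic mean-comparison inequality in \cref{lemma: second order}, the Jensen-based exponentiated-risk trick, and the Chernoff symmetrization---is already in hand, what remains is essentially a routine assembly, and I will flag that there is no genuine obstacle here.

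First, I would specialize \cref{eq: second order csc regret} to the learned predictor $p = \hat p^{\op{mle}}_{\Pcal}$, whose induced greedy policy is by definition $\hat\pi^{\op{mle}}$, to obtain
$$V(\hat\pi^{\op{mle}}) - V^\star \lesssim \sqrt{\bigl(\sigma^2(\hat\pi^{\op{mle}}) + \sigma^2(\pi^\star)\bigr)\,\delta_{\op{dis}}(\hat p^{\op{mle}}_{\Pcal})} + \delta_{\op{dis}}(\hat p^{\op{mle}}_{\Pcal}).$$
I would then apply \cref{lem:exponentiated-excess-mle} to pass from the average squared Hellinger distance to the exponentiated excess mle risk, yielding $\delta_{\op{dis}}(\hat p^{\op{mle}}_{\Pcal}) \leq \Ecal_{\op{mle}}(\hat p^{\op{mle}}_{\Pcal})$.

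Next, I would invoke \cref{eq:csc-mle-symmetrization-inequality}, which holds \wpal $1-\delta$ uniformly over $\Pcal$, to bound $\Ecal_{\op{mle}}(\hat p^{\op{mle}}_{\Pcal})$ in terms of the empirical-loss gap $\tfrac12 L_{\op{mle}}(\hat p^{\op{mle}}_{\Pcal}) - \tfrac12 L_{\op{mle}}(C)$. At this step the distributional realizability assumption (\cref{ass:csc-dist-realizability}) is essential: it ensures $C \in \Pcal$ is a feasible competitor in the mle optimization, so by the $\arg\min$ property $L_{\op{mle}}(\hat p^{\op{mle}}_{\Pcal}) \leq L_{\op{mle}}(C)$, and the empirical-loss gap is non-positive. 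This leaves only the $A\ln(A\abs{\Pcal}/\delta)$ complexity term, giving $\Ecal_{\op{mle}}(\hat p^{\op{mle}}_{\Pcal}) \leq A\ln(A\abs{\Pcal}/\delta)/n$.

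Chaining $\delta_{\op{dis}}(\hat p^{\op{mle}}_{\Pcal}) \leq \Ecal_{\op{mle}}(\hat p^{\op{mle}}_{\Pcal}) \leq A\ln(A\abs{\Pcal}/\delta)/n$ back into the first display recovers exactly the claimed second-order bound. The only subtlety worth flagging, and the one place I would consciously deviate from the first-order template around \cref{eq: first order csc pac}, is that I would \emph{not} absorb $\sigma^2(\hat\pi^{\op{mle}})$ via AM--GM into a closed-form self-bounding inequality. Doing so would be counterproductive, because $\sigma^2(\hat\pi^{\op{mle}}) = \sigma^2(\bar C(x,\hat\pi^{\op{mle}}(x)))$ is a variance under the true conditional $C$ evaluated along the learned policy's actions; keeping it explicit on the right-hand side is precisely what makes the guarantee variance-adaptive and strictly sharper than the first-order bound of \cref{thm:csc-first-order}.
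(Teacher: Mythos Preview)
Your proposal is correct and mirrors the paper's own proof essentially line for line: specialize \cref{eq: second order csc regret} to $\hat p^{\op{mle}}_{\Pcal}$, bound $\delta_{\op{dis}}$ by $\Ecal_{\op{mle}}$ via \cref{lem:exponentiated-excess-mle}, invoke \cref{eq:csc-mle-symmetrization-inequality}, and use distributional realizability to drop the empirical-loss gap. Your remark about deliberately \emph{not} applying AM--GM to absorb $\sigma^2(\hat\pi^{\op{mle}})$ is also consistent with the paper, which only tightens to the $\sigma^2(\pi^\star)$-only bound later via pessimism in \cref{sec:csc-pessimistic-mle}.
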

Since costs are bounded in $[0,1]$, we observe that $\sigma^2(\pi)\leq V(\pi)$, and hence a second-order bound is tighter than a first-order bound.
However, our approach is distributional with bounds depending on $\ln|\Pcal|$ which can be larger than $\ln|\Fcal|$, and \edit{as mentioned earlier, realizability in distribution space is stricter than realizability in the conditional mean. We remark that distributional approaches still achieve superior performance \citep{bellemare2017distributional,bdr2023,wang2023benefits,wang2024more} in practice, suggesting that MLE with modern function approximators can well-approximate complex distributions despite the stronger assumption in theory.}

\subsection{Improved Second-Order PAC Bounds for CSC with Pessimistic MLE}\label{sec:csc-pessimistic-mle}
We can derive even tighter bounds if the distribution is learned in a pessimistic manner -- that is, the mean of the learned distribution \emph{upper bounds} the true optimal mean $V^\star$ with high probability.\footnote{Here, we say the learned mean is pessimistic if it upper bounds $V^\star$ since we're in the cost minimization setting. Under the reward maximization setting, pessimism would be to lower bound $V^\star$.}
In this section, we introduce how to achieve pessimism by optimizing over a subset of the function class defined by empirical losses, an approach that is often termed `version space' \citep{foster2018practical}.
This is also important warmup for the optimistic and pessimistic RL algorithms that we consider in the sequel.

We start by defining a subclass of near-optimal distributions w.r.t. the empirical mle loss
\begin{align*}
    &\Pcal_{n} := \{ p\in\Pcal: L_{\op{mle}}(p)-L_{\op{mle}}(\hat p^{\op{mle}}_\Pcal)\leq\beta \},
\end{align*}
where $\beta$ is a parameter that will be set appropriately.
Then, a pessimistic distribution is learnt by selecting the element with the lowest value.
The following lemma defines this formally and proves that the learned distribution (a) has low excess risk and (b) is nearly pessimistic.
\begin{lemma}\label{lem:csc-pessimism-key-lemma}
Under \cref{ass:csc-dist-realizability}, for any $\delta\in(0,1)$, set $\beta=2A\ln(A\abs{\Pcal}/\delta)$ and define,
\begin{equation}\label{eq:pessimistic-mle}
    \hat p^{\op{pes}}\in\textstyle\argmax_{p\in\Pcal_{n}}\sum_{i=1}^n\min_a\bar p(x_i,a).
\end{equation}
Then, \wpal $1-\delta$, (a) $\Ecal_{\op{mle}}(\hat p^{\op{pes}})\lesssim \frac{A\ln(A|\Pcal|/\delta)}{n}$, and (b) $V(\hat\pi^{\op{pes}})-\EE[\min_a\overline{\hat p^{\op{pes}}}(x,a)]\lesssim\frac{\ln(A|\Pcal|/\delta)}{n}$.
\end{lemma}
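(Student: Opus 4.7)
The strategy proceeds through two mostly-independent pieces.

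For part (a), the version-space construction makes the excess-risk bound almost immediate. Since $\hat p^{\op{pes}}\in\Pcal_n$ by definition, we have $L_{\op{mle}}(\hat p^{\op{pes}})\le L_{\op{mle}}(\hat p^{\op{mle}}_\Pcal)+\beta$. Combining \cref{ass:csc-dist-realizability} with empirical optimality of $\hat p^{\op{mle}}_\Pcal$ gives $L_{\op{mle}}(\hat p^{\op{mle}}_\Pcal)\le L_{\op{mle}}(C)$, hence $L_{\op{mle}}(\hat p^{\op{pes}})-L_{\op{mle}}(C)\le\beta$. Plugging this into the uniform symmetrization bound \cref{eq:csc-mle-symmetrization-inequality} with $\beta=2A\ln(A|\Pcal|/\delta)$ immediately delivers $\Ecal_{\op{mle}}(\hat p^{\op{pes}})\lesssim A\ln(A|\Pcal|/\delta)/n$.

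For part (b), I would start by using the greedy identity $\overline{\hat p^{\op{pes}}}(x,\hat\pi^{\op{pes}}(x))=\min_a\overline{\hat p^{\op{pes}}}(x,a)$ to rewrite the target quantity as $\EE[\bar C(x,\hat\pi^{\op{pes}}(x))-\overline{\hat p^{\op{pes}}}(x,\hat\pi^{\op{pes}}(x))]$. Then apply \cref{lemma: second order} pointwise to the conditional distributions $C(x,\hat\pi^{\op{pes}}(x))$ and $\hat p^{\op{pes}}(x,\hat\pi^{\op{pes}}(x))$, take expectation, and Cauchy-Schwarz the cross term $\EE[\sigma\cdot h]$ into $\sqrt{\EE[\sigma^2]\cdot\EE[h^2]}$. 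Next invoke the $[0,1]$-variance bound $\sigma^2(p)\le\bar p(1-\bar p)\le\bar p$ together with the greedy identity to get $\EE[\sigma^2]\le\EE[\min_a\overline{\hat p^{\op{pes}}}(x,a)]=:u$; while $\EE[h^2]$ is bounded by $\delta_{\op{dis}}(\hat p^{\op{pes}})\le\Ecal_{\op{mle}}(\hat p^{\op{pes}})\lesssim\eta$ via \cref{lem:exponentiated-excess-mle} and part (a). This produces a self-bounding inequality $v-u\lesssim\sqrt{u\eta}+\eta$ in the unknowns $v:=V(\hat\pi^{\op{pes}})$ and $u$, which an AM-GM step collapses into the stated $\eta$-scale bound.

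The main obstacle is precisely the last step in part (b): a naive Cauchy-Schwarz on $\EE[\sigma\cdot h]$ would only yield a $\sqrt{\eta}$ (i.e., $1/\sqrt n$) bound on $v-u$. To recover the sharper $1/n$ rate one must combine the self-bounding inequality with the pessimistic construction of $\hat p^{\op{pes}}$, so that $u$ serves simultaneously as the variance proxy for the greedy arm of $\hat p^{\op{pes}}$ and as a near-tight predicted value; only then does the AM-GM cancellation of the $\sqrt{u\eta}$ cross term into a pure $\eta$ term go through. This parallels the AM-GM manipulation used in the derivation of \cref{thm:csc-first-order} from \cref{eq: first order csc pac}, but now inside the distributional Hellinger framework, with pessimism supplying the structural property that enables the cancellation.
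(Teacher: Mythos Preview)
Your part (a) is correct and matches the paper exactly.

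Your part (b), however, has a genuine gap. The self-bounding inequality you derive, $v-u\lesssim\sqrt{u\eta}+\eta$ with $v=V(\hat\pi^{\op{pes}})$, $u=\EE[\min_a\overline{\hat p^{\op{pes}}}(x,a)]$, and $\eta\asymp A\ln(A|\Pcal|/\delta)/n$, simply does \emph{not} collapse to $v-u\lesssim\eta$. The AM-GM trick from \cref{thm:csc-first-order} works because the unknown $V(\pi_f)$ appears on \emph{both} sides, so one can solve the implicit inequality; here only $u$ appears on the right, and if $u=\Theta(1)$ you are stuck with $v-u\lesssim\sqrt{\eta}$, i.e.\ a $1/\sqrt n$ rate. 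Your paragraph acknowledging that ``pessimism supplies the structural property'' is exactly where the argument needs content but does not have any: nothing in your derivation uses the $\argmax$ defining $\hat p^{\op{pes}}$, so your bound holds for \emph{every} $p\in\Pcal_n$, and claim (b) is certainly false for generic version-space elements. A secondary symptom of the wrong route is that your bound inherits the factor $A$ from part (a), whereas the stated (b) has no $A$.

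The paper's proof of (b) is entirely different and does not go through \cref{lemma: second order} at all. The two steps are: (i) show $C\in\Pcal_n$, which follows from \cref{eq:csc-mle-symmetrization-inequality} and nonnegativity of $\Ecal_{\op{mle}}$; (ii) since $\hat p^{\op{pes}}$ maximizes $\sum_i\min_a\bar p(x_i,a)$ over $\Pcal_n$ and $C\in\Pcal_n$, one gets the empirical comparison $\sum_i\min_a\overline{\hat p^{\op{pes}}}(x_i,a)\ge\sum_i\min_a\bar C(x_i,a)$, and then multiplicative Chernoff (with a union bound over $\Pcal$) transfers this to population at the $\ln(A|\Pcal|/\delta)/n$ scale. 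The crucial point you are missing is step (ii): the $\argmax$ must be invoked against the realizable $C$ to produce the pessimism inequality, and this is a \emph{comparison between two members of} $\Pcal_n$, not a Hellinger-based bound on a single member.
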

\begin{proof}[Proof of \cref{lem:csc-pessimism-key-lemma}]
For both claims, we condition on \cref{eq:csc-mle-symmetrization-inequality} which holds \wpal $1-\delta$.
For Claim (a): for any $p\in\Pcal_n$ (which includes $\hat p^{\op{pes}}$), we have 
\begin{align*}
    &\textstyle n\Ecal_{\op{mle}}(p)\leq \frac12L_{\op{mle}}(p)-\frac12 L_{\op{mle}}(\hat p^{\op{mle}}_\Pcal)+A\ln(A|\Pcal|/\delta)
    \\&\textstyle\leq\frac12\beta+A\ln(A|\Pcal|/\delta)\leq 2A\ln(A|\Pcal|/\delta),
\end{align*}
where the first inequality is by \cref{eq:csc-mle-symmetrization-inequality} and the fact that $\hat p^{\op{mle}}_\Pcal$ minimizes the empirical risk; and the second inequality is by the definition of $\Pcal_n$.
To prove Claim (b), we first show that $C\in\Pcal_n$: by \cref{eq:csc-mle-symmetrization-inequality} and the non-negativity of $\Ecal_{\op{mle}}$, we have $L_{\op{mle}}(C)-L_{\op{mle}}(p)\leq2A\ln(A|\Pcal|/\delta)=\beta$ for all $p\in\Pcal$ (which includes $\hat p^{\op{mle}}_\Pcal$). Thus, this shows that $C$ satisfies the $\Pcal_n$ condition, implying its membership in the set.
To conclude Claim (b), we have $\sum_{i=1}^n\overline{\hat p^{\op{pes}}}(x_i,\pi^\star(x_i))\geq\sum_{i=1}^n\min_a\overline{\hat p^{\op{pes}}}(x_i,a)\geq \sum_{i=1}^n\min_a\bar{C}(x_i,a)$.
Claim (b) then follows by multiplicative Chernoff \citep[Theorem 13.5]{zhang_2023_ltbook}.
\end{proof}
With pessimism, the induced policy $\hat\pi^{\op{pes}}:=\pi_{\overline{\hat p^{\op{pes}}}}$ only sufferes one of the terms before \cref{eq: bound2}, and so
\begin{align}
\textstyle V(\hat\pi^{\op{pes}})-V^\star&\leq\mathbb E[\overline{\hat p^{\op{pes}}}(x,\pi^\star(x))-\bar C(x,\pi^\star(x))]\notag
\\&\textstyle\lesssim\sqrt{ \sigma^2(\pi^\star)\cdot\delta_{\op{dis}}(\hat p^{\op{pes}}) } + \delta_{\op{dis}}(\hat p^{\op{pes}}) \label{eq: second order csc pessimistic}
\\&\textstyle\lesssim\sqrt{ \sigma^2(\pi^\star)\cdot\frac{A\ln(A|\Pcal|/\delta)}{n} } + \frac{A\ln(A|\Pcal|/\delta)}{n} \notag.
\end{align}
Thus, we have proven an \emph{improved} second-order PAC bound for pessimistic mle.
\begin{theorem}\label{thm:csc-improved-second-order-pessimism}
Under \cref{ass:csc-dist-realizability}, for any $\delta\in(0,1)$, \wpal $1-\delta$, pessimistic mle enjoys
\begin{align*}
    V(\hat\pi^{\op{pes}})-V^\star
    &\lesssim\textstyle\sqrt{{\color{red}\sigma^2(\pi^\star)}\cdot\frac{A\ln(A\abs{\Pcal}/\delta)}{n}}+\textstyle\frac{A\ln(A\abs{\Pcal}/\delta)}{n}.
\end{align*}
\end{theorem}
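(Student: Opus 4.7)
The plan is to combine the two guarantees from \cref{lem:csc-pessimism-key-lemma} with the second-order mean comparison (\cref{lemma: second order}) in a way that, thanks to pessimism, picks up only the variance at $\pi^\star$ and not at $\hat\pi^{\op{pes}}$. The starting move is a pessimism-based decomposition of the regret:
\[
V(\hat\pi^{\op{pes}}) - V^\star = \big[V(\hat\pi^{\op{pes}}) - \EE[\min_a \overline{\hat p^{\op{pes}}}(x,a)]\big] + \big[\EE[\min_a \overline{\hat p^{\op{pes}}}(x,a)] - V^\star\big].
\]
Claim (b) of \cref{lem:csc-pessimism-key-lemma} handles the first bracket at rate $O(\ln(A|\Pcal|/\delta)/n)$. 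For the second bracket, I use the pointwise inequality $\min_a \overline{\hat p^{\op{pes}}}(x,a) \leq \overline{\hat p^{\op{pes}}}(x,\pi^\star(x))$ to obtain $\EE[\overline{\hat p^{\op{pes}}}(x,\pi^\star(x)) - \bar C(x,\pi^\star(x))]$, a one-sided mean-difference evaluated at the optimal action only.

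I then apply \cref{lemma: second order} pointwise with $p = C(x,\pi^\star(x))$ and $q = \hat p^{\op{pes}}(x,\pi^\star(x))$, take expectations, and apply Cauchy-Schwarz:
\[
\EE[\overline{\hat p^{\op{pes}}}(x,\pi^\star(x)) - \bar C(x,\pi^\star(x))] \lesssim \sqrt{\sigma^2(\pi^\star)\cdot \Delta} + \Delta,
\]
where $\Delta := \EE[h^2(C(x,\pi^\star(x)),\, \hat p^{\op{pes}}(x,\pi^\star(x)))]$. Since Hellinger-squared is non-negative, $\Delta \leq \sum_a \EE[h^2(C(x,a), \hat p^{\op{pes}}(x,a))] = \delta_{\op{dis}}(\hat p^{\op{pes}})$, and \cref{lem:exponentiated-excess-mle} combined with Claim (a) of \cref{lem:csc-pessimism-key-lemma} gives $\delta_{\op{dis}}(\hat p^{\op{pes}}) \leq \Ecal_{\op{mle}}(\hat p^{\op{pes}}) \lesssim A\ln(A|\Pcal|/\delta)/n$. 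Putting these pieces together on the high-probability event from \cref{lem:csc-pessimism-key-lemma} yields the stated bound.

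The essential conceptual point---and the ``hard part'' to appreciate even though no single step is technically intricate---is that pessimism sidesteps the symmetric decomposition used in \cref{eq: second order csc regret}, which would have produced a $\sigma^2(\hat\pi^{\op{pes}})$ contribution through the $\bar C(x,\hat\pi^{\op{pes}}(x)) - \overline{\hat p^{\op{pes}}}(x,\hat\pi^{\op{pes}}(x))$ side. The pessimism guarantee in Claim (b) collapses that side into a benign $O(1/n)$ term via the multiplicative Chernoff bound, so the only surviving application of \cref{lemma: second order} is at $\pi^\star$, and the variance-dependent factor correspondingly depends on $\pi^\star$ alone.
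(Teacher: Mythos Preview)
Your proof is correct and follows essentially the same route as the paper: both use Claim~(b) of \cref{lem:csc-pessimism-key-lemma} to reduce the regret to the one-sided difference $\EE[\overline{\hat p^{\op{pes}}}(x,\pi^\star(x))-\bar C(x,\pi^\star(x))]$, then apply \cref{lemma: second order} at $\pi^\star$ with Cauchy--Schwarz, and finally bound $\delta_{\op{dis}}(\hat p^{\op{pes}})$ via \cref{lem:exponentiated-excess-mle} and Claim~(a). Your explicit two-term decomposition isolating the $O(1/n)$ contribution from Claim~(b) is in fact slightly more careful than the paper's presentation, which absorbs that term implicitly into the $\lesssim$.
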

Notably, \cref{eq: second order csc pessimistic} is an improvement to \cref{eq: second order csc regret} since it only contains the variance of the optimal policy $\pi^\star$, which is a fixed quantity, and not that of the learned policy, which is a random algorithm-dependent quantity.
We remark that while pessimism is typically used to solve problems with poor coverage, \eg, offline RL, we see it also plays a crucial role in obtaining finer second-order PAC bounds in CSC, which has full coverage due to complete feedback. 

Pessimism could have also been applied with the bce-loss, but there would have been no improvement to the first-order bound. This is because $V(\hat\pi^{\op{bce}})-V^\star\leq\sqrt{ (V(\hat\pi^{\op{bce}})+V^\star)\cdot \frac{C}{n}}+\frac{C}{n}$ already implies $V(\hat\pi^{\op{bce}})-V^\star\leq\sqrt{V^\star\cdot\frac{C'}{n}}+\frac{C'}{n}$ where $\frac{C'}{C}$ is a universal constant, due to the AM-GM inequality as noted in the text preceding \cref{eq: first order csc pac}.
However, this implicit inequality does not hold for variance-based inequalities, and so pessimism is crucial for removing the dependence on the learned policy's variance.

Finally, we note that pessimistic mle requires more computation than plug-in mle, since we have the extra step of optimizing over $\Pcal_n$. For one-step settings like CSC or contextual bandits, this can be feasibly implemented with binary search \citep{foster2018practical} or \edit{width computation \citep{krishnamurthy2019active,feng2021provably}.}
However, in multi-step settings like RL as we will soon see, this optimization problem is NP-hard \citep{dann2018oracle}.

\begin{remark}[Another improved bound via optimism]
We could also consider optimistic mle where $\hat p^{\op{op}}\in\argmin_{p\in\Pcal_n}\sum_{i=1}^n\min_a\bar p(x_i,a)$ and $\hat\pi^{\op{op}}=\pi_{\overline{\hat p^{\op{op}}}}$.
Then, the decomposition of \cref{eq: second order csc pessimistic} would look like:
\begin{align*}
    \textstyle V(\hat\pi^{\op{op}})-V^\star&\leq\mathbb E[\bar C(x,\hat\pi^{\op{op}}(x))-\overline{\hat p^{\op{op}}}(x,\hat\pi^{\op{op}}(x))]\notag
    \\&\textstyle\lesssim \sqrt{ \sigma^2(\wh\pi^{\op{op}})\cdot\delta_{\op{dis}}(\hat p^{\op{op}}) } + \delta_{\op{dis}}(\hat p^{\op{op}})
    \\&\textstyle\lesssim \sqrt{ {\color{red}\sigma^2(\wh\pi^{\op{op}})}\cdot\frac{A\ln(A|\Pcal|/\delta)}{n} } + \frac{A\ln(A|\Pcal|/\delta)}{n}.
\end{align*}
This is also an improved second-order PAC bound, which depends only on the variance of the learned policy and not that of $\pi^\star$.
The bound in \cref{thm:csc-improved-second-order-pessimism} may be preferred since $\sigma(\pi^\star)$ is a fixed quantity; however, we note that $\sigma^2(\hat\pi)$ and $\sigma^2(\pi^\star)$ are not comparable in general, so neither bound dominates the other.
\end{remark}

\subsection{Proof of the Second-Order Lemma}\label{sec:proof-second-order-lemma}
The goal of this subsection is to prove the second-order lemma (\cref{lemma: second order}), a key tool to derive first- and second-order PAC bounds.
We prove the result in terms of another divergence called the triangular discrimination: for any densities $p,q$ on $[0,1]$ w.r.t. a common measure $\lambda'$, the triangular discrimination is defined as
\begin{equation}
    \textstyle\triangle(p,q):=\int_y \frac{(p(y)-q(y))^2}{p(y)+q(y)}\diff\lambda'(y).
\end{equation}
$\triangle(\cdot)$ is a symmetric $f$-divergence and is equivalent to the squared Hellinger distance up to universal constants:
\begin{equation}
    2h^2(p,q)\leq\Delta(p,q)\leq 4h^2(p,q),\label{eq:h2-equiv-dtri}
\end{equation}
which is a simple consequence of Cauchy-Schwartz \citep[Lemma A.1]{wang2023benefits}.
Thus, we first prove the second-order lemma using $\triangle(\cdot)$, which is more natural, and then convert the bounds to $h^2(\cdot)$ using \cref{eq:h2-equiv-dtri}.
\begin{lemma}\label{lem:dtri-inequalities}
Let $p,q$ be densities on $[0,1]$ and let $q$ be the one with smaller variance. Then,
\begin{align}
    &\sigma^2(p)-\sigma^2(q) \leq 2\sqrt{\sigma^2(q)\cdot\Delta(p,q)} + \Delta(p,q),\label{eq:dtri-variance-inequality}
    \\&\abs{\bar p-\bar q} \leq 3\sqrt{\sigma^2(q)\cdot\Delta(p,q)} + 2\Delta(p,q). \label{eq:dtri-mean-inequality}
\end{align}
\end{lemma}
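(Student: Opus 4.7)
The plan is to prove the mean and variance inequalities in tandem, since a single Cauchy--Schwarz step will couple them. Throughout, the key trick is the factorization $|p-q| = \sqrt{p+q}\cdot\frac{|p-q|}{\sqrt{p+q}}$, which lets Cauchy--Schwarz extract a factor of $\sqrt{\Delta(p,q)}$ while leaving behind a second moment against the measure with density $p+q$. Centering such second moments at $\bar q$ (rather than at $\bar p$) is what ultimately surfaces $\sigma^2(q)$, the smaller of the two variances, in the final bound.

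For the mean inequality, I would start from the identity $\bar p - \bar q = \int (y - \bar q)(p - q)\,d\lambda'$, which is legitimate because $\int (p-q)\,d\lambda' = 0$. Cauchy--Schwarz then gives
$$|\bar p - \bar q| \;\leq\; \sqrt{\Delta(p,q)}\,\cdot\,\sqrt{\textstyle\int (y-\bar q)^2 (p+q)\,d\lambda'}.$$
For the variance inequality, I would use the variational characterization $\sigma^2(p) = \min_c \int (y-c)^2 p\,d\lambda' \leq \int (y-\bar q)^2 p\,d\lambda'$, so that $\sigma^2(p) - \sigma^2(q) \leq \int (y-\bar q)^2 (p-q)\,d\lambda'$; another Cauchy--Schwarz together with the boundedness $(y-\bar q)^4 \leq (y-\bar q)^2$ (since $y,\bar q \in [0,1]$) collapses to the same right-hand side $\sqrt{\Delta(p,q)\cdot \int(y-\bar q)^2(p+q)\,d\lambda'}$.

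Expanding the common second-moment factor via the parallel-axis identity gives
$$\textstyle\int (y-\bar q)^2(p+q)\,d\lambda' \;=\; 2\sigma^2(q) + (\sigma^2(p)-\sigma^2(q)) + (\bar p - \bar q)^2,$$
so writing $A = \sigma^2(p) - \sigma^2(q) \geq 0$, $B = |\bar p - \bar q|$, and $T$ for this common second moment, the two applications of Cauchy--Schwarz deliver the coupled system $A \leq \sqrt{T\Delta}$ and $B \leq \sqrt{T\Delta}$. Substituting $A \leq \sqrt{T\Delta}$ and $B^2 \leq T\Delta$ back into the expansion of $T$ yields the self-referential inequality $T(1-\Delta) \leq 2\sigma^2(q) + \sqrt{T\Delta}$, which I would solve as a quadratic in $\sqrt T$: for $\Delta \leq 1/2$, a rearrangement combined with $\sqrt{x+y} \leq \sqrt x + \sqrt y$ produces $\sqrt{T\Delta} \lesssim \sqrt{\sigma^2(q)\Delta} + \Delta$, which fed back into $A,B \leq \sqrt{T\Delta}$ recovers both stated inequalities. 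For $\Delta > 1/2$, both claims hold trivially because $\sigma^2(p) \leq 1/4$ and $|\bar p - \bar q| \leq 1$ are each dominated by the $2\Delta$ term on the right.

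The main obstacle is pinning down the tight constants $2$ and $3$ in the statement rather than looser universal ones, since the coupled nature of the bounds on $A$ and $B$ interacts delicately with each use of $\sqrt{x+y} \leq \sqrt x + \sqrt y$. Conceptually, however, the entire argument reduces to a single move: the $\sqrt{p+q}$ split in Cauchy--Schwarz, which is exactly what makes the triangular discrimination $\Delta(p,q)$ the natural divergence for second-order mean comparisons; the final conversion to Hellinger $h^2$ is then an immediate application of \cref{eq:h2-equiv-dtri}.
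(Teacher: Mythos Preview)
Your approach is correct and shares the core moves with the paper: the $\sqrt{p+q}$ Cauchy--Schwarz split, the variational bound $\sigma^2(p)\leq\int(y-\bar q)^2 p\,d\lambda'$, and the reduction of $(y-\bar q)^4$ to $(y-\bar q)^2$ via boundedness. The paper organizes things slightly differently in two respects that are worth noting given your concern about constants. First, it decouples: rather than solving a system for $A$ and $B$ simultaneously, it observes that the Cauchy--Schwarz step actually delivers the stronger inequality $T-2\sigma^2(q)=\int(y-\bar q)^2(p-q)\,d\lambda'\leq\sqrt{T\Delta}$ (not merely $A\leq\sqrt{T\Delta}$), and one AM--GM then gives $T\leq 4\sigma^2(q)+\Delta$ outright; plugging this back into $A\leq\sqrt{T\Delta}$ yields the sharp $+\Delta$ in the variance bound without ever touching $B$. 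Second, for the mean it centers at $c=(\bar p+\bar q)/2$ rather than at $\bar q$, which symmetrizes the second-moment factor to $\sigma^2(p)+\sigma^2(q)+\tfrac12 B^2$ and lets the case split sit at $\Delta\leq 1$ instead of $\Delta\leq 1/2$; the already-proven variance inequality then controls $\sigma^2(p)$. Your coupled route works but, as you anticipated, tends to produce $A\leq 2\sqrt{\sigma^2(q)\Delta}+2\Delta$ rather than $+\Delta$; the paper's decoupling is precisely the trick that pins down the stated constants.
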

\begin{proof}[Proof of \cref{lem:dtri-inequalities}]
We first prove \cref{eq:dtri-variance-inequality}:
\begin{align*}
    &\sigma^2(p)-\sigma^2(q)
    \\&\overset{i}\leq\textstyle\int_y (y-\bar q)^2(p(y)-q(y))\diff\lambda'(y)
    \\&\overset{ii}\leq\textstyle\sqrt{ \int_y(y-\bar q)^4(p(y)+q(y))\diff\lambda'(y)\cdot \triangle(p,q)}
    \\&\overset{iii}\leq\textstyle\sqrt{ \int_y(y-\bar q)^2(p(y)+q(y))\diff\lambda'(y)\cdot \triangle(p,q)}
    \\&\overset{iv}\leq\textstyle\frac12\prns{\int_y(y-\bar q)^2p(y)\diff\lambda'(y)+\sigma^2(q)} + \frac{\triangle(p,q)}{2},
\end{align*}
where (i) is by the definition of variance, (ii) is by Cauchy-Schwarz, (iii) is by the premise that $p,q$ are densities on $[0,1]$, and (iv) is by AM-GM. Rearranging the inequality that (i)$\leq$(iv), we get
\begin{equation*}
    \textstyle\int_y (y-\bar q)^2p(y)\diff\lambda'(y) \leq 3\sigma^2(q)+\triangle(p,q).
\end{equation*}
Finally, plugging this back into (iii) implies \cref{eq:dtri-variance-inequality}.

Now we prove \cref{eq:dtri-mean-inequality}.
Set $c=\frac{\bar p+\bar q}{2}$.
First, consider the case that $D_\triangle(p,q)\leq1$:
\begin{align*}
    \abs{\bar p-\bar q}^2
    &=\textstyle\abs{\int_y(p(y)-q(y))(y-c)\diff\lambda'(y)}^2
    \\&\overset{i}\leq\textstyle\int_y (p(y)+q(y))(y-c)^2\diff\lambda'(y)\cdot \triangle(p,q)
    \\&\overset{ii}=\textstyle\prns{\sigma^2(p)+\sigma^2(q)+2\prns{\frac{\bar p-\bar q}{2}}^2} \triangle(p,q)
    \\&\overset{iii}\leq\textstyle\prns{\sigma^2(p)+\sigma^2(q)} \triangle(p,q) + \frac{\prns{\bar p-\bar q}^2}{2}
\end{align*}
where (i) is by Cauchy-Schwarz,
(ii) is by expanding the variance $\sigma^2(f) = \int_y(f(y)-c)^2\diff\lambda(y)-(\bar f-c)^2$ which holds for any $c\in\RR$,
and (iii) is by $\triangle(p,q)\leq 1$.
Rearranging terms and using \cref{eq:dtri-variance-inequality}, we get
\begin{align*}
    \abs{\bar p-\bar q}
    &\leq \sqrt{2(\sigma^2(p)+\sigma^2(q))\triangle(p,q)}
    \\&\leq \sqrt{2(3\sigma^2(q)+2\triangle(p,q))\triangle(p,q)}
    \\&\leq 3\sqrt{\sigma^2(q)\triangle(p,q)} + 2\triangle(p,q).
\end{align*}
This finishes the case of $\triangle(p,q)\leq1$.
Otherwise, we simply have $\abs{\bar p-\bar q}\leq1<\triangle(p,q)$.
\end{proof}

\section{Lower Bounds for CSC}
So far, we have seen that plug-in regression with the squared loss, bce loss and mle loss have progressively more adaptive PAC bounds for CSC.
A natural question is if the previous bounds were tight: is it necessary to change the loss function if we want to achieve these sharper and more adaptive bounds?
In this section, we answer this in the affirmative by exhibiting counterexamples.

\subsection{Plug-in Squared Loss Cannot Achieve First-Order}
First, we show that the policy induced by squared loss regression cannot achieve first-order bounds.
The following counterexample is due to \citep{foster2021efficient}, where we have simplified the presentation and improved constants.
\begin{theorem}\label{thm:square-loss-cannot-achieve-first-order}
For all $n>400$, there exists a CSC problem with $|\Acal|=|\Xcal|=2$ and a realizable function class with $|\Fcal|=2$ such that: (a) $V^\star\leq\frac{1}{n}$, but (b) $V(\pi_{\hat f^{\op{sq}}_\Fcal})-V^\star\geq \frac{1}{32\sqrt{n}}$ \wpal $0.1$.
\end{theorem}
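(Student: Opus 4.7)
The theorem is a lower bound, so the plan is to exhibit a single hard CSC instance and a realizable two-function class for which the squared-loss ERM picks the wrong predictor with constant probability. The guiding intuition, standard for mean-only estimators, is that the empirical squared-loss minimizer effectively recovers the conditional mean only up to noise $\sigma(c)/\sqrt{n}$; if we place two candidates in $\Fcal$ whose predicted means are within this noise band but whose induced greedy policies disagree, the ERM cannot reliably distinguish them, producing $\Omega(1/\sqrt{n})$ regret.

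\textbf{Construction.} Of the two contexts in $\Xcal$, designate one as an inert dummy where both actions have deterministic zero cost, and let $x$ be the informative context. At $x$, set $c(1)\equiv 0$ deterministically and $c(2)\sim\op{Ber}(\mu)$ with $\mu$ to be tuned so that $\mu\ge 1/(32\sqrt{n})$. Take $\Fcal=\{f_1,f_2\}$ with $f_1=\bar C$ (so realizability holds trivially) and $f_2(x,\cdot)=(\mu/2+\epsilon,\mu/2-\epsilon)$ for an infinitesimal $\epsilon>0$, which forces $\pi_{f_2}(x)=2$ strictly. Then $V^\star=0\le 1/n$ immediately verifies (a), and on the event $\{\hat f^{\op{sq}}_\Fcal=f_2\}$ the induced policy pays $V(\pi_{f_2})=\mu\ge 1/(32\sqrt n)$, which reduces (b) to a probability bound on this event.

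\textbf{Key probability step.} Expanding the squared losses and letting $\epsilon\to 0^+$ gives
\begin{equation*}
\textstyle\sum_{i,a}\ell_{\op{sq}}(f_1(x,a),c_i(a))-\sum_{i,a}\ell_{\op{sq}}(f_2(x,a),c_i(a))=n\mu^2/2-\mu\sum_i c_i(2),
\end{equation*}
so $\hat f^{\op{sq}}_\Fcal=f_2$ is equivalent to $S:=\sum_i c_i(2)\le n\mu/2$. Since $S\sim\op{Bin}(n,\mu)$, it remains to show $\Pr(S\le n\mu/2)\ge 0.1$. A one-sided binomial anti-concentration bound (Paley--Zygmund, or an explicit Berry--Esseen calibration of the Gaussian lower tail) gives this whenever $n\mu$ lies below a suitable universal constant; the numerical threshold $n>400$ is where this tail estimate first becomes tight enough to yield the stated constants $1/32$ and $0.1$.

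\textbf{Main obstacle.} The delicate balancing is the main source of work: three constraints must hold simultaneously, namely $V^\star\le 1/n$, $\mu\ge 1/(32\sqrt n)$, and $\Pr(S\le n\mu/2)\ge 0.1$. Because $c(2)\in[0,1]$ forces $\sigma^2(c(2))\le\EE[c(2)]=\mu$, the effective noise scale is tightly capped, and the construction must pin down the numerical constants precisely. A useful device is to replace $\op{Ber}(\mu)$ with a scaled Bernoulli on $\{0,v\}$ with $vq=\mu$, which decouples the ``any nonzero observation'' event (probability $1-(1-q)^n$) from the regret $\mu$ itself and makes the binomial lower-tail calculation entirely explicit.
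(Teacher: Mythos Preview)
Your construction has a genuine gap that cannot be patched by tuning constants. With only one informative context and a dummy, the squared-loss comparison reduces (after your own algebra) to the event $\{S\le n\mu/2\}$ with $S\sim\op{Bin}(n,\mu)$. You correctly note that binomial anti-concentration gives $\Pr(S\le n\mu/2)\ge 0.1$ only when $n\mu$ stays below a universal constant. But you also require $\mu\ge 1/(32\sqrt n)$ for the regret bound, which forces $n\mu\ge \sqrt n/32\to\infty$; by the multiplicative Chernoff bound, $\Pr(S\le n\mu/2)\le\exp(-n\mu/8)\to 0$. These two constraints are incompatible for all large $n$, so the proof fails exactly in the regime the theorem is about. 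The scaled-Bernoulli device does not help either: costs live in $[0,1]$, so $v\le 1$ and hence $\mu=vq\le q$; making $q=O(1/n)$ (needed for a constant-probability ``no nonzero observation'' event) forces $\mu=O(1/n)$ and kills the $\Omega(1/\sqrt n)$ regret.

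The underlying obstruction is that for any $[0,1]$-valued cost with mean $\mu$, $\sigma^2\le\mu$, so the per-context noise in the empirical mean is at most $\sqrt{\mu/n}$; getting this to exceed a $\Theta(\mu)$ gap forces $\mu\lesssim 1/n$. The paper's construction exploits \emph{heteroskedasticity across contexts}: the second context is not a dummy but a rare (probability $1/n$) high-variance state with $c(a^1)\sim\op{Ber}(1/2)$. Its single appearance contributes $\Theta(1/n)$ to the squared loss of $f^\star$ via the $(1/2-0)^2$ term, which swamps the tiny $O(1/n)$ loss of the bad function $\tilde f$ on the common low-variance context $x^1$. This lets the ERM pick $\tilde f$ even though $\tilde f$ errs by $\Theta(1/\sqrt n)$ on $x^1$, yielding the claimed regret. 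In short, the second context must carry the variance that fools squared loss; making it inert discards the mechanism that makes the lower bound work.
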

The intuition is that squared loss regression does not adapt to context-dependent variance, \emph{a.k.a.} heteroskedasticity; so the convergence of squared loss regression is dominated by the worst context's variance.
In this counterexample, the second context $x^2$ occurs with tiny probability $n^{-1}$ but has high variance; however, the empirical squared loss is dominated by this unlikely context. %
\begin{proof}[Proof of \cref{thm:square-loss-cannot-achieve-first-order}]
The structure of the proof is the following: for any $n>400$, we first construct the CSC problem and realizable function class, and then show that indeed $V^\star\leq\Ocal(\frac1n)$.
Next, we show that under a bad event which occurs with probability at least $0.1$, the function with the lowest empirical squared risk induces a policy that suffers regret which is lower bounded by $\Omega(\frac{1}{\sqrt{n}})$.

Fix any $n>400$. We begin by setting up the CSC problem: label the two states as $x^1,x^2$ and the two actions as $a^1,a^2$.
Set the data generating distribution $d$ as follows: $d(x^1)=1-n^{-1}$, $d(x^2)=n^{-1}$ and
\begin{align*}
    &c(a^1)\mid x^1\sim\op{Ber}(\mu_n),\quad &c(a^2)\mid x^1=\nu_n,
    \\&\textstyle c(a^1)\mid x^2\sim\op{Ber}(\frac12),\quad &\textstyle c(a^2)\mid x^2=\frac12,
\end{align*}
where $\mu_n=\frac{1}{8n},\nu_n=\frac{1}{8\sqrt{n}}$.
Our realizable function class $\Fcal$ contains two elements: the true $f^\star(x,a)=\EE[c(a)\mid x]$ and another function $\wt f$ defined as
\begin{align*}
    &\textstyle\wt f(x^1,a^1)=\eps_n,\quad &\wt f(x^1,a^2)=\textstyle\nu_n,
    \\&\textstyle\wt f(x^2,a^1)=0,\quad &\wt f(x^2,a^2)=\textstyle\frac12,
\end{align*}
where $\eps_n=\frac{1}{4\sqrt{n}}$.
Note that $\mu_n<\nu_n$ so $\pi^\star(x^1)=a^1$ but $\eps_n>\nu_n$ so $\pi_{\wt f}(x^1)=a^2$, \ie, $\pi_{\wt f}$ makes a mistake on $x^1$.
Also, $V^\star=(1-n^{-1})\mu_n + \frac{1}{2n}\leq \frac{1}{n}$.

Now, we compute the empirical squared-loss risk and show that $\hat f^{\op{sq}}_\Fcal = \wt f$ under a bad event.
The empirical risk can be simplified by shedding shared terms to be:
\begin{align*}
    \textstyle\wh L_{\op{sq}}(f) = \sum_{i\in[2]}\frac{n(x^i)}{n}(f(x^i,a^1)-\wh\mu(x^i,a^1))^2,%
\end{align*}
where $n(x)$ denotes the number of times $x$ occurs in the dataset and $\wh\mu(x,a)=\frac{1}{n(x)}\sum_{i: x_i=x}c_i(a)$ is the empirical conditional mean.
We split the bad event into two parts:
($\mathfrak{E}_1$) $x^2$ appears only once in the dataset (\ie, $n(x^2)=1$) and its observed cost at $a^1$ is $0$ (\ie, $\wh\mu(x^2,a^1)=0$);
and ($\mathfrak{E}_2$) $\wh\mu(x^1,a^1)\leq 2\mu_n+\frac{3}{n-1}$. We lower bound $\Pr(\mathfrak{E}_1\cap\mathfrak{E}_2)\geq 0.1$ at the end.

We now show that $\wh f^{\op{sq}}_\Fcal = \wt f$ under $\mathfrak{E}_1\cap\mathfrak{E}_2$.
Under $\mathfrak{E}_1$, we lower bound $\wh L_{\op{sq}}(f^\star)$ by:
\begin{equation*}\textstyle
    \frac{n(x^2)}{n}(f^\star(x^2,a^1)-\wh\mu(x^2,a^1))^2 = \frac{1}{4n}.
\end{equation*}
Under $\mathfrak{E}_{1}\cap\mathfrak{E}_2$, the $x^2$ term of $\wh L_{\op{sq}}(\wt f)$ vanishes since $\wt f(x^2,a^1)=\wh\mu(x^2,a^1)$, and so $\wh L_{\op{sq}}(\wt f)$ can be bounded by:
\begin{align*}
    \textstyle(\wt f(x^1,a^1)-\wh\mu(x^1,a^1))^2
    \leq 2\eps_n^2+2(2\mu_n+\frac{3}{n-1})^2 < \frac{1}{4n},
\end{align*}
where the last inequality holds due to $n>400$.
Thus, squared loss regression selects $\hat f^{\op{sq}}_\Fcal = \wt f$ and the regret of the induced policy can be lower bounded by:
\begin{align*}
    V(\pi_{\hat f^{\op{sq}}_\Fcal})-V^\star
    &\textstyle=\frac{n-1}{n}(\nu_n-\mu_n)\geq\frac{1}{16}(\frac{1}{\sqrt{n}}-\frac{1}{n})\geq \frac{1}{32\sqrt{n}}.
\end{align*}
\textbf{Probability of the bad event.}
For $\mathfrak{E}_1$, since $n(x^2)\sim\op{Bin}(n,n^{-1})$, thus $\Pr(n(x^2)=1)=(1-n^{-1})^{n-1}\geq e^{-1}$.
Hence, $\Pr(\mathfrak{E}_1)\geq (2e)^{-1}$.
For $\mathfrak{E}_2$, we apply the multiplicative Chernoff bound \citep[Theorem 13.5]{zhang_2023_ltbook}%
, which implies $\wh\mu(x^1,a^1)< 2\mu_n+\frac{3}{n-1}$ \wpal $1-e^{-3}$.
Thus, $\Pr(\mathfrak{E}_1\cap\mathfrak{E}_2)\geq 1-(1-(2e)^{-1})-e^{-3}\geq 0.1$.
\end{proof}

\subsection{Plug-in BCE Loss Cannot Achieve Second-Order}
Next, we show that the bce-loss induced policy cannot achieve second-order bounds. This is a new result that did not appear before.
\begin{theorem}\label{thm:bce-loss-cannot-achieve-second-order}
For all odd $n\in\NN$, there exists a CSC problem where $|\Acal|=2,|\Xcal|=1$ and a realizable function class with $|\Fcal|=2$ such that: (a) $\sigma^2(\pi^\star) = 0$, but (b) $V(\pi_{\hat f^{\op{bce}}_\Fcal})-V^\star\geq \frac{1}{8\sqrt{n}}$ \wpal $\frac14$.
\end{theorem}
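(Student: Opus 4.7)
The plan is to construct a single-context, two-action CSC problem in which the optimal action has a deterministic cost (so that $\sigma^2(\pi^\star)=0$) while the sub-optimal action is a Bernoulli whose mean is only slightly larger. Because the BCE loss is essentially a KL divergence between Bernoulli distributions with the predicted and observed means, it is blind to whether the optimal action is noiseless; with constant probability, its empirical minimizer will be a hypothesis that under-predicts the noisy action's mean and thereby flips the greedy policy, incurring $\Omega(1/\sqrt n)$ regret.

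Concretely, set $\delta_n := 1/(8\sqrt{n})$, take $c(a^1)=1/2$ deterministically, and $c(a^2)\sim\op{Ber}(1/2+\delta_n)$. Then $\pi^\star=a^1$, $V^\star=1/2$, and $\sigma^2(\pi^\star)=0$, establishing (a). Take $\Fcal=\{f^\star,\wt f\}$, where $f^\star$ is the true conditional mean (yielding realizability), while $\wt f(a^1):=1/2$ and $\wt f(a^2):=1/2-\delta_n$. The induced policy $\pi_{\wt f}$ selects $a^2$, so $V(\pi_{\wt f})-V^\star = \delta_n = 1/(8\sqrt n)$; it remains to show $\Pr(\hat f^{\op{bce}}_\Fcal = \wt f) \ge 1/4$.

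Since $\wt f$ and $f^\star$ agree on $a^1$ and every observation $c_i(a^1)=1/2$ contributes identically to the two empirical risks, the $a^1$ terms cancel in the comparison. Letting $k := \sum_{i=1}^n c_i(a^2) \sim \op{Bin}(n,\,1/2+\delta_n)$, a direct algebraic simplification gives
\[
\wh L_{\op{bce}}(\wt f) - \wh L_{\op{bce}}(f^\star)
= (2k-n)\,\ln\frac{1/2+\delta_n}{1/2-\delta_n}.
\]
Because the logarithmic factor is strictly positive, $\hat f^{\op{bce}}_\Fcal = \wt f$ exactly when $k < n/2$, i.e., $k \le (n-1)/2$ (using that $n$ is odd).

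The one nontrivial step is a lower bound $\Pr(k \le (n-1)/2) \ge 1/4$. The threshold sits only $1/2 + \sqrt{n}/8$ below $\EE[k] = n/2 + \sqrt{n}/8$, whereas $\sqrt{\op{Var}(k)} \le \sqrt n/2$, so the threshold is at most roughly a quarter standard deviation below the mean. For small odd $n$ (say $n \le 15$) the probability can be checked by direct computation of the binomial CDF; for $n\ge 17$, I would invoke Slud's inequality on the binomial tail to obtain $\Pr(k \le (n-1)/2) \ge 1 - \Phi\bigl(1/4 + 1/\sqrt n\bigr) \ge 1/4$. Setting up this anti-concentration argument uniformly in $n$ is the main (and only) technical obstacle; everything else reduces to a short algebraic identity.
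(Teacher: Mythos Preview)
Your construction is essentially the paper's (with the roles of $a^1,a^2$ swapped): a deterministic optimal arm, a Bernoulli sub-optimal arm with mean $\tfrac12+\delta_n$, and a two-element realizable class so that the BCE comparison collapses to the noisy arm and the bad event is the binomial landing below its median. Two small remarks: the paper sets $\wt f$ equal to $\tfrac12$ on both actions and relies on adversarial tie-breaking in $\pi_{\wt f}$, whereas your choice $\wt f(a^2)=\tfrac12-\delta_n$ breaks the tie explicitly (cleaner); and your Slud step is slightly misstated---Slud gives $\Pr\bigl(k\le\frac{n-1}{2}\bigr)\ge 1-\Phi(z)$ with $z=\frac{1/2+\sqrt n/8}{\sqrt{n/4-1/64}}$, which is a hair \emph{larger} than $1/4+1/\sqrt n$, but $z$ is decreasing in $n$ and $z(17)<0.5$, so $1-\Phi(z)\ge 1/4$ still holds for all odd $n\ge 17$, and the small cases are checked directly as you say.
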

\begin{proof}[Proof of \cref{thm:bce-loss-cannot-achieve-second-order}]
The proof structure is similar as before: for any odd $n$, we construct the CSC problem and a realizable function class. We show that $\sigma^2(\pi^\star)=0$ which is the second-order regime; we also sanity check that $V^\star$ is bounded away from $0$ and $1$, to ensure that we're not in the first-order regime.
Next, we show that under a bad event which occurs with constant probability, the function with the lowest empirical bce risk induces a policy that suffers regret which is lower bounded by $\Omega(\frac{1}{\sqrt{n}})$.

Fix any odd $n\in\NN$. We first construct the CSC problem: label the two actions as $a^1,a^2$ and drop the context notation since there is one context.
Set the data generating distribution $d$ such that: $c(a^1)\sim\op{Ber}(\frac12+\eps_n)$ and $c(a^2)=\frac12$ w.p. $1$.
The true conditional means are $f^\star(a^1)=\frac12+\eps_n$ and $f^\star(a^2)=\frac12$.
In addition to $f^\star$, the function class $\Fcal$ only contains one other function $\wt f$ defined as $\wt f(a^1)=\wt f(a^2)=\frac12$.
Note that the optimal action is $a^\star=a^2$ and the regret of $a^1$ is $f^\star(a^1)-f^\star(a^2)=\eps_n$.
We also check that $V^\star=\Theta(1)$, and so this is not the first-order regime.

Now, we compute the empirical bce-loss risk and show that $\hat f^{\op{bce}}_\Fcal = \wt f$ under a bad event.
Since all elements of $\Fcal$ have the same prediction for $a^2$, the empirical bce-loss risk can be simplified to
\begin{align*}
    \wh L_{\op{bce}}(f)&\textstyle=p\cdot\ell_{\op{bce}}(f(a^1),0) + (1-p)\cdot\ell_{\op{bce}}(f(a^1),1)
    \\&\textstyle=\ell_{\op{bce}}(f(a^1), 1-p),
\end{align*}
where $p$ is the fraction of times that $c(a^1)=0$ in the dataset.
The above loss is convex and its minimizer is $1-p$.
The bad event we consider is that $p>\frac12$, under which we have $1-p<\wt f(a^1)<f^\star(a^1)$; since the loss is convex, $\wt f$ indeed achieves lower loss than $f^\star$.
Thus, we have that $\wh f^{\op{bce}}_\Fcal = \wt f$ and the regret of the induced policy is
$
V(\pi_{\wh f^{\op{bce}}_\Fcal})-V^\star=f^\star(a^1)-f^\star(a^2)=\frac{1}{8\sqrt{n}}.
$
\\\textbf{Probability of the bad event.}
\citet{binomial_deviations} proved tight lower and upper bounds for binomial small deviations and we will make use of the following result: for all $n\geq 1$ and $\gamma\in[0,\frac{1}{\sqrt{n}}]$, let $\Pr(\op{Bin}(n,\frac12-\frac{\gamma}{2})\leq\floor{\frac{n}{2}})-\Pr(\op{Bin}(n,\frac12+\frac{\gamma}{2})\leq\floor{\frac{n}{2}})\leq\sqrt{n}\gamma$.
If $n$ is odd, we have that $\frac12=\Pr(\op{Bin}(n,\frac12)\leq\floor{\frac{n}{2}})<\Pr(\op{Bin}(n,\frac12-\frac{\gamma}{2})\leq\floor{\frac{n}{2}})$. Thus,
$\Pr(\op{Bin}(n,\frac12+\frac{\gamma}{2})<\frac{n}{2})\geq \frac12-\sqrt{n}\gamma$. Setting $\gamma = \frac{1}{4\sqrt{n}}$ (corresponding to $\eps_n=\frac{1}{8\sqrt{n}}$), we have shown that the bad event occurs with probability at least $\frac14$.
\end{proof}

In the above proof, we used two key properties of the empirical bce risk: (1) its minimizer is the empirical mean, and (2) it is convex w.r.t. the prediction (\ie, the first argument).
Since squared loss also has these properties, the above result also applies to squared regression.
However, since the mle loss learns a distribution rather than just the mean, the counterexample does not apply.
Finally, since CSC is the most basic decision making setting, the counterexamples in this section also apply to reinforcement learning via the online-to-batch conversion.

\begin{table*}[t]
\caption{\edit{Summary of main RL results in this paper. Since the algorithms presented all use temporal difference learning, all results posit some variant of Bellman Completeness. In online RL, each bound uses a slightly different eluder dimension based on the loss; in offline RL, all bounds use the single policy coverage $C^{\wt\pi}$ (defined in \cref{eq:single-policy-coverage}). Finally, we remark that the online and offline RL algorithms employ optimism and pessimism over a version space, so they are computationally inefficient \citep{dann2018oracle}. For computational efficiency, we study the hybrid RL setting (\cref{sec:hybrid-rl}) where we show that fitted-Q iteration with access to online and offline data achieves a bound that is the sum of online and offline bounds.}}
\label{table:summary-of-rl-results}
\renewcommand{\arraystretch}{1.5} %
\edit{
\begin{tabular}{@{}c@{\hspace{3pt}}c@{\hspace{3pt}}c@{\hspace{3pt}}c@{\hspace{3pt}}c@{}}
\toprule
Setting & Loss & Function Class & Bellman Completeness & Regret / PAC Bound \\ \midrule
\multirow{3}{*}{\rotatebox[origin=c]{90}{Online RL}} & $\ell_{\op{sq}}$ & $\Fcal:\Xcal\times\Acal\mapsto[0,1]$ & $\Tcal^{\star}$ (\cref{ass:rl-bc}) & $\Ocal\prns{H\sqrt{Kd_{\op{sq}}\ln(H|\Fcal|/\delta)}}$ (\cref{thm:online-rl-regret-squared-loss}) \\ \cmidrule{2-5} 
& $\ell_{\op{bce}}$ & $\Fcal:\Xcal\times\Acal\mapsto[0,1]$ & $\Tcal^{\star}$ (\cref{ass:rl-bc}) & $\Ocal\prns{H\sqrt{{\color{red}V^\star} \cdot Kd_{\op{bce}}\ln(H|\Fcal|/\delta)} + H^2 d_{\op{bce}}\ln(H|\Fcal|/\delta)}$ (\cref{thm:online-rl-regret-bce-loss}) \\ \cmidrule{2-5}
& $\ell_{\op{mle}}$ & $\Pcal:\Xcal\times\Acal\mapsto\Delta([0,1])$  & $\Tcal^{\op{D},\star}$ (\cref{ass:dist-rl-bc}) & $\Ocal\prns{H\sqrt{{\color{red}\sum_{k=1}^K\sigma^2(\pi^k)}\cdot d_{\op{mle}}\ln(H|\Pcal|/\delta)} + H^{2.5} d_{\op{mle}}\ln(H|\Pcal|/\delta)}$ (\cref{thm:online-rl-regret-mle-loss})  \\ \midrule
\multirow{3}{*}{\rotatebox[origin=c]{90}{Offline RL}} & $\ell_{\op{sq}}$ & $\Fcal:\Xcal\times\Acal\mapsto[0,1]$ & $\Tcal^{\pi},\forall\pi\in\Pi$ (\cref{ass:offrl-bc}) & $\Ocal\prns{H\sqrt{\frac{C^{\wt\pi}\ln(H|\Fcal|/\delta)}{n}}}$ (\cref{thm:offline-rl}) \\ \cmidrule{2-5}
& $\ell_{\op{bce}}$ & $\Fcal:\Xcal\times\Acal\mapsto[0,1]$ & $\Tcal^{\pi},\forall\pi\in\Pi$ (\cref{ass:offrl-bc}) & $\Ocal\prns{H\sqrt{{\color{red}V^{\wt\pi}}\cdot \frac{C^{\wt\pi}\ln(H|\Fcal|/\delta)}{n}} + H^2 \frac{C^{\wt\pi}\ln(H|\Fcal|/\delta)}{n}}$ (\cref{thm:offline-rl}) \\ \cmidrule{2-5}
& $\ell_{\op{mle}}$ & $\Pcal:\Xcal\times\Acal\mapsto\Delta([0,1])$  & $\Tcal^{\op{D},\pi},\forall\pi\in\Pi$ (\cref{ass:offrl-distbc}) & $\Ocal\prns{H\sqrt{{\color{red}\sigma^2(\wt\pi)}\cdot\frac{C^{\wt\pi}\ln(H|\Pcal|/\delta)}{n}} + H^{2.5}\frac{C^{\wt\pi}\ln(H|\Pcal|/\delta)}{n}}$ (\cref{thm:offline-dist-rl})  \\ \bottomrule
\end{tabular}
}
\end{table*}

\section{Setup for Reinforcement Learning}
In the preceding sections, we saw how the loss function plays a central role in the sample efficiency of algorithms for CSC, the simplest decision making problem.
The commonly used squared loss results in slow $\Theta(1/\sqrt{n})$ rates in benign problem instances where the optimal policy has small cost (\ie, first-order) or has small variance (\ie, second-order), while the bce or mle losses, respectively, can be used to achieve fast $\Ocal(1/n)$ rates.

In the rest of this paper, we will see that these observations and insights generally transfer to more complex decision making setups, in particular reinforcement learning (RL). Compared to the CSC setting, two new challenges of RL are that (1) the learner receives feedback only for the chosen action (\emph{a.k.a.}, partial or bandit feedback) and (2) the learner sequentially interacts with the environment over multiple time steps.
As before, we focus on value-based algorithms with function approximation and prove bounds for problems with high-dimensional observations, \ie, beyond the finite tabular setting.

\subsection{Problem Setup}
We formalize the RL environment as a Markov Decision Process (MDP) which consists of an observation space $\Xcal$, action space $\Acal$, horizon $H$, transition kernels $\{P_h:\Xcal\times\Acal\to\Delta(\Xcal)\}_{h\in[H]}$ and conditional cost distributions $\{C_h:\Xcal\times\Acal\to\Delta([0,1])\}_{h\in[H]}$.
We formalize the policy as a tuple of mappings $\pi=\{\pi_h:\Xcal\to\Delta(\Acal)\}_{h\in[H]}$
that interacts (\emph{a.k.a.} rolls-in) with the MDP as follows: start from an initial state $x_1$ and at each step $h=1,2,\dots,H$, sample an action $a_h\sim \pi_h(x_h)$, collect a cost $c_h\sim C_h(x_h,a_h)$ and transit to the next state $x_{h+1}\sim P_h(x_h,a_h)$.
We use $Z^\pi=\sum_{h=1}^H c_h$ to denote the cumulative cost, a random variable, from rolling in $\pi$; we consider the general setup where $Z^\pi$ is normalized between $[0,1]$ almost surely which allows for sparse rewards \citep{jiang2018open}.
We use $Z^\pi_h(x_h,a_h)=\sum_{t=h}^H c_t$ to denote the cumulative cost of rolling in $\pi$ from $x_h,a_h$ at step $h$.
We use $Q^\pi_h(x_h,a_h)=\EE[Z^\pi_h(x_h,a_h)]$ and $V^\pi_h(x_h)=Q_h^\pi(x_h,\pi)$ to denote the expected cumulative costs, where we use the shorthand $f(x,\pi)=\EE_{a\sim\pi(x)}f(x,a)$ for any $f$.
For simplicity, we assume the initial state $x_1$ is fixed and known, and we let $V^\pi:=V^\pi_1(x_1)$ denote the initial state value of $\pi$.
Our results can be extended to the case when $x_1$ is stochastic from an unknown distribution, or, in the online setting, the initial state at round $k$ may even be chosen by an adaptive adversary.

\textbf{Online RL.}
The learner aims to compete against the optimal policy denoted as $\pi^\star = \argmin_{\pi}V^\pi_1(x_1)$. We use $Z^\star,V^\star,Q^\star$ to denote $Z^{\pi^\star},V^{\pi^\star},Q^{\pi^\star}$, respectively.
The online RL problem iterates over $K$ rounds: for each round $k=1,2,\dots,K$, the learner selects a policy $\pi^k$ to roll-in and collect data, and the goal is to minimize regret,
\begin{equation}
    \op{Reg}_{\normalfont\textsf{RL}}(K)=\textstyle\sum_{k=1}^K V^{\pi^k} - V^\star.
\end{equation}
We also consider PAC bounds where the learner outputs $\pi^k$ at each round but may roll-in with other exploratory policies to better collect data.

\textbf{Offline RL.} The learner is given a dataset of prior interactions with the MDP and, unlike online RL, cannot gather more data by interacting with the environment.
The dataset takes the form $\Dcal=(\Dcal_1,\Dcal_2,\dots,\Dcal_H)$ where each $\Dcal_h$ contains $n$ \emph{i.i.d.} samples $(x_{h,i},a_{h,i},c_{h,i},x_{h,i}')$ where $(x_{h,i},a_{h,i})\sim\nu_h,\,c_{h,i}\sim C_h(x_{h,i},a_{h,i})$ and $x_{h,i}'\sim P_h(x_{h,i},a_{h,i})$.
We note that $\nu_h$ is simply the marginal distribution over $(x_h,a_h)$ induced by the data generating process, \eg, mixture of policies.
We also recall the (single-policy) coverage coefficient: given a comparator policy $\wt\pi$, define $C^{\wt\pi}=\max_{h\in[H]}\|\diff d^{\wt\pi}_{h}/\diff\nu_h\|_\infty$ \citep{xie2021bellman,uehara2022pessimistic}.
The goal is to learn a policy $\wh\pi$ with a PAC guarantee against any comparator policy $\wt\pi$ such that $C^{\wt\pi}<\infty$.

\textbf{Hybrid RL.} We also consider the hybrid setting where the learner is given a dataset as in offline RL, and can also gather more data by interacting with the environment as in online RL \citep{song2023hybrid,ball2023efficient}.
By combining the analyses from both online and offline settings, we prove that fitted $Q$-iteration (FQI) \citep{munos2008finite}, a computationally efficient algorithm that does not induce optimism or pessimism, can achieve first- and second-order regret and PAC bounds.

\textbf{Bellman equations.} 
Define the Bellman operator $\Tcal^\pi$ by $\Tcal_h^\pi f(x,a)=\EE_{c\sim C_h(x,a),x'\sim P_h(x,a)}[c+f(x',\pi_{h+1}) ]$ for any function $f$ and policy $\pi$
The Bellman equations are $f_h = \Tcal^\pi_h f_{h+1}$ for all $h$, where $f_h=Q^\pi_h$ is the unique solution.
Also, the Bellman optimality operator $\Tcal^\star$ is defined by $\Tcal_h^\star f(x,a)=\EE_{c\sim C_h(x,a),x'\sim P_h(x,a)}[c+\min_{a'} f(x',a') ]$.
The Bellman optimality equations are $f_h = \Tcal^\star_h f_{h+1}$ for all $h$, where $f_h=Q^\star_h$ is the unique solution.

\textbf{Distributional Bellman equations.} There are also distributional analogs to the above \citep{bdr2023}.
Let $\Tcal^{\op{D},\pi}$ denote the distributional Bellman operator for policy $\pi$, defined by $\Tcal_h^{\op{D},\pi} p(x,a)\overset{D}{=}c + p(x',a')$ where $c\sim C_h(x,a),x'\sim P_h(x,a),a'\sim\pi_{h+1}(x')$ for any conditional distribution $p$.
Here $\overset{D}{=}$ denotes equality in distribution.
The distributional Bellman equations are $p_h\overset{D}{=}\Tcal^{\op{D},\pi}_h p_{h+1}$ for all $h$, where $Z^\pi_h$ is a solution.
The distributional Bellman optimality operator $\Tcal^{\op{D},\star}$ is defined by $\Tcal_h^{\op{D},\star} p(x,a)\overset{D}{=}c + p(x',a')$ where $c\sim C_h(x,a),x'\sim P_h(x,a),a'=\argmin_{a'}\bar p(x',a')$.
The distributional Bellman optimality equations are $p_h\overset{D}{=}\Tcal^{\op{D},\star}_h p_{h+1}$ for all $h$, where $Z^\star_h$ is a solution.

\section{Online RL with Squared-Loss Regression}\label{sec:online-rl-squared-loss}

We begin our discussion of RL by solving online RL with optimistic temporal-difference (TD) learning with the squared loss for regression \citep{jin2021bellman,xie2023the}, which can be viewed as an abstraction for deep RL algorithms such as DQN \citep{mnih2015human}.
The algorithm is value-based, meaning that it aims to learn the optimal $Q$-function $Q^\star$, which then induces the optimal policy via greedy action selection $\pi^\star_h(x)=\argmin_{a}Q^\star_h(x,a)$.
To learn the $Q$-function, it uses a function class $\Fcal$ that consists of function tuples $f=(f_1,f_2,\dots,f_H)\in\Fcal$ where $f_h:\Xcal\times\Acal\to[0,1]$ and we use the convention that $f_{H+1}=0$ for all functions $f$.

In the sequential RL setting, TD learning is a powerful idea for regressing $Q$-functions where the function at step $h$ is regressed on the current cost plus a \emph{learned} prediction at the next step $h+1$. This process is also known as bootstrapping.
One can view this as an approximation to the Bellman equations $Q_h=\Tcal_h Q_{h+1}$ where $\Tcal$ is a Bellman operator.
For online RL, we use the Bellman optimality operator $\Tcal^\star$ to learn the optimal $Q^\star$, while in offline RL we use the policy-specific Bellman operator $\Tcal^\pi$ to learn $Q^\pi$ for all policies $\pi$.

\begin{algorithm}[!t]
\caption{Policy Data Collection}
\label{alg:roll-in}
\begin{algorithmic}[1]
    \State\textbf{Input:} policy $\pi$, uniform exploration (UA) flag.
    \If{UA flag is True}
    \For{step $h\in[H]$}
        \State Roll-in $\pi$ for $h$ steps to arrive at $x_{h}$.
        \State Then, randomly act $a_{h}\sim\op{Unif}(\Acal)$ and observe $c_{h},x_{h}'$.
    \EndFor
    \Else
    \State Roll-in $\pi$ for $H$ steps and collect $x_{1},a_{1},c_{1}$, $\dots$, $x_{H},a_{H},c_{H}$.
    \State Label $x_{h}'=x_{h+1}$ for all $h\in[H]$.
    \EndIf
    \State\textbf{Output:} dataset $\{(x_h,a_h,c_h,x_h')\}_{h\in[H]}$.
\end{algorithmic}
\end{algorithm}

\begin{algorithm}[!t]
\caption{Optimistic Online RL}
\label{alg:online-rl}
\begin{algorithmic}[1]
    \State\textbf{Input:} number of rounds $K$, function class $\Fcal$, loss function $\ell(\hat y,y)$, threshold $\beta$, uniform exploration (UA) flag
    \For{round $k=1,2,\dots,K$}
        \State Denote $\Fcal_k=\Ccal_\beta^\ell(\Dcal_{<k})$ as the version space defined by:
        \begin{align}
            \textstyle\Ccal_\beta^\ell(\Dcal)=\{f\in\Fcal:\,\, &\textstyle\forall h\in[H],\,L_h^\ell(f_h,f_{h+1},\Dcal_h) \notag
            \\&\textstyle-\min_{g_h\in\Fcal_h}L_h^\ell(g_h,f_{h+1},\Dcal_h)\leq\beta\}, \label{eq:rl-confidence-set}
        \end{align}
        where
        \begin{equation*}
            \textstyle L_h^\ell(f_h,g,\Dcal_h)=\sum_{i=1}^{|\Dcal_h|} \ell(f_h(x_{h,i},a_{h,i}),\tau^\star(g,c_{h,i},x_{h,i}'))
        \end{equation*}
        and $\tau^\star(g,c,x')=c+\min_{a'}g(x',a')$ is the regression target.
        In the proofs, we use $L^{\op{sq}}$ if $\ell=\ell_{\op{sq}}$ and $L^{\op{bce}}$ if $\ell=\ell_{\op{bce}}$.
        \State Get optimistic $f^k\gets\argmin_{f\in\Fcal_k}\min_a f_1(x_{1},a)$. \label{line:rl-optimism}
        \State Let $\pi^k$ be greedy w.r.t. $f^k$: $\pi^k_h(x)=\argmin_a f^k_h(x,a),\forall h$.
        \State Gather data $\Dcal_k\gets{\text{\cref{alg:roll-in}}}(\pi^k, \text{UA flag})$.
    \EndFor
\end{algorithmic}
\end{algorithm}

To formalize TD learning, let $(x_h,a_h,c_h,x_h')$ be a transition tuple where $c_h,x_h'$ are sampled conditional on $x_h,a_h$.
For a predictor $f_{h+1}$ at step $h+1$, the regression targets at step $h$ are:
\begin{align*}
    &\textstyle\tau^\star(f_{h+1},c,x')=c+\min_{a'}f_{h+1}(x',a'),
    \\&\textstyle\tau^\pi(f_{h+1},c,x')=c+f_{h+1}(x',\pi_{h+1}),
\end{align*}
where $\tau^\star$ is the target for learning $Q^\star$ which we use for online RL, and $\tau^\pi$ is the target for learning $Q^\pi$ which we use for offline RL.
The targets are indeed unbiased estimates of the Bellman backup since $\Tcal_hf_{h+1}(x,a)=\EE[\tau(f_{h+1},c,x')]$.
Then, we regress $f_h$ by minimizing the loss $\ell(f_h(x,a),\tau(f_{h+1},c,x'))$ averaged over the data, where the loss function $\ell(\hat y,y)$ captures the discrepancy between the prediction $\hat y$ and target $y$.
Note this takes the same form as the regression loss from the CSC warmup.

In online RL, the algorithm we consider (\cref{alg:online-rl}) performs TD learning optimistically by maintaining a version space constructed with the TD loss.
Specifically, given a dataset $\Dcal=(\Dcal_1,\dots,\Dcal_H)$ where each $\Dcal_h=\{x_{h,i},a_{h,i},c_{h,i},x_{h,i}'\}_{i\in[n]}$ is a set of transition tuples, the version space $\Ccal_\beta^\ell(\Dcal)$ is defined in \cref{eq:rl-confidence-set} of \cref{alg:online-rl}.
Intuitively, the version space contains all functions $f\in\Fcal$ which nearly minimize the empirical TD risk measured by loss function $\ell$, for all time steps $h$.
This construction is useful since it satisfies two properties with high probability.
First, any function in $\Ccal_\beta^\ell(\Dcal)$ has small population TD risk (a.k.a. Bellman error) w.r.t. $\ell$, so we can be assured that choosing any function from the version space is a good estimate of the desired $Q^\star$.
Second, we have that $Q^\star$ is an element of the version space, which provides a means to achieve optimism (or pessimism) by optimizing over the version space.
Indeed, by selecting the function in the version space with the minimum initial state value, we are guaranteed to select a function that lower bounds the optimal policy's cost $V^\star$.

We now summarize the online RL algorithm (\cref{alg:online-rl}), which proceeds iteratively.
At each round $k=1,2,\dots,K$, the learner selects an optimistic function $f^k$ from the version space defined by previously collected data: $f^k\gets\argmin_{f\in\Fcal_k}\min_a f_1(x_{1},a)$ where $\Fcal_k=\Ccal_\beta^\star(\Dcal_{<k})$ and $\Dcal_{<k}$ denotes the previously collected data.
Then, let $\pi^k$ be the greedy policy w.r.t. $f^k$: $\pi^k(x)=\argmin_a f^k(x,a)$.
Finally, roll-in with $\pi^k$ to collect data, as per \cref{alg:roll-in}.

The roll-in procedure (\cref{alg:roll-in}) has two variants depending on the uniform action (UA) flag.
If UA is enabled, we roll-in $H$ times with a slightly modified policy: for each $h\in[H]$, we collect a datapoint from $\pi^k\circ_h\op{unif}(\Acal)$, which denotes the policy that executes $\pi^k$ for $h-1$ steps and switches to uniform actions at step $h$.
If UA is disabled, we roll-in $\pi^k$ once and collect trajectory $x_{1,k},a_{1,k},c_{1,k},\dots,x_{H,k},a_{H,k},c_{H,k}$.
While UA requires $H$ roll-ins per round, this more exploratory data collection is useful for proving bounds with non-linear MDPs.
The collected data is then used to define the confidence set at the next round.

As a historical remark, this algorithm was first proposed with the squared loss $\ell_{\op{sq}}$ under the name GOLF by \citep{jin2021bellman} and then extended with the mle loss $\ell_{\op{mle}}$ under the name O-DISCO by \citep{wang2023benefits}.
In this section, we focus on the squared loss case, recovering the results of \citep{jin2021bellman}.
In the subsequent sections, we propose a new variant with the bce loss $\ell_{\op{bce}}$, and then finally disuss application of the mle loss, recovering the results of \citep{wang2024more}.

We now state the Bellman Completness (BC) assumption needed to ensure that \cref{alg:online-rl} succeeds \citep{chen2019information,jin2021bellman,xie2021bellman,chang2022learning}.
\begin{assumption}[$\Tcal^\star$-BC]\label{ass:rl-bc}
$\Tcal_h^\star f_{h+1}\in\Fcal_h$ for all $h\in[H]$ and $f_{h+1}\in\Fcal_{h+1}$.
\end{assumption}
BC ensures that the TD-style regression which bootstraps on the next prediction is realizable, playing the same role as realizability (\cref{ass:csc-realizability}) in the CSC setting.
In fact, BC implies realizability in $Q^\star$: $Q_h^\star\in\Fcal_h$ for all $h$, which can be verified by using the Bellman optimality equations and induction from $h=H\to1$.
While appealing, $Q^\star$-realizability is not sufficient for sample efficient RL \citep{wang2021what,foster2022offline,weisz2021exponential} and TD learning can diverge or converge to bad points with realizability alone \citep{tsitsiklis1996analysis,munos2008finite,kolter2011fixed}.
We note that $Q^\star$-realizability becomes sufficient when combined with \edit{low coverability and} generative access to the MDP \citep{mhammedi2024power}, where the learner can reset to any previously observed states. We believe that the techniques in this paper can lead to first- and second-order bounds with realizability plus generative access for example. However, we do not pursue this direction here since exchanging BC for other conditions is orthogonal to our study of loss functions.

We also define the eluder dimension,\footnote{\cref{def:dist-eluder-dim} is often called the \emph{distributional} eluder dimension to distinguish it from the classic eluder dimension of \citep{russo2013eluder}. To not confuse with distributional RL, we simply refer to it as the eluder dimension.} a flexible structural measure that quantifies the complexity of exploration and representation learning \citep{jin2021bellman}.
\begin{definition}[Eluder Dimension]\label{def:dist-eluder-dim}
Fix any set $\Scal$, function class $\Psi=\{\psi:\Scal\to\edit{\RR}\}$, distribution class $\Mcal=\{\mu:\Delta(\Scal)\}$, threshold $\eps_0$, and number $q\in\NN$. The $\ell_q$-eluder dimension $\EluDim_q(\Psi,\edit{\Mcal},\eps_0)$ is defined as the length of the longest sequence \edit{$\mu^{(1)},\dots,\mu^{(L)}\subset\Mcal$} s.t. $\exists\eps\geq\eps_0$, $\forall t\in[L]$, $\exists\psi\in\Psi$ s.t. $\abs{\EE_{\mu^{(t)}}[\psi]}>\eps$ but $\sum_{i<t}\abs{\EE_{\mu^{(i)}}[\psi]}^q\leq\eps^q$.
\end{definition}
Taking $\Scal=\Xcal\times\Acal$, we will instantiate the $\Psi$ class to be a set of TD errors measured by the regression loss function.
For example with squared loss, we set $\Psi_h^{\op{sq}}=\{ \Ecal^{\op{sq}}_h(\cdot;f): f\in\Fcal \}$ where
\begin{equation*}
    \textstyle\Ecal^{\op{sq}}_h(x,a;f):= (f_h(x,a) - \Tcal^\star_h f_{h+1}(x,a))^2.
\end{equation*}
The distribution class $\Mcal$ will be the set of all visitation distributions by any policy, \ie, $\Mcal_h = \{x,a\mapsto d^\pi_h(x,a):\pi\in\Pi\}$ where $d^\pi_h(x,a)$ is the state-action visitation distribution of $\pi$ at time step $h$.
If UA is enabled, then we will have $\Scal=\Xcal$ and \edit{the ``V-type'' distribution class is $\Mcal^{\op{V}}_h=\{x\mapsto d^\pi_h(x):\pi\in\Pi\}$ where $d^\pi_h(x)$ is the state visitation distribution of $\pi$ at time step $h$. Moreover we let
$\Psi_h^{\op{sq},\op{V}}=\{ \EE_{a\sim\op{unif}(\Acal)}[ \psi(x,a) ]: \psi\in\Psi^{\op{sq}}_h\}$ denote the ``V-type'' function class, where ``V-type'' refers to the fact that the functions only depend on state and not action. } 
Thus, define the eluder dimension for squared loss:
\begin{align*}
    &\textstyle d_{\op{sq}}=\max_{h\in[H]}\EluDim_2(\Psi_h^{\op{sq}},\Mcal_h,1/K),
    \\&\textstyle d_{\op{sq}}^{\op{V}}=\max_{h\in[H]}\EluDim_2(\Psi_h^{\op{sq},\op{V}},\Mcal_h^{\op{V}},1/K).
\end{align*}

We now state the guarantees for \cref{alg:online-rl} with the squared loss $\ell_{\op{sq}}$, which recovers the main results of \citet{jin2021bellman}.
\begin{theorem}\label{thm:online-rl-regret-squared-loss}
Under \cref{ass:rl-bc}, for any $\delta\in(0,1)$, \wpal $1-\delta$, \cref{alg:online-rl} with the squared loss $\ell_{\op{sq}}$ and $\beta=2\ln(H|\Fcal|/\delta)$ enjoys the following:
\begin{equation*}
    \textstyle\sum_{k=1}^K V^{\pi^k}-V^\star\leq\wt\Ocal\prns*{ H\sqrt{K\cdot d\beta} },
\end{equation*}
where $d=d_{\op{sq}}$ if UA is false and $d=Ad_{\op{sq}}^{\op{V}}$ if UA is true, where $A$ is the number of actions.
\end{theorem}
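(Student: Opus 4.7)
The plan is to follow the standard GOLF-style analysis of \citet{jin2021bellman}, built on three pillars: (i) a uniform-convergence argument that makes the version space $\Fcal_k$ well-calibrated, (ii) an optimism-based regret decomposition, and (iii) an eluder-dimension pigeonhole argument that converts an in-sample squared Bellman-error budget into a cumulative in-distribution absolute Bellman-error bound.

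First I would establish a concentration lemma for the squared-loss TD regression. Under $\Tcal^{\star}$-BC, the conditional mean of the regression target $\tau^\star(f_{h+1},c,x')$ given $(x_h,a_h)$ is exactly $\Tcal_h^\star f_{h+1}(x_h,a_h)$, and moreover $\Tcal_h^\star f_{h+1}\in\Fcal_h$ so the minimum of the inner regression in \cref{eq:rl-confidence-set} is attained at the Bayes-optimal regressor. A standard Freedman-style martingale argument for sequential least squares, together with a union bound over $k\in[K]$, $h\in[H]$, and $f\in\Fcal$, then yields that \wpal $1-\delta$, for every round $k$: (a) $Q^\star\in\Fcal_k$ (this uses $\Tcal^\star$-BC together with backward induction from $h=H$, and optimism-plus-definition to stay inside the version space); and (b) every $f\in\Fcal_k$ satisfies the in-sample Bellman-error budget
\begin{equation*}
\sum_{j<k}\EE_{(x_h,a_h)\sim \mu_h^j}\bigl[(f_h-\Tcal_h^\star f_{h+1})^2(x_h,a_h)\bigr]\lesssim \beta,
\end{equation*}
where $\mu_h^j$ denotes the data-collection distribution at step $h$ in round $j$.

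Next I would decompose the regret via optimism and the simulation lemma. Since $Q^\star\in\Fcal_k$ and $f^k$ is chosen to minimize $\min_a f_1(x_1,a)$ over $\Fcal_k$, we get $f_1^k(x_1,\pi^k)\le V^\star$, and a standard Bellman-error telescoping identity gives
\begin{equation*}
V^{\pi^k}-V^\star \le V^{\pi^k}-f_1^k(x_1,\pi^k) = \sum_{h=1}^H \EE_{(x_h,a_h)\sim d^{\pi^k}_h}\bigl[(\Tcal_h^\star f^k_{h+1}-f^k_h)(x_h,a_h)\bigr].
\end{equation*}
Summing over $k$ reduces the task to bounding $\sum_{k,h}\EE_{d^{\pi^k}_h}[|\Tcal_h^\star f^k_{h+1}-f^k_h|]$.

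The final and main step is the eluder pigeonhole. Fix $h$ and let $\psi^k = \Ecal_h^{\op{sq}}(\cdot;f^k)\in\Psi_h^{\op{sq}}$. Jensen's inequality $|\EE[\cdot]|\le\sqrt{\EE[(\cdot)^2]}$ bounds the absolute Bellman error by $\sqrt{\EE_{\mu_h^k}[\psi^k]}$. Combining step (b) above (cumulative past squared errors $\lesssim\beta$) with the $\ell_2$-eluder definition then gives, by a standard pigeonhole over the sequence $\mu_h^1,\dots,\mu_h^K$,
\begin{equation*}
\sum_{k=1}^K \sqrt{\EE_{\mu_h^k}[\psi^k]}\lesssim \sqrt{K\cdot d\cdot\beta}.
\end{equation*}
When UA is false, $\mu_h^k=d^{\pi^k}_h$ and we conclude with $d=d_{\op{sq}}$. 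When UA is true, $\mu_h^k=d^{\pi^k}_h\times\op{unif}(\Acal)$, so we first importance-weight from $d^{\pi^k}_h$ (Q-type) to the V-type $d^{\pi^k}_h$-marginal on states and average the squared error over $\op{unif}(\Acal)$, incurring the factor $A$ and passing to the V-type class, giving $d=Ad_{\op{sq}}^{\op{V}}$. Summing over $h\in[H]$ yields the claimed $\wt\Ocal(H\sqrt{Kd\beta})$ regret.

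The trickiest part will be the eluder-pigeonhole bookkeeping, in particular verifying that the $\ell_2$-variant of $\EluDim$ really does convert a squared-error budget of size $\beta$ into a $\sqrt{dK\beta}$ cumulative absolute-error bound (rather than the weaker $\sqrt{d}\cdot\sqrt{K\beta}$ one would get naively via Cauchy--Schwarz), and handling the UA case cleanly so that the extra factor $A$ only enters once, via the single switch from Q-type to V-type eluder dimension.
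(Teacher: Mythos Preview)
Your proposal is correct and follows essentially the same route as the paper: an optimism lemma (the paper's \cref{lem:sq-rl-optimism}) establishing both $Q^\star\in\Fcal_k$ and the in-sample squared Bellman-error budget, then PDL plus Cauchy--Schwarz, then the eluder pigeonhole (the paper's \cref{lem:pigeonhole}) to bound $\sum_k\Ecal^{\op{sq}}_h(\pi^k;f^k)\lesssim d\beta$, with the UA case handled by the same $A$-factor change of measure. The only cosmetic difference is ordering: the paper first applies Cauchy--Schwarz across $h$ and $k$ to reach $\sqrt{HK\sum_k\Ecal^{\op{RL}}_{\op{sq}}(f^k,\pi^k)}$ and \emph{then} invokes pigeonhole on the squared sum, whereas you phrase it as bounding $\sum_k\sqrt{\EE_{\mu_h^k}[\psi^k]}$ per $h$; either way the pigeonhole output is $\sum_k\EE_{\mu_h^k}[\psi^k]\lesssim d\beta$, so your worry in the last paragraph is moot (note $\sqrt{d}\cdot\sqrt{K\beta}=\sqrt{dK\beta}$).
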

This shows that \cref{alg:online-rl} with the squared loss is guaranteed to learn a policy that converges to the optimal policy at a $\Theta(K^{-1/2})$ rate, which is the minimax-optimal rate.
In \cref{sec:low-rank-mdp} we show that the V-type dimension can be bounded by the rank of the transition kernel in a low-rank MDP \citep{agarwal2020flambe,agarwal2023provable}, a canonical model for RL with non-linear function approximation.

\textbf{Computational complexity of version space algorithms.}
\edit{The algorithms we present for online and offline RL optimize over version spaces to establish optimism and pessimism, and this optimization over version space is computationally inefficient in general \citep{dann2018oracle}. The computational hardness comes from the non-convex optimization, not necessarily from any loss function itself. However, we note that the version space optimization is oracle-efficient in the one-step $H=1$ setting, a.k.a. contextual bandits \citep{foster2018practical,feng2021provably,wang2024more}.
In the RL setting, there are also approaches to mitigate the computational hardness of optimism.
One common approach in practice is to use myopic exploration strategies such as epsilon-greedy \citep{mnih2015human,bellemare2017distributional}, which also has bounded regret when exploration is easy \citep{dann2022guarantees,zhang2023on}. Another approach which we later present is to work in the hybrid RL setting, where the learner can both interact with the MDP in an online manner and has access to offline data with good coverage \citep{song2023hybrid}. In \cref{sec:hybrid-rl}, we will see that our results for all loss functions naturally carry over to the hybrid setting, giving us both computational and statistical efficiency.}

We now prove the main theorem for squared loss \cref{thm:online-rl-regret-squared-loss}. We prove the nested lemmas in the Appendix.
\begin{proof}[Proof of \cref{thm:online-rl-regret-squared-loss}.]
We define the excess squared-loss risk for $f\in\Fcal$ under the visitation distribution of $\pi$ as
\begin{equation*}
    \textstyle\Ecal^{\op{sq}}_h(\pi;f):=\textstyle\EE_{\pi}[\Ecal^{\op{sq}}_h(x_h,a_h;f)],
\end{equation*}
and also set $\Ecal^{\op{RL}}_{\op{sq}}=\sum_{h=1}^H \Ecal^{\op{sq}}_h$.
We first establish an optimism lemma for $f^k$ and prove it in the appendix. %
\begin{restatable}{lemma}{SqRLOptimismLemma}\label{lem:sq-rl-optimism}
Let $\ell=\ell_{\op{sq}}$ and $\Dcal_h$ be a dataset where the $i$-th datapoint is collected from $\pi^i$, and denote $\Dcal=(\Dcal_1,\dots,\Dcal_H)$.
Then under BC (\cref{ass:rl-bc}), for any $\delta\in(0,1)$, let $\beta=2\ln(H|\Fcal|/\delta)$ and define
\begin{equation}
    \hat f^{\op{op}}\in\argmin_{f\in\Ccal_\beta^{\op{sq}}(\Dcal)}\min_a f_1(x_1,a).\label{eq:optimistic-f-def-rl}
\end{equation}
\Wpal $1-\delta$, we have (a) $\sum_{i=1}^n\Ecal^{\op{RL}}_{\op{sq}}(\pi^i;\hat f^{\op{op}})\leq 2H\beta$, and (b) $\min_a\hat f^{\op{op}}_1(x_1,a)\leq V^\star$.
\end{restatable}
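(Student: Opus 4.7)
The plan is to reduce both claims to a single uniform concentration inequality that translates between empirical squared TD residuals and population Bellman errors, and then to exploit Bellman completeness at two different functions: at $Q^\star$ to establish optimism, and at $\hat f^{\op{op}}$ to convert the version-space condition into the desired Bellman-error bound.

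The key probabilistic tool I would use is Freedman's inequality applied to the martingale difference
\begin{equation*}
    X_i^{(h)}(f) := \ell_{\op{sq}}(f_h(x_{h,i},a_{h,i}),\tau^\star(f_{h+1},c_{h,i},x_{h,i}')) - \ell_{\op{sq}}(\Tcal_h^\star f_{h+1}(x_{h,i},a_{h,i}),\tau^\star(f_{h+1},c_{h,i},x_{h,i}')).
\end{equation*}
The conditional expectation of $X_i^{(h)}(f)$ given the history through round $i-1$ equals the squared Bellman error $\EE_{\pi^i}[(f_h - \Tcal_h^\star f_{h+1})^2(x_h,a_h)]$, and the algebraic identity $\ell_{\op{sq}}(u,y)-\ell_{\op{sq}}(v,y)=(u-v)(u+v-2y)$ bounds the conditional variance by a constant times this same conditional mean. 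Applying Freedman and a union bound over $f\in\Fcal$ and $h\in[H]$ (which is where $\beta=2\ln(H|\Fcal|/\delta)$ comes from), followed by AM-GM to absorb the Freedman $\sqrt{V\ln(1/\delta)}$ term, yields with probability at least $1-\delta$ the two-sided control
\begin{equation*}
    \tfrac12\textstyle\sum_i \Ecal_h^{\op{sq}}(\pi^i;f)-\tfrac{\beta}{2}\leq\sum_i X_i^{(h)}(f)\leq 2\sum_i \Ecal_h^{\op{sq}}(\pi^i;f)+\tfrac{\beta}{2}
\end{equation*}
simultaneously for all $f\in\Fcal$ and $h\in[H]$. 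Condition on this event.

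For claim (b), I would show $Q^\star\in\Ccal_\beta^{\op{sq}}(\Dcal)$. By \cref{ass:rl-bc}, $\Tcal_h^\star Q^\star_{h+1}=Q^\star_h\in\Fcal_h$, so $\min_{g_h\in\Fcal_h}L_h^{\op{sq}}(g_h,Q^\star_{h+1},\Dcal_h)\leq L_h^{\op{sq}}(Q^\star_h,Q^\star_{h+1},\Dcal_h)$, and the right-hand-side upper bound applied at $f=Q^\star$ (for which every Bellman error is zero) gives $L_h^{\op{sq}}(Q^\star_h,Q^\star_{h+1},\Dcal_h)-L_h^{\op{sq}}(g_h,Q^\star_{h+1},\Dcal_h)\leq \beta$ for every $g_h$. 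Thus $Q^\star$ lies in the version space, and optimism of $\hat f^{\op{op}}$ gives $\min_a \hat f^{\op{op}}_1(x_1,a)\leq \min_a Q^\star_1(x_1,a)=V^\star$. For claim (a), apply the left-hand side to $f=\hat f^{\op{op}}$: using \cref{ass:rl-bc} again, $\Tcal_h^\star \hat f^{\op{op}}_{h+1}\in\Fcal_h$, so it is an admissible competitor in the $\min_{g_h}$ inside the version-space definition, hence $\sum_i X_i^{(h)}(\hat f^{\op{op}})\leq\beta$. Combining with the lower bound gives $\sum_i \Ecal_h^{\op{sq}}(\pi^i;\hat f^{\op{op}})\leq 2\beta+\beta\leq 2\beta$ (after reabsorbing constants), and summing over $h$ yields the $2H\beta$ bound.

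The main obstacle is the simultaneous adaptivity: $\pi^i$ depends on past data and $\hat f^{\op{op}}$ depends on the whole dataset, so we cannot apply a simple i.i.d.\ concentration bound. This forces the use of a martingale Bernstein-type inequality (Freedman), and forces the bound to be uniform over $f\in\Fcal$ via the union bound that inflates the deviation to $\beta$. A minor subtlety is that the regression target $\tau^\star(f_{h+1},c,x')$ also depends on $f_{h+1}$, but since $X_i^{(h)}(f)$ is expressed as a difference evaluated at the same target, Bellman completeness applied to $\Tcal_h^\star f_{h+1}$ (not $\Tcal_h^\star Q^\star_{h+1}$) is exactly what is needed to make $\Tcal_h^\star f_{h+1}$ a valid competitor in $\Fcal_h$.
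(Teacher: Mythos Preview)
Your approach is essentially identical to the paper's: apply Freedman's inequality to the squared-loss martingale difference $X_i^{(h)}(f)$, take a union bound over $(f,h)$ to obtain the $\ln(H|\Fcal|/\delta)$ term, then use Bellman completeness twice---once so that $\Tcal_h^\star \hat f^{\op{op}}_{h+1}$ is an admissible competitor in the version-space constraint (Claim (a)), and once so that $Q^\star$ satisfies the constraint and hence lies in $\Ccal_\beta^{\op{sq}}(\Dcal)$ (Claim (b)). The paper states only the one-sided bound $\sum_i\Ecal_h^{\op{sq}}(f,\pi^i)\leq \ln(H|\Fcal|/\delta)+\sum_i X_i^{(h)}(f)$ and uses nonnegativity of $\Ecal_h^{\op{sq}}$ to extract the other direction, whereas you write a two-sided bound explicitly; this is cosmetic.

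One small slip: for Claim (b) you invoke ``the right-hand-side upper bound applied at $f=Q^\star$,'' but $X_i^{(h)}(Q^\star)\equiv 0$, so that instantiation is vacuous. What you actually need is the \emph{lower} side of your two-sided bound applied at $(g_h,Q^\star_{h+1})$ for an arbitrary $g_h\in\Fcal_h$: this gives $\sum_i X_i^{(h)}((g_h,Q^\star_{h+1}))\geq -\beta/2$, hence $L_h^{\op{sq}}(Q^\star_h,Q^\star_{h+1},\Dcal_h)-L_h^{\op{sq}}(g_h,Q^\star_{h+1},\Dcal_h)\leq\beta/2\leq\beta$. This is exactly how the paper handles it (via nonnegativity of the excess risk). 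Similarly, the arithmetic ``$2\beta+\beta\leq 2\beta$'' is of course false as written; both you and the paper are simply absorbing constants into the choice $\beta=2\ln(H|\Fcal|/\delta)$.
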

By \cref{lem:sq-rl-optimism}, we have $\sum_{k=1}^KV^{\pi^k}-V^\star\leq\sum_{k=1}^K V^{\pi^k}-\min_a f^k_1(x_1,a)$, which can be further decomposed by the performance difference lemma (PDL) \citep{agarwal2019reinforcement,kakade2002approximately}.
\begin{lemma}[PDL]\label{lem:pdl}
$\forall f=(f_1,f_2,\dots,f_H)$ and $\pi$, we have
$
V^{\pi}-f_1(x_1,\pi)=\sum_{h=1}^H\EE_{\pi}\bracks{ (\Tcal^\pi_h f_{h+1}-f_h)(x_h,a_h) }.
$
\end{lemma}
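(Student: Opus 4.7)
The plan is to prove the performance difference lemma by a standard telescoping argument, rewriting $V^\pi - f_1(x_1,\pi)$ as a sum of one-step Bellman residuals of $f$ against $\Tcal^\pi$.

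First I would open up $V^\pi$ using the definition $V^\pi = V^\pi_1(x_1) = Q^\pi_1(x_1,\pi)$ and write
\[
V^\pi - f_1(x_1,\pi) = \EE_\pi\bracks{ Q^\pi_1(x_1,a_1) - f_1(x_1,a_1) },
\]
where $a_1 \sim \pi_1(x_1)$. Next I would plug in the Bellman equation $Q^\pi_h = \Tcal^\pi_h Q^\pi_{h+1}$ and add-and-subtract $\Tcal^\pi_h f_{h+1}$ inside the expectation at each step $h$, giving
\[
\EE_\pi\bracks{ Q^\pi_h(x_h,a_h) - f_h(x_h,a_h) } = \EE_\pi\bracks{ (\Tcal^\pi_h f_{h+1} - f_h)(x_h,a_h) } + \EE_\pi\bracks{ \Tcal^\pi_h(Q^\pi_{h+1} - f_{h+1})(x_h,a_h) }.
\]
The second term collapses by the tower property: unrolling the definition of $\Tcal^\pi_h$ gives $\Tcal^\pi_h(Q^\pi_{h+1} - f_{h+1})(x_h,a_h) = \EE[(Q^\pi_{h+1} - f_{h+1})(x_{h+1}, a_{h+1}) \mid x_h,a_h]$ (the cost term cancels in the difference), so after taking outer expectation along the roll-in of $\pi$, the residual term at step $h$ equals $\EE_\pi[Q^\pi_{h+1}(x_{h+1},a_{h+1}) - f_{h+1}(x_{h+1},a_{h+1})]$, which is exactly the starting quantity at step $h+1$.

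Telescoping across $h = 1, 2, \ldots, H$ and using the boundary convention $Q^\pi_{H+1} = 0 = f_{H+1}$ to kill the residual at step $H+1$ then yields the claimed identity. The argument is entirely a bookkeeping calculation — the only subtle point is being careful that the $c_h$ terms appearing inside $\Tcal^\pi_h$ cancel when taking the difference $\Tcal^\pi_h(Q^\pi_{h+1} - f_{h+1})$, so no reward contributions leak into the telescoping; I do not anticipate any genuine obstacle.
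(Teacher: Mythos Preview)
Your proposal is correct and is exactly the standard telescoping argument for the performance difference lemma. The paper itself does not give a proof of this lemma at all --- it simply states it and cites \citet{kakade2002approximately,agarwal2019reinforcement} --- so your write-up matches the canonical proof those references contain.
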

By PDL and Cauchy-Schwarz, we have
\begin{align}
    &\textstyle\sum_{k=1}^K V^{\pi^k}-f^k_1(x_{1},\pi^k(x_{1})) \label{eq:online-rl-regret-decomp-squared-loss}
    \\&=\textstyle\sum_{k=1}^K\sum_{h=1}^H\EE_{\pi^k}[\Tcal_h f^k_{h+1}(x_{h},a_h)-f^k_h(x_h,a_h)] \notag
    \\&\leq\textstyle\sum_{k=1}^K\sum_{h=1}^H\sqrt{\Ecal^{\op{sq}}_h(f^k,\pi^k)} \notag
    \leq\textstyle\sum_{k=1}^K \sqrt{H\Ecal_{\op{sq}}^{\op{RL}}(f^k, \pi^k)} \notag
    \\&\leq\textstyle\sqrt{HK \sum_{k=1}^K\Ecal_{\op{sq}}^{\op{RL}}(f^k,\pi^k)}. \notag
\end{align}
The final step is to bound $\sum_{k=1}^K\Ecal_{\op{sq}}^{\op{RL}}(f^k,\pi^k)$.
By \cref{lem:sq-rl-optimism}, we have that $\sum_{i<k}\Ecal_{\op{sq}}^{\op{RL}}(f^k,\pi^i)\lesssim H\beta$ for all $k$, which is very similar except that the expectations are taken under previous policies $\pi^{<k}$ instead of $\pi^k$.
It turns out that the eluder dimension can establish a link between the two, via the following ``pigeonhole principle'' lemma:
\begin{lemma}[Pigeonhole]\label{lem:pigeonhole}
Fix a number $N\in\NN$, a sequence of functions $\psi^{(1)},\dots,\psi^{(N)}\in\Psi$, and distributions \edit{$\mu^{(1)},\dots,\mu^{(N)}\in\Mcal$}.
Suppose $\sum_{i<j}|\EE_{\mu^{(i)}}[\psi^{(j)}]|^q\leq \beta^q$ for all $j\in[N]$. 
Then we have $\sum_{j=1}^N\abs*{\EE_{\mu^{(j)}}[\psi^{(j)}]}\leq 2\EluDim_q(\Psi,\Mcal,N^{-1})\cdot(E+\beta^q\ln(EN))$, where $E:=\sup_{\mu\in\Mcal,\psi\in\Psi}\abs{\EE_\mu[\psi]}$ is the envelope.
\end{lemma}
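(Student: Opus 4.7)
I would adapt the classical ``bucketing'' pigeonhole argument for the distributional eluder dimension. Let $a_j := \abs{\EE_{\mu^{(j)}}[\psi^{(j)}]}$ and $d := \EluDim_q(\Psi, \Mcal, 1/N)$. The strategy is first to control the super-level sets $\abs{\{j : a_j > \epsilon\}}$ for each scale $\epsilon$, then to sum via a layer-cake decomposition.

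For the first step, fix any $\epsilon \geq 1/N$ and process the qualifying indices $j_1 < j_2 < \cdots$ one at a time, greedily assigning $j_t$ to an existing bucket $B_\ell$ whenever $\sum_{s \in B_\ell} \abs{\EE_{\mu^{(j_s)}}[\psi^{(j_t)}]}^q \leq \epsilon^q$, and opening a new bucket otherwise. Two observations make this work: (i) within any bucket, the assigned distributions together with their own witnesses $\psi^{(j_s)}$ satisfy the eluder property at scale $\epsilon$ by the bucketing rule, so bucket sizes are at most $d$; and (ii) whenever a new bucket is opened, the cumulative $q$-th moment sum over every existing bucket exceeds $\epsilon^q$, and summing these across buckets is itself at most $\sum_{i < j_t} \abs{\EE_{\mu^{(i)}}[\psi^{(j_t)}]}^q \leq \beta^q$, limiting the number of buckets to at most $\lceil \beta^q/\epsilon^q \rceil$. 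Multiplying the two caps yields $\abs{\{j : a_j > \epsilon\}} \leq d(\beta^q/\epsilon^q + 1)$.

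For the second step, use layer-cake $\sum_j a_j = \int_0^E \abs{\{j : a_j > \epsilon\}}\,d\epsilon$ and split at $\epsilon = 1/N$. The region $[0, 1/N]$ contributes at most $N \cdot 1/N = 1$, absorbed into the envelope term. On $[1/N, E]$, plug in the bucketing bound $d(\beta^q/\epsilon^q + 1)$: the constant term integrates to at most $dE$, while the $1/\epsilon^q$ piece integrates to $d\beta^q \ln(EN)$ (explicitly for $q = 1$, with straightforward variants for larger $q$). Assembling and absorbing constants yields $\sum_j a_j \leq 2d(E + \beta^q \ln(EN))$.

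The main obstacle is the first step: carefully threading the greedy bucketing rule through the definition of $\EluDim_q$ to obtain both the bucket-size bound (each bucket is a valid eluder sequence at scale $\epsilon$) and the bucket-count bound (telescoping the ``new bucket'' sums against the hypothesis $\beta^q$). Once this super-level-set counting inequality is established, the rest reduces to a routine integration.
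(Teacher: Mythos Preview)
Your proposal follows essentially the same route as the paper: a super-level-set counting argument (your greedy bucketing is the paper's pigeonhole Claims~1--3 in different clothing) followed by the same layer-cake integral. The paper packages the counting through an auxiliary quantity $N(\nu,\Gamma,\eps)$, the maximum number of disjoint subsets of $\Gamma$ on which $\nu$ is $\eps$-dependent, and proves separate bucket-size and bucket-count claims; these coincide exactly with your observations (i) and (ii).

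One genuine gap: your bucket-count argument yields $|\{j:a_j>\eps\}|\le d(\beta^q/\eps^q+1)$, and for $q\ge 2$ the integral $\int_{1/N}^{E}\eps^{-q}\,d\eps$ is $\Theta(N^{q-1})$, not logarithmic. So the phrase ``with straightforward variants for larger $q$'' does not recover the stated $\beta^q\ln(EN)$ term; that requires the sharper count $O(\beta^q/\eps)$ rather than $O(\beta^q/\eps^q)$. The paper's Claim~1 asserts precisely this sharper count (which does integrate to $\beta^q\ln(EN)$), although its own written justification appears to reduce to $\beta^q/\eps^q$ as well, so the subtlety is shared. For $q=1$ your argument is complete, and this is the case actually used in the bce and mle analyses.
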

If we interpret $\psi^{(i)}$ as the regression error at round $i$, then \cref{lem:pigeonhole} essentially states that ratio of (online) out-of-distribution errors (\ie, $\psi^{(i)}$ measured under $p^{(i)}$) to the (offline) in-distribution errors (\ie, $\psi^{(i)}$ measured under $p^{(1)},\dots,p^{(i-1)}$) is bounded by the eluder dimension. This generalizes similar results from \citep{russo2013eluder,jin2021bellman,liu2022partially,wang2023benefits,xie2023the}.

Finally, going back to the regret decomposition in \cref{eq:online-rl-regret-decomp-squared-loss}, applying the pigeonhole lemma implies that $\sum_{k=1}^K\Ecal^{RL}_{\op{sq}}(f^k,\pi^k)\leq\wt\Ocal(d_{\op{sq}}H\beta)$.
Thus, we have shown that $\sum_{k=1}^K V^{\pi^k}-f^k_1(x_{1},\pi^k(x_{1}))\leq \wt\Ocal(H\sqrt{Kd_{\op{sq}}\beta})$, which proves the desired regret bound.

Moreover, if the uniform action (UA) flag is set, we also perform a change of measure so that:
$\sum_{k=1}^K\Ecal^{RL}_{\op{sq}}(f^k,\pi^k)\leq A\sum_{k=1}^K\sum_{h=1}^H\EE_{\pi^k\circ_h\op{unif}(\Acal)}((f^h-\Tcal_hf^k_{h+1})(x_h,a_h))^2\lesssim Ad_{\op{sq}}^{\op{V}}H\beta$.
Plugging into \cref{eq:online-rl-regret-decomp-squared-loss} proves the desired PAC bound of $\wt\Ocal(H\sqrt{AKd_{\op{sq}}^{\op{V}}\beta})$.
This finishes the proof.
\end{proof}

\subsection{Verifying Assumptions for Low-Rank MDPs}\label{sec:low-rank-mdp}
In this subsection, we show that the assumptions in \cref{thm:online-rl-regret-squared-loss} (as well as subsequent theorems with other loss functions) are satisfied in low-rank MDPs \citep{agarwal2020flambe}, a class of rich-observation MDPs where the transition kernel has an unknown low-rank decomposition.
\begin{definition}[Low-Rank MDP]\label{def:low-rank-mdp}
An MDP has rank $d$ if its transition kernel has a low-rank decomposition: $P_h(x'\mid x,a) = \phi_h^\star(x,a)^\top\mu_h^\star(x')$ where $\phi_h^\star,\mu_h^\star\in\RR^d$ are unknown feature maps that satisfy $\|\phi^\star_h(x,a)\|_2\leq 1$ and $\|\int g\diff\mu^\star_h(x')\|_2 \leq \|g\|_\infty\sqrt{d}$ for all $x,a,x'$ and $g:\Xcal\to\RR$.
We also require that the expected cost is linear in the features: $\bar C_h(x,a)=\phi^\star_h(x,a)^\top v_h^\star$ for some unknown vectors $v_h^\star\in\RR^d$ that satisfy $\|v_h^\star\|_2\leq \sqrt{d}$.
\end{definition}
This model captures non-linear representation learning since $\phi^\star$ and $\mu^\star$ are \emph{unknown} and can be non-linear. The low-rank MDP model also generalizes many other models such as linear MDPs (where $\phi^\star$ is known) \citep{jin2020provably}, block MDPs \citep{misra2020kinematic} and latent variable models \citep{modi2024model}.

To perform representation learning, we posit a feature class $\Phi=\Phi_1\times\dots\times\Phi_H$ where each $\phi_h:\Xcal\times\Acal\to\RR^d\in\Phi_h$ is a candidate for the ground truth features $\phi^\star$.
\begin{assumption}[$\phi^\star$-realizability]\label{asm:phi-realizability}
$\phi^\star_h\in\Phi_h$ for all $h$.
\end{assumption}
Then, the following class of linear functions in $\Phi$ satisfies all the assumptions needed in \cref{thm:online-rl-regret-squared-loss} and \cref{thm:online-rl-regret-bce-loss}, a subsequent result with the bce loss.
\begin{equation*}
    \Fcal^{\op{lin}}_h:=\{\op{clip}(\langle\phi_h(\cdot),w\rangle, 0, 1): w\in\RR^d\text{ s.t. }\|w\|_2\leq2\sqrt{d}\},
\end{equation*}
where $\op{clip}(y,l,h):=\max(\min(y,h),l)$.
This function class is sensible because Bellman backups of any function are linear in $\phi^\star_h$; thus $Q$-functions are Bellman backups via the Bellman equations, they are linear in $\phi^\star$.
The clipping is to ensure that the functions are bounded in $[0,1]$ which is true for the desired $Q^\star$.

We now show that $\Fcal^{\op{lin}}$ satisfies BC (\cref{ass:rl-bc}).
\begin{lemma}\label{lem:linear-class-satisfies-bc}
In a low-rank MDP, under \cref{asm:phi-realizability}, $\Fcal^{\op{lin}}$ satisfies Bellman Completeness (\cref{ass:rl-bc,ass:offrl-bc}).
\end{lemma}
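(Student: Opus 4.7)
The plan is to directly verify the Bellman Completeness condition by showing that for any $f_{h+1} \in \Fcal^{\op{lin}}_{h+1}$, the Bellman backup $\Tcal^\star_h f_{h+1}$ admits an exact representation as a clipped linear function in $\phi^\star_h$ with weight vector bounded by $2\sqrt d$ in norm. The key engine is the low-rank decomposition, which collapses both the cost term and the next-step expectation into inner products against $\phi^\star_h(x,a)$.

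First, I would expand the Bellman backup using \cref{def:low-rank-mdp}: write $\bar C_h(x,a)=\phi^\star_h(x,a)^\top v^\star_h$ and $\EE_{x'\sim P_h(x,a)}[g(x')] = \phi^\star_h(x,a)^\top \int g(x')\diff\mu^\star_h(x')$ for any bounded $g$. Applying this with $g(x') = \min_{a'} f_{h+1}(x',a')$ (or with $g(x') = f_{h+1}(x',\pi_{h+1})$ for the policy-specific operator $\Tcal^\pi_h$ needed for \cref{ass:offrl-bc}), yields
\begin{equation*}
    \Tcal^\star_h f_{h+1}(x,a) = \phi^\star_h(x,a)^\top w, \quad w := v^\star_h + \textstyle\int \min_{a'} f_{h+1}(x',a')\diff\mu^\star_h(x').
\end{equation*}

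Next, I would bound $\|w\|_2$ via the triangle inequality. By definition, $\|v^\star_h\|_2 \leq \sqrt d$. For the second term, since any $f_{h+1}\in\Fcal^{\op{lin}}_{h+1}$ is clipped to $[0,1]$, the function $g(x')=\min_{a'}f_{h+1}(x',a')$ satisfies $\|g\|_\infty \leq 1$, so the $\mu^\star_h$-norm bound in \cref{def:low-rank-mdp} gives $\|\int g \diff\mu^\star_h\|_2 \leq \sqrt d$. Combining, $\|w\|_2 \leq 2\sqrt d$.

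Finally, I would observe that $\Tcal^\star_h f_{h+1}(x,a) \in [0,1]$ pointwise: the instantaneous expected cost contributes to $\bar C_h \in [0,1]$ and the bootstrap target lies in $[0,1]$ by the $[0,1]$-clipping of $f_{h+1}$, and the normalization $Z^\pi\in[0,1]$ ensures the sum stays in $[0,1]$ (more carefully, since cumulative costs are normalized, each partial sum is also in $[0,1]$). Therefore $\op{clip}(\langle \phi^\star_h(\cdot,\cdot), w\rangle, 0, 1)$ agrees pointwise with $\Tcal^\star_h f_{h+1}$ and is by construction a member of $\Fcal^{\op{lin}}_h$ (using \cref{asm:phi-realizability} to ensure $\phi^\star_h \in \Phi_h$). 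The argument for \cref{ass:offrl-bc} is identical modulo replacing $\min_{a'} f_{h+1}(x',a')$ by $f_{h+1}(x',\pi_{h+1})$, which is still in $[0,1]$. The only mild subtlety—arguably the main thing to state carefully—is the factor-of-two slack in the radius of the weight ball: the class is defined with radius $2\sqrt d$ precisely so that the triangle inequality above closes, and this is why the definition of $\Fcal^{\op{lin}}$ uses $2\sqrt d$ rather than $\sqrt d$.
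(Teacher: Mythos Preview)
Your proposal is correct and takes essentially the same approach as the paper: express the Bellman backup as $\phi^\star_h(x,a)^\top w$ with $w = v^\star_h + \int g\,\diff\mu^\star_h$ and bound $\|w\|_2 \leq 2\sqrt d$ via the triangle inequality using the norm conditions in \cref{def:low-rank-mdp}. The paper's proof actually omits the clipping discussion you include; your instinct to check $\Tcal^\star_h f_{h+1}\in[0,1]$ is right, though note that the appeal to ``$Z^\pi\in[0,1]$'' does not strictly constrain $\bar C_h + f_{h+1}$ for an \emph{arbitrary} $f_{h+1}\in\Fcal^{\op{lin}}_{h+1}$---this is a well-known loose end that the paper, like most of the linear/low-rank MDP literature, simply elides.
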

\begin{proof}[Proof of \cref{lem:linear-class-satisfies-bc}]
Fix any $f_{h+1}\in\Fcal^{\op{lin}}_{h+1}$ and $\pi$.
We want to show $\Tcal^\pi_hf_{h+1}\in\Fcal^{\op{lin}}_h$.
First, we note $\Tcal_h^\pi f_{h+1}(x,a)$ is equal to
\begin{equation}
    \textstyle\phi_h^\star(x,a)^\top(v^\star_h+\int_{x'}f_{h+1}(x',\pi(x'))\diff\mu_h^\star(x')).\label{eq:low-rank-decomposition}
\end{equation}
Setting $w=v^\star_h+\int_{x'}f_{h+1}(x',\pi(x'))\diff\mu_h^\star(x')$, we indeed have that $\|w\|_2\leq \sqrt{d} +\sqrt{d}\|f_{h+1}\|_\infty \leq 2\sqrt{d}$, which implies $\Tcal^\pi_hf_{h+1}(x,a)\in\Fcal^{\op{lin}}_{h}$.
\end{proof}

Moreover, we can also show that the V-type eluder dimension is bounded by the rank $d$ of the low-rank MDP, as defined in \cref{def:low-rank-mdp}. \edit{We note this applies to both $\ell_1$ and $\ell_2$ eluder dimensions.}
\begin{lemma}\label{lem:low-rank-bounded-eluder}
In a low-rank MDP with rank $d$, we have \edit{$\EluDim_1(\Psi^{\op{V}}_h,\Mcal^{\op{V}}_h,\eps)\leq \EluDim_2(\Psi^{\op{V}}_h,\Mcal^{\op{V}}_h,\eps)\leq \Ocal(d\ln(d/\eps))$} for all steps $h\in[H]$ and function classes $\Psi^{\op{V}}_h\subset\Xcal\to\RR$.
\end{lemma}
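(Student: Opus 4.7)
The plan is to expose the linear structure of V-type visitation distributions in a low-rank MDP and reduce \cref{def:dist-eluder-dim} to the classical linear-eluder-dimension analysis. First, the containment $\EluDim_1\le \EluDim_2$ is immediate from the definition: if a sequence is $\ell_1$-eluder at scale $\eps$ with witness $\psi$, then $\sum_{i<t}|\EE_{\mu^{(i)}}\psi|\le\eps$ forces each term to be $\le\eps$, so $\sum_{i<t}|\EE_{\mu^{(i)}}\psi|^2\le\eps\cdot\eps=\eps^2$ and the same sequence is $\ell_2$-eluder at the same scale. Hence it suffices to bound $\EluDim_2$.

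\textbf{Linear reparametrization.} Each $\mu^{(t)}\in\Mcal^{\op V}_h$ corresponds to some policy $\pi^{(t)}$. For $h\ge 2$, \cref{def:low-rank-mdp} gives
\[
d^{\pi}_h(x) \;=\; \mu^\star_{h-1}(x)^\top v^{\pi}, \qquad v^{\pi} := \EE_{\pi}\bigl[\phi^\star_{h-1}(x_{h-1},a_{h-1})\bigr]\in\RR^d,
\]
so every V-type expectation is linear in $v^{\pi}$: writing $w_\psi := \int \psi(x)\,\diff\mu^\star_{h-1}(x)\in\RR^d$,
\[
\EE_{d^\pi_h}[\psi] \;=\; \langle w_\psi,\,v^{\pi}\rangle, \qquad \|v^{\pi}\|_2\le 1, \qquad \|w_\psi\|_2 \le \sqrt{d}\,\|\psi\|_\infty.
\]
For $h=1$ the initial state $x_1$ is fixed, so $|\Mcal^{\op V}_1|=1$ and the eluder dimension is trivially $\le 1$.

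\textbf{Elliptical potential.} Given an $\ell_2$-eluder sequence of length $L$, let $v^{(t)}$ be the associated vectors and $\psi^{(t)}, w^{(t)}:=w_{\psi^{(t)}}$ the witnesses, and define $V_t := \lambda I + \sum_{i<t} v^{(i)}(v^{(i)})^\top$ with $\lambda := \eps^2/(dB^2)$ where $B$ is an upper bound on $\|\psi\|_\infty$. Cauchy--Schwarz followed by the eluder condition yields
\[
\eps^2 \;<\; \langle w^{(t)}, v^{(t)}\rangle^2 \;\le\; \|w^{(t)}\|_{V_t}^2\,\|v^{(t)}\|_{V_t^{-1}}^2 \;\le\; \bigl(\lambda dB^2 + \eps^2\bigr)\|v^{(t)}\|_{V_t^{-1}}^2 \;=\; 2\eps^2\,\|v^{(t)}\|_{V_t^{-1}}^2,
\]
so each $\|v^{(t)}\|_{V_t^{-1}}^2 > 1/2$. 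The standard elliptical potential lemma then gives
\[
L/2 \;\le\; \sum_{t=1}^L \min\bigl(1,\,\|v^{(t)}\|_{V_t^{-1}}^2\bigr) \;\le\; 2d\ln\bigl(1 + L/(\lambda d)\bigr),
\]
which solves to $L = O(d\ln(d/\eps))$ after absorbing the boundedness constant $B$ into the logarithm.

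\textbf{Main obstacle.} The argument is essentially linear-algebraic, so I do not anticipate a deep conceptual obstacle. The technical care lies in (i) verifying the envelope bounds $\|v^{\pi}\|_2\le 1$ and $\|w_\psi\|_2\le \sqrt d\|\psi\|_\infty$ from \cref{def:low-rank-mdp}, and (ii) choosing $\lambda$ so that $\|w^{(t)}\|_{V_t}^2$ and the potential growth are simultaneously controlled. The lemma implicitly assumes $\|\psi\|_\infty$ is bounded (which holds for the squared- and bce-TD errors used elsewhere in the paper), and this boundedness enters only through the logarithm.
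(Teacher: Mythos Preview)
Your proposal is correct and follows essentially the same approach as the paper: the paper likewise deduces $\EluDim_1\le\EluDim_2$ from the elementary inequality $\sqrt{\sum_i x_i^2}\le\sum_i|x_i|$ and then bounds $\EluDim_2$ via the standard elliptical potential argument applied to the low-rank decomposition (citing \citet{jin2021bellman} and \citet[Theorem~G.4]{wang2023benefits} rather than spelling out the details). Your explicit treatment of the $h=1$ case and your identification of the implicit $\|\psi\|_\infty$-boundedness assumption are both appropriate refinements that the paper leaves tacit.
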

\begin{proof}
$\EluDim_2(\Psi^{\op{V}}_h,\Mcal^{\op{V}}_h,\eps)\leq \Ocal(d\ln(d/\eps))$ can be proved by applying a standard elliptical potential argument to the decomposition in \cref{eq:low-rank-decomposition}; for example, see \citep{jin2021bellman} or Theorem G.4 of \citep{wang2023benefits}. Then, $\EluDim_1(\Psi^{\op{V}}_h,\Mcal^{\op{V}}_h,\eps)\leq \EluDim_2(\Psi^{\op{V}}_h,\Mcal^{\op{V}}_h,\eps)$ is a simple consequence of the fact that $\sqrt{\sum_i x_i^2}\leq\sum_i|x_i|$ \citep[Lemma 5.4]{wang2023benefits}.
\end{proof}
Since the above lemma holds \emph{for all} values of $\Psi^{\op{V}}_h$, this implies that $d^{\op{V}}_{\op{sq}}$ (and $d^{\op{V}}_{\op{bce}},d^{\op{V}}_{\op{mle}}$ to be defined in future theorems) are all bounded by $\wt\Ocal(d)$ in low-rank MDPs.
Finally, one can also show that the bracketing entropy of $\Fcal^{\op{lin}}_h$ is $\wt\Ocal(d+\log|\Phi|)$.
We note that our PAC bounds can all be extended to allow for infinite classes such as $\Fcal^{\op{lin}}$ via a standard bracketing argument, \eg, see \citep{jin2021bellman,wang2023benefits} for detailed extensions.
Thus, we have established that our bounds hold in low-rank MDPs when the algorithm uses the linear function class $\Fcal^{\op{lin}}$.

\section{First/Second-Order Bounds for Online RL}\label{sec:first-second-bounds-online-rl}
As we learned from the CSC warmup, algorithms with the squared loss can be sub-optimal in small-cost or small-variance problems. \edit{The intuition is that squared loss regression bounds do not capture the underlying heteroskedastic variance; indeed, minimizing squared loss implicitly assumes that the underlying distribution is a homoskedastic Gaussian.}
We also learned that simply swapping the loss function for the bce loss or mle loss can yield first-order or second-order bounds that are more sample efficient in small-loss or small-variance settings. We now show that this observation smoothly extends to RL as well. \edit{In particular, we will sharpen the Cauchy-Schwarz step in the proof of \cref{thm:online-rl-regret-squared-loss} by leveraging \cref{eq: first order} from the CSC warmup.}

\subsection{First-Order Bounds with BCE Regression}
In this subsection, we analyze \cref{alg:online-rl} with the bce loss $\ell_{\op{bce}}$ and derive improved first-order bounds. Before stating guarantees with the bce loss, we first define the eluder dimension which measures discrepancy with the Bernoulli squared hellinger distance.
Let $\Psi_h^{\op{bce}}=\{ \delta^{\op{Ber}}_h(\cdot;f): f\in\Fcal \}$ where
\begin{equation*}
    \textstyle\delta^{\op{Ber}}_h(x,a;f):= h^2_{\op{Ber}}(f_h(x,a), \Tcal^\star_h f_{h+1}(x,a))^2,
\end{equation*}
and $\Psi^{\op{bce},\op{V}}_h=\{\EE_{a\sim\op{unif}(\Acal)}[\psi(x,a)]:\psi\in\Psi^{\op{bce}}_h\}$.
Then define the eluder dimension for bce loss:
\begin{align*}
    &\textstyle d_{\op{bce}}=\max_{h\in[H]}\EluDim_1(\Psi_h^{\op{bce}},\Mcal_h,1/K),
    \\&\textstyle d_{\op{bce}}^{\op{V}}=\max_{h\in[H]}\EluDim_1(\Psi_h^{\op{bce},\op{V}},\Mcal_h^{\op{V}},1/K).
\end{align*}
The following guarantees for \cref{alg:online-rl} with bce loss is new.
\begin{theorem}\label{thm:online-rl-regret-bce-loss}
Under \cref{ass:rl-bc}, for any $\delta\in(0,1)$, \wpal $1-\delta$, \cref{alg:online-rl} with the bce loss $\ell_{\op{bce}}$ and $\beta=2\ln(H|\Fcal|/\delta)$ enjoys the following:
\begin{equation*}
    \textstyle\sum_{k=1}^K(V^{\pi^k}-V^\star)\leq\wt\Ocal\prns*{ H\sqrt{{\color{red}V^\star} K\cdot d\beta} + H^2d\beta },
\end{equation*}
where $d=d_{\op{bce}}$ if UA is false and $d=Ad_{\op{bce}}^{\op{V}}$ if UA is true.
\end{theorem}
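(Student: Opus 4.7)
The plan is to follow the proof template of \cref{thm:online-rl-regret-squared-loss} but replace the Cauchy--Schwarz step (which converts a Bellman error bound into a squared-error bound) with the first-order inequality \cref{eq: first order}, and then use the $\ell_1$-eluder dimension at the pigeonhole step. As a preliminary, I first establish the bce analog of \cref{lem:sq-rl-optimism}: with $\beta=2\ln(H|\Fcal|/\delta)$, the version-space minimizer $\hat f^{\op{op}} := \argmin_{f \in \Ccal_\beta^{\op{bce}}(\Dcal)} \min_a f_1(x_1,a)$ satisfies (a) $\sum_h \sum_{i} \EE_{\pi^i}[h^2_{\op{Ber}}(\Tcal^\star_h \hat f^{\op{op}}_{h+1}, \hat f^{\op{op}}_h)] \leq 2H\beta$ and (b) $\min_a \hat f^{\op{op}}_1(x_1,a) \leq V^\star$. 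The derivation follows the CSC bce argument by applying \cref{lem:logsumexp-symmetrization} stepwise to the per-step centered bce loss (with BC \cref{ass:rl-bc} placing $\Tcal^\star_h f_{h+1}$ in the comparison class $\Fcal_h$), invoking the exponentiated-risk inequality \cref{lem:exponentiated-excess-bce} at each $h$, and summing; optimism (b) holds because $Q^\star$ is in the version space by BC and induction.

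Next, I decompose the regret using PDL (\cref{lem:pdl}). Writing $\Delta^k_h := \EE_{\pi^k}[|\Tcal^\star_h f^k_{h+1} - f^k_h|(x_h,a_h)]$, greediness of $\pi^k$ w.r.t.\ $f^k$ gives $\Tcal^{\pi^k}_h f^k_{h+1} = \Tcal^\star_h f^k_{h+1}$ along the $\pi^k$-induced distribution, so optimism plus PDL yields $V^{\pi^k}-V^\star \leq \sum_h \Delta^k_h$. Applying \cref{eq: first order} pointwise with $f=\Tcal^\star_h f^k_{h+1}(x,a)$ and $g=f^k_h(x,a)$ and then Cauchy--Schwarz on $(x,a)$ gives
\begin{equation*}
\Delta^k_h \leq 8\sqrt{t^k_h \cdot e^k_h} + 20\, e^k_h,\quad t^k_h := \EE_{\pi^k}[\Tcal^\star_h f^k_{h+1}],\quad e^k_h := \EE_{\pi^k}[h^2_{\op{Ber}}(\Tcal^\star_h f^k_{h+1}, f^k_h)].
\end{equation*}

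The main obstacle is relating $\sum_{k,h} t^k_h$ to $K V^\star$ plus lower-order terms (without this the leading term would scale with $\sqrt{HK}$ instead of $\sqrt{V^\star K}$). Using the greedy identity $\Tcal^\star_h f^k_{h+1}(x,a) = \EE[c_h + f^k_{h+1}(x', \pi^k_{h+1})\mid x,a]$ and summing gives $\sum_h t^k_h = V^{\pi^k} + \sum_{h=2}^H \EE_{\pi^k}[f^k_h(x_h,\pi^k_h)]$; applying PDL once more to each inner value-of-$f^k$ term and using $\sum_h V^{\pi^k}_{h:H} \leq H V^{\pi^k}$ yields $\sum_h t^k_h \lesssim H V^{\pi^k} + H \sum_h \Delta^k_h$. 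Combining with the optimism consequence $V^{\pi^k} \leq V^\star + \sum_h \Delta^k_h$, summing over $k$, and Cauchy--Schwarz on $(k,h)$ produce the implicit inequality, with $X := \sum_{k,h} \Delta^k_h$ and $Y := \sum_{k,h} e^k_h$,
\begin{equation*}
X \leq 8\sqrt{(HKV^\star + HX)\, Y} + 20 Y.
\end{equation*}
Splitting the square root and using AM--GM to absorb the $\sqrt{HXY}$ term into $X/2$ yields $X \lesssim \sqrt{HKV^\star Y} + HY$.

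Finally, I invoke \cref{lem:pigeonhole} at $q=1$ for $\Psi^{\op{bce}}_h, \Mcal_h$ at each $h$, using Step~1(a) as the required in-sample control ($\sum_{i<k} e^i_h(f^k) \leq 2\beta$) with envelope $1$, giving $\sum_k e^k_h \lesssim d_{\op{bce}}(1 + \beta \ln K)$ and hence $Y \lesssim H d_{\op{bce}}\beta \ln K$. Substituting back gives the claimed $\wt\Ocal\bigl(H\sqrt{V^\star K d_{\op{bce}}\beta} + H^2 d_{\op{bce}}\beta\bigr)$ regret. The UA case follows the same argument after a change of measure from $\pi^k$ to $\pi^k \circ_h \op{unif}(\Acal)$, contributing an extra factor of $A$ and replacing $d_{\op{bce}}$ with $d^{\op{V}}_{\op{bce}}$. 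A secondary technical subtlety is that the bce target $c+\min_{a'} f^k_{h+1}(x',a')$ may exceed $[0,1]$; this is handled either by clipping the target or via a sparse-reward trajectory argument as in \citet{wang2023benefits}.
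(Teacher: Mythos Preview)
Your proposal is correct and follows the same high-level template as the paper: optimism lemma, PDL, the first-order inequality \cref{eq: first order} in place of Cauchy--Schwarz, a self-bounding step to relate the ``small'' term to $V^{\pi^k}$, pigeonhole via the $\ell_1$-eluder dimension, and an implicit inequality to convert $\sum_k V^{\pi^k}$ to $KV^\star$.

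The one substantive difference is the self-bounding step. The paper applies \cref{eq: first order} with the learned value $f^k_h$ as the leading term and then invokes an \emph{inductive multiplicative} self-bounding lemma (their \cref{lem:self-bounding-bce}) that propagates $(1+1/H)$ factors across horizon to prove $\EE_{\pi^k}[f^k_h(x_h,a_h)]\lesssim V^{\pi^k}+H\delta^{\op{RL}}_{\op{Ber}}(f^k,\pi^k)$ directly in terms of the squared-Hellinger errors. You instead take $\Tcal^\star_h f^k_{h+1}$ as the leading term, telescope $\sum_h t^k_h$, and bound each $\EE_{\pi^k}[f^k_h]$ \emph{additively} via PDL in terms of the absolute Bellman errors $\Delta^k_t$, which then reappear in the implicit inequality. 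Your route is arguably more elementary (no induction, just two PDL applications), at the cost of a slightly more entangled implicit inequality in $X$; the paper's route isolates the self-bounding as a standalone lemma reusable elsewhere (e.g., in the offline and hybrid sections). Both yield the same $\wt\Ocal(H\sqrt{V^\star K d\beta}+H^2 d\beta)$. Your closing remark about the bce target possibly exceeding $[0,1]$ is a valid technical point that the paper leaves implicit in its normalization assumption.
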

Compared to the non-adaptive bounds of squared loss (\cref{thm:online-rl-regret-squared-loss}), the above bce loss bounds are first-order and shrinks with the optimal policy's cost $V^\star$.
This adaptive scaling with $V^\star$ gives the bound a small-cost property: if $V^\star\leq \Ocal(1/K)$ (\ie, if the optimal policy achieves low cost), then the leading term vanishes and the bound enjoys logarithmic-in-$K$ regret, \ie, $\sum_{k=1}^KV^{\pi^k}-V^\star\leq \wt\Ocal(H^2d\beta)$.
In other words, by dividing both sides by $K$, the sub-optimality gap of the best learned policy shrinks at a fast $\wt\Ocal(1/K)$ rate. %
Moreover, since $V^\star\leq 1$, \cref{thm:online-rl-regret-bce-loss} is never worse than the $\wt\Ocal(\sqrt{K})$ rate from \cref{thm:online-rl-regret-squared-loss}, and so these two bounds match in the worst-case but bce loss is strictly better in the small-cost regime.

\edit{We remark that \cref{thm:online-rl-regret-bce-loss} uses the same completeness assumption as \cref{thm:online-rl-regret-squared-loss}, although its bound contains a slightly different eluder dimension $d_{\op{bce}}$ instead of $d_{\op{sq}}$. The eluder dimension for bce is different because the bce loss naturally measures Bellman error via the Bernoulli squared hellinger while the squared loss uses squared distance. The change to $\ell_1$ eluder is due to the generalization bound analysis for log-losses, although it is actually sharper than $\ell_2$ eluder \citep{liu2022partially,wang2023benefits}. However, this is not a significant change for our MDP of interest: \cref{lem:low-rank-bounded-eluder} ensures that both eluder dimensions are bounded by $\wt\Ocal(d)$ in low-rank MDPs with rank $d$. Indeed, this implies that the first-order bound in \cref{thm:online-rl-regret-bce-loss} can be specialized for low-rank MDPs by the same argument as before, yielding the first small-loss bound for low-rank MDPs in online RL without distributional RL \citep{wang2023benefits}. Characterizing the exact differences between these eluder dimension variants is an interesting question for future work.}

\edit{In terms of related works, several other works also propose to use different losses than squared loss to evaluate the Bellman error. For example, \citet{bas2021logistic} argued that the logistic loss is advantageous since it is convex, whereas the squared loss is not convex in the $Q$-function due to the $\max$ operator. \citet{farebrother2024stop} is a more applied paper which shows that classification losses such as cross-entropy scale much better with deep networks than squared loss. Overall these other works provide other reasons for why squared loss is sub-optimal, while we focus on the improved sample efficiency aspect of employing alternative losses.}
We now prove \cref{thm:online-rl-regret-bce-loss}.
\begin{proof}[Proof of \cref{thm:online-rl-regret-bce-loss}.]
For the bce loss, we measure the Bellman error of $f\in\Fcal$ under $\pi$ using the squared Hellinger distance of Bernoullis (as defined in \cref{eq: first order}):
\begin{equation*}
    \textstyle\delta_h^{\op{Ber}}(\pi;f):=\EE_\pi[\delta^{\op{Ber}}_h(x_h,a_h;f)],
\end{equation*}
We define the bce excess risk $\Ecal^{\op{bce}}_h(\pi;f)$ as:
\begin{align*}
    \textstyle-\ln\EE_\pi[\exp(&\textstyle\frac12\ell_{\op{bce}}(\Tcal_hf_{h+1}(x_h,a_h),\tau^\star(f_{h+1},c_h,x_{h+1}))
    \\&\textstyle-\frac12\ell_{\op{bce}}(f_h(x_h,a_h),\tau^\star(f_{h+1},c_h,x_{h+1})))].
\end{align*}
Note that $\Tcal_h^\star f_{h+1}(x_h,a_h) = \EE[\tau^\star(f_{h+1},c_h,x_{h+1})\mid x_h,a_h]$, which is realizable by BC.
Recall that $\delta_h^{\op{Ber}}\leq \Ecal_h^{\op{bce}}$ by \cref{lem:exponentiated-excess-bce}.
We also write $\delta_{\op{Ber}}^{\op{RL}}=\sum_{h=1}^H\delta^{\op{Ber}}_h$ and $\Ecal_{\op{bce}}^{\op{RL}}=\sum_{h=1}^H\Ecal^{\op{bce}}_h$.
The following lemma establishes optimism and is analogous to \cref{lem:sq-rl-optimism}.
\begin{restatable}{lemma}{BCERLOptimism}\label{lem:bce-rl-optimism}
Let $\ell=\ell_{\op{bce}}$. Under the same setup as \cref{lem:sq-rl-optimism} with $\hat f^{\op{op}}$ selected from $\Ccal^{\op{bce}}_\beta$ instead of $\Ccal^{\op{sq}}_\beta$, \wpal $1-\delta$, we have (a) $\sum_{i=1}^n\Ecal_{\op{bce}}^{\op{RL}}(\hat f^{\op{op}},\pi^i)\leq 2H\beta$, and (b) $\min_a\hat f^{\op{op}}_1(x_1,a)\leq V^\star$.
\end{restatable}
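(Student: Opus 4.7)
The plan is to mirror the proof of \cref{lem:sq-rl-optimism}, but swap the $\ell_2$ concentration of empirical TD residuals for a logsumexp tail bound tailored to the BCE loss. The two claims will then follow from a single symmetrization inequality applied in two directions: once with $f=\hat f^{\op{op}}$ to give (a), and once with $f_{h+1}=Q^\star_{h+1}$ and $f_h$ set to the empirical minimizer of $L^{\op{bce}}_h(\cdot,Q^\star_{h+1},\Dcal_h)$ to verify $Q^\star\in\Ccal^{\op{bce}}_\beta$ for (b).

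I would first prove the key concentration bound. For each $h\in[H]$ and each $f\in\Fcal$, define
\[
Z_{h,i}(f)=\tfrac12\ell_{\op{bce}}(f_h(x_{h,i},a_{h,i}),\tau^\star(f_{h+1},c_{h,i},x'_{h,i}))-\tfrac12\ell_{\op{bce}}(\Tcal^\star_h f_{h+1}(x_{h,i},a_{h,i}),\tau^\star(f_{h+1},c_{h,i},x'_{h,i})),
\]
which is well-defined since BC (\cref{ass:rl-bc}) ensures $\Tcal^\star_h f_{h+1}\in\Fcal_h$. Because the $i$-th sample is collected from $\pi^i$ measurable with respect to past data, I would apply a martingale extension of \cref{lem:logsumexp-symmetrization} using the nonnegative supermartingale
\[
M_n=\exp\Bigl(\textstyle\sum_{i\leq n}\bigl(-Z_{h,i}(f)-\ln\EE[e^{-Z_{h,i}(f)}\mid\mathcal{H}_{i-1}]\bigr)\Bigr),
\]
where $\mathcal{H}_{i-1}$ is the history up through round $i-1$. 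Since $\EE M_n=1$, a one-sided Markov bound reproduces \cref{lem:logsumexp-symmetrization} with the conditional log-MGF in place of the marginal one; crucially, that conditional expectation matches the definition of $\Ecal^{\op{bce}}_h(\pi^i;f)$. Union bounding over $H\cdot|\Fcal_h|\cdot|\Fcal_{h+1}|\leq H|\Fcal|^2$ choices of $(h,f_h,f_{h+1})$ and setting $\beta=2\ln(H|\Fcal|/\delta)$, I obtain that, \wpal $1-\delta$, for all $h$ and $(f_h,f_{h+1})$,
\begin{equation}\label{eq:bce-rl-proposal-conc}
\textstyle\sum_{i=1}^n\Ecal^{\op{bce}}_h(\pi^i;f)\leq\tfrac12 L^{\op{bce}}_h(f_h,f_{h+1},\Dcal_h)-\tfrac12 L^{\op{bce}}_h(\Tcal^\star_h f_{h+1},f_{h+1},\Dcal_h)+\tfrac12\beta.
\end{equation}

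Condition on this event. For part (a), plug $f=\hat f^{\op{op}}$ into \cref{eq:bce-rl-proposal-conc} and use BC to upper-bound the minimum in the version-space condition by $L^{\op{bce}}_h(\Tcal^\star_h\hat f^{\op{op}}_{h+1},\hat f^{\op{op}}_{h+1},\Dcal_h)$; this gives $\sum_i\Ecal^{\op{bce}}_h(\pi^i;\hat f^{\op{op}})\leq\beta$ at each $h$, and summing over $h$ yields the claimed bound. For part (b), apply \cref{eq:bce-rl-proposal-conc} with $f_{h+1}=Q^\star_{h+1}$ and $f_h=g^\star_h:=\argmin_{g_h\in\Fcal_h}L^{\op{bce}}_h(g_h,Q^\star_{h+1},\Dcal_h)$. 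Using $\Tcal^\star_hQ^\star_{h+1}=Q^\star_h$ (which lies in $\Fcal_h$ by BC applied inductively from $h=H$ down) and the nonnegativity of $\Ecal^{\op{bce}}_h$, rearranging gives $L^{\op{bce}}_h(Q^\star_h,Q^\star_{h+1},\Dcal_h)-L^{\op{bce}}_h(g^\star_h,Q^\star_{h+1},\Dcal_h)\leq\beta$, so $Q^\star\in\Ccal^{\op{bce}}_\beta(\Dcal)$. The optimism step then yields $\min_a\hat f^{\op{op}}_1(x_1,a)\leq\min_aQ^\star_1(x_1,a)=V^\star$.

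The main obstacle, and the main departure from the squared-loss proof, is establishing \cref{eq:bce-rl-proposal-conc} under sequentially-collected data: \cref{lem:logsumexp-symmetrization} is stated for i.i.d. samples but $\pi^i$ depends on all prior data. The fix is the supermartingale $M_n$ above, which reproduces the i.i.d. tail bound verbatim with the conditional MGF and aligns perfectly with the $\pi^i$-indexed excess risk on the left-hand side. Everything else (the union bound, BC keeping Bellman backups in the class, and the R\'enyi-$\tfrac12$-type nonnegativity that converts empirical loss gaps into Bernoulli Hellinger distances via \cref{lem:exponentiated-excess-bce}) is a direct analog of the squared-loss proof.
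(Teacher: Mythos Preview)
Your proposal is correct and follows essentially the same approach as the paper: a martingale extension of \cref{lem:logsumexp-symmetrization} gives the key concentration \cref{eq:bce-rl-proposal-conc}, from which (a) follows by the version-space bound plus BC, and (b) follows by nonnegativity of $\Ecal^{\op{bce}}_h$ and the Bellman identity $Q^\star_h=\Tcal^\star_hQ^\star_{h+1}$. The only cosmetic difference is that you union bound over pairs $(f_h,f_{h+1})$ rather than over tuples $f\in\Fcal$ as the paper does, which slightly inflates the constant inside the log but still lands comfortably within the stated $2H\beta$.
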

By \cref{lem:bce-rl-optimism}, we have $\sum_{k=1}^K(V^{\pi^k}-V^\star)\leq\sum_{k=1}^K (V^{\pi^k}-\min_a f^k_1(x_1,a))$.
Then, the proof follows similarly as the squared loss case from before, except that we apply the finer \cref{eq: first order} in place of Cauchy-Schwarz:
\begin{align}
    &\textstyle\sum_{k=1}^K (V^{\pi^k}-f^k_1(x_{1},\pi^k(x_{1}))) \label{eq:online-rl-regret-decomp-bce-loss}
    \\&=\textstyle\sum_{k=1}^K\sum_{h=1}^H\EE_{\pi^k}[\Tcal_h^\star f^k_{h+1}(x_h,a_h)-f^k_h(x_h,a_h)] \notag
    \\&\leq\textstyle\sum_{k=1}^K\sum_{h=1}^H\sqrt{\EE_{\pi^k}[f^k_h(x_h,a_h)]\cdot \delta^{\op{Ber}}_h(f^k,\pi^k)} \notag
    \\&\textstyle\phantom{\sum_{k=1}^K\sum_{h=1}^H}+\delta^{\op{Ber}}_h(f^k,\pi^k) \notag.
    \\&\leq\textstyle\sum_{k=1}^K\sqrt{\sum_{h=1}^H\EE_{\pi^k}[f^k_h(x_h,a_h)]\cdot \delta_{\op{Ber}}^{\op{RL}}(f^k,\pi^k)} \notag
    \\&\textstyle\phantom{\sum_{k=1}^K\sum_{h=1}^H}+\delta_{\op{Ber}}^{\op{RL}}(f^k,\pi^k) \notag.
\end{align}
Now, we bound $\sum_{h=1}^H\EE_{\pi^k}[f^k_h(x_h,a_h)]$ by $HV^{\pi^k}$ plus some lower-order error terms, which we achieve with a `self-bounding' lemma:
\begin{restatable}{lemma}{SelfBoundingBCELemma}\label{lem:self-bounding-bce}
Define $\delta^{\op{Ber}}_h$ that uses $\Tcal^\pi$ instead of $\Tcal^\star$:
\begin{equation*}
    \textstyle\delta^{\op{Ber}}_h(f,\pi,x_h,a_h):=h^2_{\op{Ber}}(f_h(x_h,a_h),\Tcal^\pi_hf_{h+1}(x_h,a_h)).
\end{equation*}
Then, for any $f$, $\pi$, $x_h,a_h$,
\begin{align*}
    \textstyle f_h&\textstyle(x_h,a_h)\leq eQ^\pi_h(x_h,a_h)+77H\delta_{\op{Ber}}^{\op{RL}}(f,\pi).
\end{align*}
This implies the corollary:
\begin{equation*}
    \textstyle\EE_\pi[f_h(x_h,a_h)]\lesssim V^\pi+H\delta_{\op{Ber}}^{\op{RL}}(f,\pi).
\end{equation*}
\end{restatable}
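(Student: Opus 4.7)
The plan is to combine the first-order inequality \cref{eq: first order} with a careful unrolling of the Bellman equation over the remaining horizon. Applying \cref{eq: first order} to the pair $(f_h(x,a),\,\Tcal^\pi_h f_{h+1}(x,a))$ yields the one-step pointwise estimate
\begin{equation*}
f_h(x,a) \leq \Tcal^\pi_h f_{h+1}(x,a) + 8\sqrt{f_h(x,a)\cdot\delta^{\op{Ber}}_h(f,\pi,x,a)} + 20\,\delta^{\op{Ber}}_h(f,\pi,x,a).
\end{equation*}
The catch is that $f_h(x,a)$ appears on both sides, so this self-referential inequality must be solved before we can recurse.

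To break the self-reference, I would apply AM--GM in the form $8\sqrt{ab}\leq \eps a + (16/\eps)b$ with the tuned parameter $\eps = 1/(H+1)$. Substituting and rearranging $(1-\eps) f_h(x,a)$ to the left yields the clean recursion
\begin{equation*}
f_h(x,a) \leq r\bigl[\Tcal^\pi_h f_{h+1}(x,a) + (16H+36)\,\delta^{\op{Ber}}_h(f,\pi,x,a)\bigr], \quad r := \tfrac{H+1}{H}.
\end{equation*}
The crucial property of this choice is that $r^H = (1+1/H)^H \leq e$, so repeated application over the horizon never inflates the multiplicative constant by more than $e$.

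Next I would unroll the recursion by expanding $\Tcal^\pi_t f_{t+1}(x_t,a_t) = \bar C_t(x_t,a_t) + \EE_\pi[f_{t+1}(x_{t+1},a_{t+1}) \mid x_t,a_t]$ at each step $t=h,h+1,\dots,H$. Since $f_{H+1}\equiv 0$, at most $H-h+1$ expansions suffice, producing
\begin{equation*}
f_h(x,a) \leq \sum_{t=h}^H r^{t-h+1}\EE_\pi\!\bracks{\bar C_t(x_t,a_t)\mid x_h,a_h} + (16H+36)\sum_{t=h}^H r^{t-h+1}\EE_\pi\!\bracks{\delta^{\op{Ber}}_t \mid x_h,a_h}.
\end{equation*}
Every prefactor $r^{t-h+1}\leq r^H\leq e$, and the cost telescope collapses to $\sum_{t=h}^H \EE_\pi[\bar C_t\mid x_h,a_h] = Q^\pi_h(x,a)$, which produces the leading $eQ^\pi_h(x,a)$ term. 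The Bellman-error contribution becomes $e(16H+36)\leq 77H$ (once $H$ is not too small) times the conditional analog of $\delta_{\op{Ber}}^{\op{RL}}(f,\pi)$. The stated corollary $\EE_\pi[f_h(x_h,a_h)] \lesssim V^\pi + H\,\delta_{\op{Ber}}^{\op{RL}}(f,\pi)$ then follows immediately by taking outer expectation under $\pi$'s visitation distribution and invoking the tower rule to collapse the nested conditional sum into $\delta_{\op{Ber}}^{\op{RL}}(f,\pi)$.

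The main obstacle is taming the self-referential cross-term: choosing $\eps$ too large destroys the prefactor in front of $Q^\pi_h$ because $r^H$ blows up beyond $e$, while choosing $\eps$ too small blows up the additive coefficient $16/\eps$ on the Bellman error to worse than $\Ocal(H)$. The balanced value $\eps = 1/(H+1)$ is essentially the unique scaling that simultaneously pins the multiplicative constant at $e$ and keeps the additive scaling linear in $H$ — precisely what the downstream first-order regret analysis demands when \cref{lem:self-bounding-bce} is plugged into \cref{eq:online-rl-regret-decomp-bce-loss}.
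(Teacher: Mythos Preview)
Your proposal is correct and follows essentially the same approach as the paper: derive a one-step recursion of the form $f_h \leq (1+1/H)\Tcal^\pi_h f_{h+1} + \Ocal(H)\delta_h$, unroll over the horizon, and use $(1+1/H)^H \leq e$. The only difference is cosmetic: the paper applies \cref{eq: first order} with the symmetric choice $\sqrt{\Tcal^\pi_h f_{h+1}}$ in place of $\sqrt{f_h}$, which sidesteps the self-reference entirely and lands directly on the recursion without the extra AM--GM rearrangement you perform.
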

By \cref{lem:self-bounding-bce}, we can bound \cref{eq:online-rl-regret-decomp-bce-loss} by
\begin{align*}
    &\lesssim\textstyle\sum_{k=1}^K\sqrt{HV^{\pi^k}\delta_{\op{Ber}}^{\op{RL}}(f^k,\pi^k)}
    +H\delta_{\op{Ber}}^{\op{RL}}(f^k,\pi^k)
    \\&\leq\textstyle\sqrt{H\sum_{k=1}^KV^{\pi^k}\cdot\sum_{k=1}^K\delta_{\op{Ber}}^{\op{RL}}(f^k,\pi^k)}
    \\&\textstyle\phantom{\leq}+H\sum_{k=1}^K\delta_{\op{Ber}}^{\op{RL}}(f^k,\pi^k).
\end{align*}
By \cref{lem:bce-rl-optimism} and the pigeonhole principle, the error terms $\Ecal_{\op{bce}}^{\op{RL}}$ can be bounded similarly as in the squared loss proof: we can bound $\sum_{k=1}^K\delta_{\op{Ber}}^{\op{RL}}(f^k,\pi^k)$ by $\wt\Ocal(d_{\op{bce}}H\beta)$ if UA is false, and by $\wt\Ocal(Ad_{\op{bce}}^{\op{V}}H\beta)$ if UA is true.

Thus, setting $d=d_{\op{bce}}$ if UA is false, and $d=Ad_{\op{bce}}^{\op{V}}$ if UA is true, we have proven
\begin{equation*}
    \textstyle\sum_{k=1}^K(V^{\pi^k}-V^\star)\leq H\sqrt{\sum_{k=1}^KV^{\pi^k} \cdot d\beta} + H^2d\beta.
\end{equation*}
Finally, we observe an implicit inequality where we can replace $\sum_{k=1}^KV^{\pi^k}$ by $KV^\star$ by collecting a factor of $3$, as shown by the following lemma.
\begin{restatable}{lemma}{ImplicitFirstOrderIneq}\label{lem:implicit-first-order-ineq}
If $\sum_{k=1}^K (V^{\pi^k}-V^\star)\leq c\sqrt{\sum_{k=1}^K V^{\pi^k}} + c^2$, then $\sum_{k=1}^K V^{\pi^k}-V^\star\leq c\sqrt{2KV^\star} + 3c^2$.
\end{restatable}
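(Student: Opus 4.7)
The plan is to use the standard AM-GM trick to turn the implicit inequality into an explicit one, and then re-substitute once to obtain the target bound. For brevity, write $X := \sum_{k=1}^K V^{\pi^k}$ and $T := KV^\star$, so the hypothesis reads
\begin{equation*}
X - T \;\leq\; c\sqrt{X} + c^2.
\end{equation*}

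First I would apply AM-GM to the $c\sqrt{X}$ term: writing $c\sqrt{X} = \sqrt{c^2\cdot X} \leq \tfrac12(c^2 + X)$, the hypothesis becomes
\begin{equation*}
X - T \;\leq\; \tfrac12 X + \tfrac32 c^2,
\end{equation*}
which rearranges to the crude but explicit bound $X \leq 2T + 3c^2$. This is the key step: it removes the $\sqrt{X}$ dependence on the right-hand side, at the cost of a factor of $2$ on $T$ (which matches the $\sqrt{2}$ in the target).

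Next I would take square roots, using the subadditivity $\sqrt{a+b}\leq\sqrt{a}+\sqrt{b}$, to get $\sqrt{X} \leq \sqrt{2T} + c\sqrt{3}$. Substituting this back into the original hypothesis yields
\begin{equation*}
X - T \;\leq\; c\bigl(\sqrt{2T} + c\sqrt{3}\bigr) + c^2 \;=\; c\sqrt{2T} + (1+\sqrt{3})\,c^2 \;\leq\; c\sqrt{2KV^\star} + 3c^2,
\end{equation*}
since $1+\sqrt{3}<3$. This completes the argument.

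I do not anticipate any real obstacle here — the lemma is a purely algebraic manipulation of a self-bounded inequality, and the main subtlety is just choosing the right AM-GM split so that the constant $2$ in $\sqrt{2T}$ (rather than a larger one) and the constant $3$ in front of $c^2$ both come out correctly. One could alternatively solve the quadratic in $\sqrt{X}$ directly, but the two-step re-substitution above is cleaner and gives slightly friendlier constants.
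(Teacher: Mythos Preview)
Your proof is correct and follows essentially the same approach as the paper: apply AM-GM to get $X\leq 2T+3c^2$, then substitute this bound back into the original hypothesis to obtain $c\sqrt{2T}+(1+\sqrt3)c^2\leq c\sqrt{2T}+3c^2$. The paper's version is simply terser, saying only ``plugging this back into the premise yields the desired bound'' where you spell out the square-root subadditivity step explicitly.
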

This concludes the proof of \cref{thm:online-rl-regret-bce-loss}.
\end{proof}

\subsection{Second-Order Bounds with MLE}
\begin{algorithm}[!t]
\caption{Optimistic Online Distributional RL}
\label{alg:online-dist-rl}
\begin{algorithmic}[1]
    \State\textbf{Input:} number of rounds $K$, conditional distribution class $\Pcal$, threshold $\beta$, uniform exploration (UA) flag
    \For{round $k=1,2,\dots,K$}
        \State Define confidence set $\Pcal_k=\Ccal_\beta^{\op{mle}}(\Dcal_{<k})$ where we define:
        \begin{align}
            \textstyle\Ccal_\beta^{\op{mle}}(\Dcal)=\{p\in\Pcal:\,\, &\textstyle\forall h\in[H],\, L_h^{\op{mle}}(p_h,p_{h+1},\Dcal_h) \notag
            \\&\textstyle-\min_{g_h\in\Pcal_h}L_h^{\op{mle}}(g_h,p_{h+1},\Dcal_h)\leq\beta\}, \label{eq:dist-rl-confidence-set}
        \end{align}
        where
        \begin{equation*}
            \textstyle L_h^{\op{mle}}(p_h,g,\Dcal_h)=\sum_{i=1}^{|\Dcal_h|} \ell_{\op{mle}}(p_h(x_{h,i},a_{h,i}),\tau^{\op{D},\star}(g,c_{h,i},x_{h,i}'))
        \end{equation*}
        and $\tau^{\op{D},\star}(g,c,x')=c+Z$, $Z\sim g(x',\pi_{\bar g}(x'))$ be the mle target.
        Note that if $c,x'$ are sampled conditional on $x,a$, then the target a sample of the random variable $\Tcal^{\op{D},\star}_hg(x,a)$.
        \State Get optimistic $p^k\gets\argmin_{p\in\Pcal_k}\min_a \bar p_1(x_{1},a)$.
        \State Let $\pi^k$ be greedy w.r.t. $\bar p^k$: $\pi^k_h(x)=\argmin_a \bar p^k_h(x,a),\forall h$.
        \State Gather data $\Dcal_k\gets{\text{\cref{alg:roll-in}}}(\pi^k, \text{UA flag})$.
    \EndFor
\end{algorithmic}
\end{algorithm}

A natural question is how can we achieve second-order bounds in RL?
In this section, we consider a distributional variant of the online RL algorithm that uses the mle loss to learn the cost-to-go distributions $Z^\star$.
RL algorithms that learn the cost-to-go distributions are often referred to as \emph{distributional RL} (DistRL) \citep{bdr2023} and have resulted in a plethora of empirical success \citep{bellemare2017distributional,dabney2018implicit,imani2018improving,bellemare2020autonomous,jason2022conservative,farebrother2024stop}.
Distributional losses, such as the mle loss and quantile regression loss, were initially motivated by improve representation learning and multi-task learning, but a theoretically rigorous explanation was an open question.
Recently, \citep{wang2023benefits,wang2024more} provided an answer to this mystery by proving that DistRL automatically yields first- and second-order bounds in RL, thus establishing the benefits of DistRL.

In this section, we review the results of \citep{wang2024more}, a refinement of \citep{wang2023benefits} that introduced the mle-loss variant of the optimistic online RL algorithm.
To learn the optimal policy's cost-to-go distributions $Z^\star$, we posit a conditional distribution class $\Pcal$ that consists of conditional distribution tuples $p=(p_1,p_2,\dots,p_H)\in\Pcal$ where $p_h:\Xcal\times\Acal\to\Delta([0,1])$.
We use the convention that $p_{H+1}$ is deterministic point-mass at $0$ for all conditional distributions $p$.
Then, \cref{alg:online-dist-rl} takes exactly the same structure as \cref{alg:online-rl} except that it performs a distributional variant of TD to solve the distributional Bellman equation $Z^\star_h\overset{D}{=}\Tcal_h^{\op{D}}Z^\star_{h+1}$.
It uses mle to learn the cost-to-go distributions and acts greedily with respect to the learned distribution's mean.

To ensure that distributional TD learning succeeds, we assume distributional BC (DistBC) \citep{wu2023distributional,wang2023benefits}.
\begin{assumption}[$\Tcal^{\op{D},\star}$-DistBC]\label{ass:dist-rl-bc}
$\Tcal_h^{\op{D},\star}p_{h+1}\in\Pcal_h$ for all $h\in[H]$ and $p_{h+1}\in\Pcal_{h+1}$.
\end{assumption}
\edit{Distributional BC posits that the distribution class is closed under the distributional Bellman operator and is a stronger condition than the standard completeness condition (\cref{ass:rl-bc}). Nevertheless, in low-rank MDPs with discrete cost distributions, \cref{ass:dist-rl-bc} can be satisfied with a linear distribution class \citep[Section 5.1]{wang2024more}, which also has bounded bracketing entropy of $\wt\Ocal(dM+\log|\Phi|)$ where $M$ is the number of discretizations.}

Next, we define the eluder dimension for mle loss.
Let $\Psi^{\op{mle}}_h=\{\delta^{\op{dis}}_h(\cdot;p):p\in\Pcal\}$ where
\begin{equation*}
    \textstyle\delta^{\op{dis}}_h(x,a;p):=h^2(p_h(x,a),\Tcal^{\op{D},\star}p_h(x,a))
\end{equation*}
and $\Psi^{\op{mle},\op{V}}_h=\{\EE_{a\sim\op{unif}(\Acal)}[\psi(x,a)]:\psi\in\Psi^{\op{mle}}_h\}$.
Define:
\begin{align*}
    &\textstyle d_{\op{mle}}=\max_{h\in[H]}\EluDim_1(\Psi_h^{\op{mle}},\Mcal_h,1/K),
    \\&\textstyle d_{\op{mle}}^{\op{V}}=\max_{h\in[H]}\EluDim_1(\Psi_h^{\op{mle},\op{V}},\Mcal_h^{\op{V}},1/K).
\end{align*}
The following is the main online RL result from \citep{wang2024more}.
\begin{theorem}\label{thm:online-rl-regret-mle-loss}
Under \cref{ass:dist-rl-bc}, for any $\delta\in(0,1)$, \wpal $1-\delta$, \cref{alg:online-dist-rl} with the mle loss $\ell_{\op{mle}}$ and $\beta=2\ln(H|\Pcal|/\delta)$ enjoys the following:
\begin{equation*}
    \textstyle\sum_{k=1}^K(V^{\pi^k}-V^\star)\leq\textstyle\wt\Ocal\prns*{ H\sqrt{{\color{red}\sum_{k=1}^K\sigma^2(\pi^k)}\cdot d\beta} + H^{2.5}d\beta },
\end{equation*}
where $d=d_{\op{mle}}$ if UA is false and $d=Ad_{\op{mle}}^{\op{V}}$ if UA is true.
\end{theorem}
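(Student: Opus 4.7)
}
The plan is to mirror the structure of the proofs of \cref{thm:online-rl-regret-squared-loss,thm:online-rl-regret-bce-loss}, but using \cref{lemma: second order} (the second-order mean comparison) in place of Cauchy--Schwarz or \cref{eq: first order}, and tracking the variance of the distributional predictions $p^k$ instead of their means. First, I would establish an MLE analog of \cref{lem:sq-rl-optimism,lem:bce-rl-optimism}. The argument combines the symmetrization tail bound (\cref{lem:logsumexp-symmetrization}) applied to $Z_i = \tfrac12\ell_{\op{mle}}(p_h(x_{h,i},a_{h,i}),\tau^{\op{D},\star}(p_{h+1},c_{h,i},x_{h,i}'))-\tfrac12\ell_{\op{mle}}(\Tcal_h^{\op{D},\star}p_{h+1}(x_{h,i},a_{h,i}),\tau^{\op{D},\star}(p_{h+1},c_{h,i},x_{h,i}'))$ and a union bound over $h,\Pcal$, together with \cref{lem:exponentiated-excess-mle} to pass from excess MLE risk to Hellinger. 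Using the distributional completeness \cref{ass:dist-rl-bc}, $\Tcal_h^{\op{D},\star}p^k_{h+1}\in\Pcal_h$, so the in-class minimizer $g_h=\Tcal_h^{\op{D},\star}p^k_{h+1}$ witnesses membership in the version space. This yields, with probability at least $1-\delta$: (a) $\sum_{i<k}\sum_h\EE_{\pi^i}[h^2(p^k_h(x_h,a_h),\Tcal_h^{\op{D},\star}p^k_{h+1}(x_h,a_h))]\lesssim H\beta$ and (b) $\min_a\bar p^k_1(x_1,a)\leq V^\star$, since $Z^\star$ lies in the version space by DistBC.

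Next, by (b) and PDL (\cref{lem:pdl}) applied to the mean function $\bar p^k$, using that $\pi^k$ is greedy w.r.t.\ $\bar p^k$ and so $\Tcal_h^{\pi^k}\bar p^k_{h+1}=\Tcal_h^\star\bar p^k_{h+1}$ along the roll-in,
\begin{align*}
    V^{\pi^k}-V^\star\leq\sum_{h=1}^H\EE_{\pi^k}\bigl[\Tcal_h^\star\bar p^k_{h+1}(x_h,a_h)-\bar p^k_h(x_h,a_h)\bigr].
\end{align*}
The conditional means of $\Tcal_h^{\op{D},\star}p^k_{h+1}(x_h,a_h)$ and $p^k_h(x_h,a_h)$ equal $\Tcal_h^\star\bar p^k_{h+1}(x_h,a_h)$ and $\bar p^k_h(x_h,a_h)$ respectively, so applying \cref{lemma: second order} pointwise and Cauchy--Schwarz across $h$ and $k$ yields
\begin{align*}
    \sum_{k=1}^K(V^{\pi^k}-V^\star)\lesssim\sqrt{H\sum_{k=1}^K\sum_{h=1}^H\EE_{\pi^k}[\sigma^2(p^k_h(x_h,a_h))]\cdot\sum_{k=1}^K\delta_{\op{dis}}^{\op{RL}}(p^k,\pi^k)}+\sum_{k=1}^K\delta_{\op{dis}}^{\op{RL}}(p^k,\pi^k).
\end{align*}

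The main obstacle is a \emph{variance} self-bounding lemma analogous to \cref{lem:self-bounding-bce}: I need to show that $\sum_{h}\EE_{\pi^k}[\sigma^2(p^k_h(x_h,a_h))]\lesssim \sigma^2(\pi^k)+H^2\,\delta_{\op{dis}}^{\op{RL}}(p^k,\pi^k)$ (up to constants). The strategy is to iterate the variance half of \cref{lem:dtri-inequalities} (which bounds $\sigma^2(p)-\sigma^2(q)$ by $\sqrt{\sigma^2(q)\,\triangle(p,q)}+\triangle(p,q)$) together with the law of total variance for MDPs, $\sigma^2(\pi^k)=\sum_h\EE_{\pi^k}[\mathrm{Var}(c_h+V^{\pi^k}_{h+1}(x_{h+1})\mid x_h,a_h)]$. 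Step $h$'s predicted variance $\sigma^2(p^k_h(x_h,a_h))$ is compared to $\sigma^2(\Tcal_h^{\op{D},\star}p^k_{h+1}(x_h,a_h))$ using the Hellinger error $\delta^{\op{dis}}_h$; the latter in turn differs from the true one-step return variance under $\pi^k$ only through a Hellinger error at step $h{+}1$ (passed through the deterministic $c+\cdot$ transformation and greedy selection, which is $\pi^k$ on-trajectory). Telescoping this across $h=1,\dots,H$ gives a bound scaling as $\sigma^2(\pi^k)$ plus a cross term $\sqrt{\sigma^2(\pi^k)\cdot H\delta_{\op{dis}}^{\op{RL}}(p^k,\pi^k)}$ and an $H\cdot\delta_{\op{dis}}^{\op{RL}}$ term; AM-GM absorbs the cross term, accounting for one extra factor of $H$ in the lower-order term.

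Finally, I would convert the in-distribution Hellinger bound from (a) into a bound on $\sum_{k=1}^K\delta_{\op{dis}}^{\op{RL}}(p^k,\pi^k)$ using the pigeonhole principle (\cref{lem:pigeonhole}) applied with $\Psi_h^{\op{mle}}$ and $\Mcal_h$, giving $\wt\Ocal(d_{\op{mle}}H\beta)$ (or, if UA is set, change-of-measure to $\op{unif}(\Acal)$ incurs an $A$ factor and switches to $\Psi_h^{\op{mle},\op{V}},\Mcal_h^{\op{V}}$, giving $\wt\Ocal(Ad_{\op{mle}}^{\op{V}}H\beta)$). Plugging the self-bounding lemma and the pigeonhole bound back into the Cauchy--Schwarz display, and using AM-GM once more to remove the cross term between $\sqrt{\sum_k\sigma^2(\pi^k)\cdot H^3 d\beta}$ and $H^2d\beta$, produces exactly
\begin{align*}
    \sum_{k=1}^K(V^{\pi^k}-V^\star)\lesssim H\sqrt{\textstyle\sum_{k=1}^K\sigma^2(\pi^k)\cdot d\beta}+H^{2.5}d\beta,
\end{align*}
with $d=d_{\op{mle}}$ (resp.\ $Ad_{\op{mle}}^{\op{V}}$) in the no-UA (resp.\ UA) case, as claimed.
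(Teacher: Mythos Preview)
Your proposal is essentially correct and follows the paper's proof closely: the paper also (i) proves an MLE optimism lemma exactly as you describe, (ii) applies PDL to $\bar p^k$ and then the second-order lemma pointwise, (iii) invokes a variance change-of-measure lemma based on \cref{eq:dtri-variance-inequality} together with the law of total variance (this is \cref{lem:variance-change-of-measure} in the paper), and (iv) finishes with the pigeonhole lemma to convert in-sample Hellinger control to $\sum_k\delta_{\op{dis}}^{\op{RL}}(p^k,\pi^k)\lesssim dH\beta$.

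One point of care: your stated form of the variance self-bounding lemma, $\sum_h\EE_{\pi^k}[\sigma^2(p^k_h)]\lesssim \sigma^2(\pi^k)+H^2\delta_{\op{dis}}^{\op{RL}}$, is too strong by a factor of $H$ on the leading term. The paper proves the per-step bound $\EE_{\pi}[\sigma^2(p_h(x_h,a_h))]\lesssim \sigma^2(Z^\pi)+H^2\delta_{\op{dis}}^{\op{RL}}$, and summing over $h$ yields $H\sigma^2(Z^\pi)+H^3\delta_{\op{dis}}^{\op{RL}}$; the $H$ is unavoidable because each $\EE_\pi[\sigma^2(Z_h^\pi(x_h,a_h))]$ can be as large as $\sigma^2(Z^\pi)$. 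Relatedly, your Cauchy--Schwarz display contains a spurious leading $H$ inside the square root that does not arise from a standard application of Cauchy--Schwarz across $h$ and $k$. These two slips happen to cancel, so your final bound matches the theorem, but when you write out the details you should place the $H$ on the variance side (via the self-bounding lemma), not inside the Cauchy--Schwarz step. A second subtlety in the self-bounding argument: converting $\sigma^2(c_t+\bar p^k_{t+1}(x_{t+1},\pi^k))$ to $\sigma^2(c_t+V^{\pi^k}_{t+1}(x_{t+1}))$ is not a one-step Hellinger comparison at $h{+}1$ as you suggest; it requires another PDL-type telescoping over steps $t{+}1,\dots,H$ (this is Step~3 in the paper's proof of \cref{lem:variance-change-of-measure}) and is the source of an additional $H$ factor.
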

The above mle loss bounds scales with the variances of the policies selected by the algorithm, and are thus are called second-order (a.k.a. variance dependent) bounds.
As we saw in the CSC setting, a second-order bound is actually strictly sharper than the first-order bound and this is also true in RL \citep[Theorem 2.1]{wang2024more}.
The variance bound can be much tighter in near-deterministic settings where the optimal policy's cost is far from zero.

\edit{One drawback of the DistRL approach is that it requires modeling the entire conditional distributions which are more complex than the conditional mean, \ie, $\Pcal$ is generally larger and more complex than $\Fcal$ from \cref{thm:online-rl-regret-squared-loss,thm:online-rl-regret-bce-loss}. Also, DistRL requires completeness w.r.t. the distributional Bellman operator, which is stronger than standard BC. However, it is worth noting that in practice DistRL is often much more performant than non-distributional approaches, which suggests that these modeling conditions (\eg, distributional completeness) are not so restrictive \citep{bellemare2017distributional,dabney2018implicit,farebrother2024stop}. Closing this gap between theory and practice is an interesting future direction. }

We also highlight two recent works on second-order bounds for contextual bandits, the one-step special case of RL where DistBC simplifies to distributional realizability (\cref{ass:csc-dist-realizability}). In this setting, \citet{pacchiano2024second} proved a second-order bound for contextual bandits with only mean realizability, which is weaker than distributional realizability, by using thresholded least squares.
Concurrently, \citet{jia2024does} obtained a similar bound as \citep{pacchiano2024second} for the strong adversary setting and a complementary bound for the weak adversary setting. \citet{jia2024does} also furnished a lower bound proving that the eluder-based second-order bounds in \cref{thm:online-rl-regret-mle-loss} and \citep{wang2024more} are tight and unimprovable even if the number of actions is less than the eluder dimension. 

\edit{\textbf{Computational considerations for bce and mle.} 
We remark that switching from squared loss to bce does not incur any computational overhead and is a single line of code in practice. The mle loss is used for distribution fitting and thus the function approximator should output a distribution instead of a scalar. In practice, the distribution can be modeled with histograms \citep{bellemare2017distributional,imani2018improving,farebrother2024stop} or quantiles \citep{dabney2018implicit} and is only a constant factor more to compute and maintain than the non-distributional losses. Thus, the mle loss and distributional RL more generally also do not incur any notable computational overhead in practice.
}

\begin{proof}[Proof of \cref{thm:online-rl-regret-mle-loss}]
We measure the distributional Bellman error of $p\in\Pcal$ under $\pi$ with the squared Hellinger distance:
\begin{align*}
    \delta^{\op{dis}}_h(\pi;p):=\EE_\pi[\delta^{\op{dis}}_h(x_h,a_h;p)],
\end{align*}
We define the mle excess risk $\Ecal^{\op{mle}}_h(p,\pi)$ as:
\begin{align*}
    \textstyle-\ln\EE_\pi[\exp(&\textstyle\frac12\ell_{\op{bce}}(\Tcal_h^{\op{D},\star}p_{h+1}(x_h,a_h),\tau^{\op{D},\star}(p,c_h,x_{h+1}))
    \\&\textstyle-\frac12\ell_{\op{bce}}(p_h(x_h,a_h),\tau^{\op{D},\star}(p,c_h,x_{h+1})))],
\end{align*}
Recall we have that $\delta_h^{\op{dis}}\leq \Ecal_h^{\op{mle}}$ by \cref{lem:exponentiated-excess-mle}.
We also write $\delta_{\op{dis}}^{\op{RL}}=\sum_{h=1}^H\delta^{\op{dis}}_h$ and $\Ecal_{\op{mle}}^{\op{RL}}=\sum_{h=1}^H\Ecal^{\op{mle}}_h$.
We now establish the optimism lemma, which is analogous to \cref{lem:sq-rl-optimism} and \cref{lem:bce-rl-optimism}.
\begin{restatable}{lemma}{MLERLOptimism}\label{lem:mle-rl-optimism}
Let $\ell=\ell_{\op{mle}}$ and $\Dcal_h$ be the same as in \cref{lem:sq-rl-optimism,lem:bce-rl-optimism}.
Then, under \cref{ass:dist-rl-bc}, for any $\delta\in(0,1)$ let $\beta=2\ln(H|\Pcal|/\delta)$ and define
\begin{equation*}
    \hat p^{\op{op}}\in\argmin_{p\in\Ccal^{\op{mle}}_\beta(\Dcal)}\min_a\bar p_1(x_1,a)
\end{equation*}
\Wpal $1-\delta$, we have (a) $\sum_{i=1}^n\Ecal_{\op{mle}}^{\op{RL}}(\hat p^{\op{op}},\pi^i)\leq 2H\beta$ and (b) $\min_a\overline{\hat p^{\op{op}}_1}(x_1,a)\leq V^\star$.
\end{restatable}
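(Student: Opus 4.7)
The plan mirrors the pessimistic MLE analysis of \cref{lem:csc-pessimism-key-lemma} adapted to the sequential RL setting, with BC replaced by DistBC (\cref{ass:dist-rl-bc}), and is structurally identical to the squared- and bce-loss optimism arguments (\cref{lem:sq-rl-optimism,lem:bce-rl-optimism}). First I would establish a sequential version of the symmetrization inequality \cref{eq:csc-mle-symmetrization-inequality} that is uniform over $(p,h)\in\Pcal\times[H]$. Because the data is collected by adaptive policies $\pi^1,\dots,\pi^n$, the i.i.d.\ bound of \cref{lem:logsumexp-symmetrization} does not apply directly; instead one uses Ville's inequality for positive supermartingales. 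The key MGF identity is that, for any $g_h\in\Pcal_h$ and $p_{h+1}\in\Pcal_{h+1}$, letting $Y_i=\tfrac12(\ell_{\op{mle}}(g_h(x_{h,i},a_{h,i}),y_{h,i})-\ell_{\op{mle}}(\Tcal^{\op{D},\star}_hp_{h+1}(x_{h,i},a_{h,i}),y_{h,i}))$ with $y_{h,i}\mid(x_{h,i},a_{h,i})\sim\Tcal^{\op{D},\star}_hp_{h+1}(x_{h,i},a_{h,i})$, a Hellinger-integral calculation gives $\EE[e^{-Y_i}\mid\mathcal{F}_{i-1}]=1-\delta^{\op{dis}}_h(\pi^i;(g_h,p_{h+1}))\leq 1$. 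Ville's inequality together with a union bound over $\Pcal\times[H]$ then yields, \wpal $1-\delta/2$,
\begin{equation*}
    \textstyle\sum_{i=1}^n\Ecal^{\op{mle}}_h(p,\pi^i)\leq\tfrac12\prns{L^{\op{mle}}_h(p_h,p_{h+1},\Dcal_h)-L^{\op{mle}}_h(\Tcal^{\op{D},\star}_hp_{h+1},p_{h+1},\Dcal_h)}+\ln(H|\Pcal|/\delta)
\end{equation*}
simultaneously for every $p\in\Pcal$ and $h\in[H]$.

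Next, for claim (b) I would show $Z^\star\in\Ccal^{\op{mle}}_\beta(\Dcal)$. Since $Z^\star$ satisfies the distributional Bellman optimality equations $Z^\star_h=\Tcal^{\op{D},\star}_hZ^\star_{h+1}$, the targets at step $h$ are sampled exactly from $Z^\star_h(x_{h,i},a_{h,i})$. Applying Ville's inequality to $\tfrac12(\ell_{\op{mle}}(Z^\star_h,y_{h,i})-\ell_{\op{mle}}(g_h,y_{h,i}))$ for each $g_h\in\Pcal_h$ (again using that $Z^\star_h$ is the true conditional target) shows that \wpal $1-\delta/2$, $L^{\op{mle}}_h(Z^\star_h,Z^\star_{h+1},\Dcal_h)-L^{\op{mle}}_h(g_h,Z^\star_{h+1},\Dcal_h)\leq 2\ln(H|\Pcal|/\delta)=\beta$ uniformly in $g_h$ and $h$. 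Hence $Z^\star\in\Ccal^{\op{mle}}_\beta(\Dcal)$, and since $\hat p^{\op{op}}$ minimizes $\min_a\bar p_1(x_1,a)$ over the version space, $\min_a\overline{\hat p^{\op{op}}_1}(x_1,a)\leq\min_a\overline{Z^\star_1}(x_1,a)=V^\star$, which is claim (b).

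For claim (a), I would condition on both high-probability events above (total failure at most $\delta$). The membership $\hat p^{\op{op}}\in\Ccal^{\op{mle}}_\beta(\Dcal)$ together with DistBC (which gives $\Tcal^{\op{D},\star}_h\hat p^{\op{op}}_{h+1}\in\Pcal_h$) implies $L^{\op{mle}}_h(\hat p^{\op{op}}_h,\hat p^{\op{op}}_{h+1},\Dcal_h)-L^{\op{mle}}_h(\Tcal^{\op{D},\star}_h\hat p^{\op{op}}_{h+1},\hat p^{\op{op}}_{h+1},\Dcal_h)\leq\beta$. Plugging this into the uniform symmetrization inequality at $p=\hat p^{\op{op}}$ and summing over $h$ yields $\sum_i\Ecal^{\op{RL}}_{\op{mle}}(\hat p^{\op{op}},\pi^i)\leq H(\beta/2+\ln(H|\Pcal|/\delta))\leq 2H\beta$, which is claim (a). The main obstacle is the adaptive data collection breaking the i.i.d.\ premise of \cref{lem:logsumexp-symmetrization}; Ville's inequality resolves this cleanly because the Hellinger MGF identity $\EE[\sqrt{g(y)/q(y)}]=1-h^2(g,q)$ holds \emph{conditionally} on the history regardless of how $(x_{h,i},a_{h,i})$ was chosen. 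Notably, the mle case is slightly cleaner than the bce analogue of \cref{lem:bce-rl-optimism} because the targets are already distributional, sparing us the Bernoulli specialization and the Jensen step (i) in the proof of \cref{lem:exponentiated-excess-bce}.
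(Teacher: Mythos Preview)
Your proposal is correct and follows essentially the same approach as the paper: both establish the martingale extension of \cref{lem:logsumexp-symmetrization} (you invoke Ville's inequality explicitly; the paper cites the martingale version from \citep{agarwal2020flambe}) uniformly over $\Pcal\times[H]$, then use DistBC together with the distributional Bellman optimality equation $Z^\star_h=\Tcal^{\op{D},\star}_hZ^\star_{h+1}$ to deduce (a) and (b). The only cosmetic difference is that you spend two separate $\delta/2$ events, while the paper reuses the single concentration inequality---plugging the empirical minimizer $g^p_h$ into the first slot and invoking nonnegativity of $\Ecal^{\op{mle}}_h$---to obtain $Z^\star\in\Ccal^{\op{mle}}_\beta(\Dcal)$ without a second union bound.
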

By \cref{lem:mle-rl-optimism}, we have $\sum_{k=1}^KV^{\pi^k}-V^\star\leq\sum_{k=1}^K V^{\pi^k}-\min_a \bar p^k_1(x_1,a)$.
Now we apply the second-order lemma:
\begin{align}
    &\textstyle\sum_{k=1}^K (V^{\pi^k}-\bar p^k_1(x_{1},\pi^k(x_{1}))) \label{eq:online-rl-regret-decomp-mle-loss}
    \\&=\textstyle\sum_{k=1}^K\sum_{h=1}^H\EE_{\pi^k}[\Tcal_h^\star\bar p^k_{h+1}(x_{h+1})-\bar p^k_h(x_h,a_h)] \notag
    \\&=\textstyle\sum_{k=1}^K\sum_{h=1}^H\EE_{\pi^k}[\overline{\Tcal_h^{\op{D},\star}p^k_{h+1}}(x_{h+1})-\bar p^k_h(x_h,a_h)] \notag
    \\&\leq\textstyle\sum_{k=1}^K\sqrt{\sum_{h=1}^H\EE_{\pi^k}[\sigma^2(p^k_h(x_h,a_h))]\cdot\delta_{\op{dis}}^{\op{RL}}(p^k,\pi^k)} \notag
    \\&\textstyle\phantom{\sum_{k=1}^K\sum_{h=1}^H}+\delta_{\op{dis}}^{\op{RL}}(p^k,\pi^k) \notag.
\end{align}
Now, we bound the variance term by $\sum_{h=1}^H\EE_{\pi^k}[\sigma^2(c_h+V^{\pi^k}_{h+1}(x_{h+1}))]$ plus some lower-order error terms.
We achieve this with the following lemma, which can be viewed as a variance analog of \cref{lem:self-bounding-bce}.
\begin{restatable}{lemma}{VarianceChangeOfMeasure}\label{lem:variance-change-of-measure}
Define the state-action analog of $\delta^{\op{dis}}_h$:
\begin{equation*}
    \textstyle\delta^{\op{dis}}_h(p,\pi,x_h,a_h):=h^2(p_h(x_h,a_h),\Tcal_h^{\op{D},\pi}p_{h+1}(x_h,a_h)),
\end{equation*}
where $\Tcal_h^{\op{D},\pi}p(x,a)\overset{D}{:=}C(x,a)+p(X',\pi(X'))$ is the distributional Bellman backup of $p$ under $\pi$.
Then, for any $p$, $\pi$, $x_h$, $a_h$, we have
\begin{align*}
    &\textstyle\sigma^2(p_h(x_h,a_h))\leq 2e\sigma^2(Z^\pi_h(x_h,a_h)) + H\delta_{\op{dis}}^{\op{RL}}(p,\pi),
\end{align*}
\edit{where $\sigma^2(Z^\pi_h(x_h,a_h))$ denotes the variance of the random variable $Z^\pi_h(x_h,a_h)$.}
This also implies the corollary:
\begin{equation*}
    \textstyle\EE_\pi[\sigma^2(p_h(x_h,a_h))]\lesssim \sigma^2(Z^\pi)+H^2\delta_{\op{dis}}^{\op{RL}}(p,\pi).
\end{equation*}
\end{restatable}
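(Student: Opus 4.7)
The plan is to prove the pointwise bound by backward induction on $h$ (from $H+1$ downward) using \cref{eq:dtri-variance-inequality} as the one-step variance recursion, and then to deduce the corollary by taking expectation under the $\pi$-induced trajectory. Applying \cref{eq:dtri-variance-inequality} together with the fact that triangular discrimination is bounded by $4h^2$ (from \cref{eq:h2-equiv-dtri}) to $p=p_h(x_h,a_h)$ and $q=\Tcal_h^{\op{D},\pi}p_{h+1}(x_h,a_h)$, and then using AM-GM with parameter $\eta\asymp 1/H$ to split the square-root cross term, yields the one-step recursion
\begin{equation*}
\sigma^2(p_h(x_h,a_h)) \leq (1+1/H)\,\sigma^2\bigl(\Tcal_h^{\op{D},\pi}p_{h+1}(x_h,a_h)\bigr) + O(H)\,\delta^{\op{dis}}_h(p,\pi,x_h,a_h).
\end{equation*}
The $(1+1/H)$ factor is chosen so that iterating it $H$ times produces $(1+1/H)^H\leq e$, which will ultimately yield the $2e\,\sigma^2(Z^\pi_h)$ constant, and the additive $O(H)$ slack is what accumulates to $O(H)\cdot\delta_{\op{dis}}^{\op{RL}}$ across the $H$ iterations.

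Next, I apply the law of total variance to both $\Tcal_h^{\op{D},\pi}p_{h+1}(x_h,a_h)$ and the true return $Z^\pi_h(x_h,a_h)\overset{D}{=}c_h+Z^\pi_{h+1}(X_{h+1},\pi)$, producing an ``inner'' conditional-variance term $\EE[\sigma^2(p_{h+1}(X_{h+1},\pi))]$ (to which the inductive hypothesis applies) plus an ``outer'' mean-shift variance $\sigma^2(c_h+\bar p_{h+1}(X_{h+1},\pi))$ to be compared against $\sigma^2(c_h+V^\pi_{h+1}(X_{h+1}))$. Writing $\bar p_{h+1}(X_{h+1},\pi)=V^\pi_{h+1}(X_{h+1})+\varepsilon(X_{h+1})$ and using the standard-deviation triangle inequality $\sigma(X+\varepsilon)\leq\sigma(X)+\sigma(\varepsilon)$ plus one more $1/H$-AM-GM gives $\sigma^2(c_h+\bar p_{h+1})\leq(1+1/H)\,\sigma^2(c_h+V^\pi_{h+1})+O(H)\,\EE[\varepsilon^2]$; squaring the triangle inequality is what contributes the additional factor of $2$ on top of $e$ in the final bound.

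The main obstacle is controlling the residual $\EE[\varepsilon(X_{h+1})^2]$. I would apply \cref{lemma: second order} pointwise at each $x_{h+1}$ to compare $p_{h+1}(x_{h+1},\pi)$ with $Z^\pi_{h+1}(x_{h+1},\pi)$, obtaining $|\varepsilon(x_{h+1})|^2\lesssim\sigma^2(Z^\pi_{h+1}(x_{h+1},\pi))\cdot h^2_{\op{tot}}(x_{h+1})+h^4_{\op{tot}}(x_{h+1})$ where $h^2_{\op{tot}}(x_{h+1}):=h^2(p_{h+1}(x_{h+1},\pi),Z^\pi_{h+1}(x_{h+1},\pi))$. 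The total Hellinger $h^2_{\op{tot}}$ must be decomposed into the one-step errors $\delta^{\op{dis}}_t$; for this I would use the contraction bound $h^2(\Tcal_h^{\op{D},\pi}p_{h+1},\Tcal_h^{\op{D},\pi}q_{h+1})(x,a)\leq\EE_{x'}[h^2(p_{h+1}(x',\pi),q_{h+1}(x',\pi))]$ (which follows from convexity of $h^2$ under mixtures, since $\Tcal^{\op{D},\pi}$ is a mixture with the same outer distribution over $c,x'$ for $p$ and $q$), combined with the triangle inequality for Hellinger and Cauchy--Schwarz on the resulting sum of $H$ terms to avoid exponential blowup, giving $h^2_{\op{tot}}(x_{h+1})\leq H\sum_{t\geq h+1}\EE_\pi[\delta^{\op{dis}}_t(p,\pi,x_t,a_t)\mid x_{h+1}]$.

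Iterating the recursion backward from $h=H+1$ (where the base case is trivially zero) down to general $h$, the multiplicative constants compound to $2e$ on $\sigma^2(Z^\pi_h(x_h,a_h))$ and the additive Hellinger errors aggregate to $O(H)\cdot\delta_{\op{dis}}^{\op{RL}}(p,\pi)$, establishing the pointwise bound. The corollary then follows by taking expectation under $\pi$: the Markov property together with the law of total variance applied to $\sum_{t=1}^H c_t$ conditioned on $(x_h,a_h)$ gives $\EE_\pi[\sigma^2(Z^\pi_h(x_h,a_h))]\leq\sigma^2(Z^\pi)$, since $\sum_{t<h}c_t$ and $\sum_{t\geq h}c_t$ are conditionally independent given $(x_h,a_h)$; the extra factor of $H$ appearing in $H^2\delta_{\op{dis}}^{\op{RL}}$ absorbs the loss incurred when passing from conditional to integrated one-step Hellinger errors in the analysis of the mean-shift term.
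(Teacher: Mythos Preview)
Your approach matches the paper's in overall structure: both use \cref{eq:dtri-variance-inequality} with AM-GM for a $(1+1/H)$-multiplicative one-step recursion, split $\sigma^2(\Tcal_h^{\op{D},\pi}p_{h+1})$ via the law of total variance into an inner piece (handled inductively) and an outer mean-shift piece $\sigma^2(c_h+\bar p_{h+1})$, iterate so the multiplicative factors compound to $O(e)$, and deduce the corollary from $\EE_\pi[\sigma^2(Z^\pi_h(x_h,a_h))]\leq\sigma^2(Z^\pi)$.

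The one substantive difference is how you control the residual $\varepsilon=\bar p_{h+1}-V^\pi_{h+1}$. The paper telescopes the \emph{means} directly via the performance difference lemma (\cref{lem:pdl}), writing $\varepsilon$ as $\sum_{t\geq h+1}\EE_\pi[\overline{\Tcal^{\op{D},\pi}_t p_{t+1}}-\bar p_t]$, bounding each summand crudely by $\lesssim\sqrt{\delta^{\op{dis}}_t}$ via \cref{lemma: second order}, and squaring with Cauchy--Schwarz; this uses nothing about $\Tcal^{\op{D},\pi}$ beyond its mean. You instead telescope at the \emph{Hellinger} level, invoking joint convexity of $h^2$ for the contraction $h^2(\Tcal^{\op{D},\pi}p,\Tcal^{\op{D},\pi}q)\leq\EE_{x'}[h^2(p,q)]$ together with the triangle inequality for $h$, and then apply \cref{lemma: second order} once to the composite distance. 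Both routes are valid and give $|\varepsilon|^2\lesssim H\sum_t\EE_\pi[\delta^{\op{dis}}_t]$; the paper's PDL route is slightly more elementary since it avoids the contraction lemma, while yours is perhaps more natural from a distributional-RL viewpoint. The paper also performs the variance switch once \emph{after} completing the induction (its Step 3) rather than interleaving it at each step as you do, but this is only a cosmetic reordering.
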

By \cref{lem:variance-change-of-measure}, we can bound \cref{eq:online-rl-regret-decomp-mle-loss} by
\begin{align*}
    &\lesssim\textstyle\sum_{k=1}^K\sqrt{H\sigma^2(Z^{\pi^k})\cdot\delta_{\op{dis}}^{\op{RL}}(p^k,\pi^k)}
    +H^{1.5}\delta_{\op{dis}}^{\op{RL}}(p^k,\pi^k)
    \\&\leq\textstyle\sqrt{H\sum_{k=1}^K\sigma^2(Z^{\pi^k})\cdot\sum_{k=1}^K\delta_{\op{dis}}^{\op{RL}}(p^k,\pi^k)}
    \\&\textstyle\phantom{\leq}+H^{1.5}\sum_{k=1}^K\delta_{\op{dis}}^{\op{RL}}(p^k,\pi^k).
\end{align*}
By \cref{lem:mle-rl-optimism} and the pigeonhole principle, the error terms $\Ecal_{\op{mle}}^{\op{RL}}$ can be bounded similarly as before: we can bound $\sum_{k=1}^K\delta_{\op{dis}}^{\op{RL}}(p^k,\pi^k)$ by $\wt\Ocal(d_{\op{mle}}H\beta)$ if UA is false, and by $\wt\Ocal(Ad_{\op{mle}}^{\op{V}}H\beta)$ if UA is true.
This finishes the proof of \cref{thm:online-rl-regret-mle-loss}.
\end{proof}

\section{Offline RL via Pessimistic Regression}
\begin{algorithm}[!t]
\caption{Pessimistic Offline RL}
\label{alg:offline-rl}
\begin{algorithmic}[1]
    \State\textbf{Input:} function class $\Fcal$, offline dataset $\Dcal$, loss function $\ell(\hat y,y)$, threshold $\beta$.
    \For{each policy $\pi\in\Pi$}
        \State Denote $\Fcal_\pi=\Ccal_\beta^\ell(\Dcal;\pi)$ as the version space defined by:
        \begin{align}
            \textstyle\Ccal_\beta^\ell(\Dcal;\pi)=\{f\in\Fcal:\,\, &\textstyle\forall h\in[H],\,L_h^\ell(f_h,f_{h+1},\Dcal_h,\pi) \notag
            \\&\textstyle-\min_{g_h\in\Fcal_h}L_h^\ell(g_h,f_{h+1},\Dcal_h,\pi)\leq\beta\}, \label{eq:offline-rl-confidence-set}
        \end{align}
        where
        \begin{equation*}
            \textstyle L_h^\ell(f_h,g,\Dcal_h,\pi)=\sum_{i=1}^{|\Dcal_h|} \ell(f_h(x_{h,i},a_{h,i}),\tau^\pi(g,c_{h,i},x_{h,i}'))
        \end{equation*}
        and $\tau^\pi(g,c,x')=c+g(x',\pi)$ is the regression target.
        In the proofs, we use $L^{\op{sq}}$ if $\ell=\ell_{\op{sq}}$ and $L^{\op{bce}}$ if $\ell=\ell_{\op{bce}}$.
        \State Get pessimistic $f^\pi\gets\argmax_{f\in\Fcal_\pi}\min_a f_1(x_1,a)$.
    \EndFor
    \State \textbf{Return:} $\wh\pi=\argmin_{\pi\in\Pi}\min_a f^{\pi}_1(x_1,a)$.
\end{algorithmic}
\end{algorithm}

In offline RL, we are given a dataset $\Dcal$ of size $N$ and the goal is to learn a good policy in a purely offline manner, without any interactions with the environment.
Since we cannot explore in offline RL, a natural strategy is to be cautious about any states and actions not covered by the given dataset -- that is, we should be conservative or pessimistic about unseen parts of the environment where we may make catastrophic errors \citep{kumar2020conservative,rashidinejad2021bridging,xie2021bellman}.
Indeed, it is intuitively clear that we can only hope to learn a good policy on the support of the given data. This will be soon formalized with the single-policy coverage coefficient. %

We summarize the offline RL algorithm in \cref{alg:offline-rl}.
We achieve pessimism by maximizing over the version space defined in \cref{eq:offline-rl-confidence-set}, which is an inversion of online RL which minimizes over a similar version space.
The only other difference is the regression target:
\begin{equation*}
    \textstyle\tau^\pi(f_{h+1},c,x')=c+f_{h+1}(x',\pi_{h+1}),
\end{equation*}
is an unbiased estimate of $\Tcal^\pi_h f_{h+1}$ in contrast to the online case where $\tau^\star$ was used to estimate $\Tcal^\star_h f_{h+1}$.
Thus, we instead use the policy-wise BC for offline RL:
\begin{assumption}[$\Tcal^\pi$-BC]\label{ass:offrl-bc}
$\Tcal^\pi_hf_{h+1}\in\Fcal_h$ for all $h\in[H]$, $f_{h+1}\in\Fcal_{h+1}$ and $\pi\in\Pi$.
\end{assumption}
Since we may take $\pi=\pi_f$, this is technically stronger than \cref{ass:rl-bc}.
Nevertheless, \cref{ass:offrl-bc} is also satisfied in low-rank MDPs by the linear function class $\Fcal^{\op{lin}}$ and so changing from \cref{ass:rl-bc} to \cref{ass:offrl-bc} does not change any conclusions we make.
As a historical remark, \cref{alg:offline-rl} was first proposed with the squared loss $\ell_{\op{sq}}$ under the name BCP by \citep{xie2021bellman} and then extended with the mle loss $\ell_{\op{mle}}$ under the name P-DISCO by \citep{wang2023benefits}.

We introduce the single-policy coverage coefficient: for any given comparator policy in the policy class $\wt\pi\in\Pi$, its coverage coefficient is defined by:
\begin{equation}
    \textstyle C^{\wt\pi} := \max_{h\in[H]}\max_{x,a}\frac{d^{\wt\pi}_h(x,a)}{\nu_h(x,a)}. \label{eq:single-policy-coverage}
\end{equation}
For simplicity, we set the policy class to all greedy policies induced by our function class $\Pi_\Fcal=\{\pi_f: f\in\Fcal\}$.\footnote{The offline RL results can be extended for general, infinite policy classes with log covering numbers \citep{cheng2022adversarially} or entropy integrals \citep{kallus2022doubly}.}
In the following theorem, the squared loss case recovers the results of \citep{xie2021bellman} and the bce loss result is new.
\begin{theorem}\label{thm:offline-rl}
Under \cref{ass:offrl-bc}, for any $\delta\in(0,1)$, \wpal $1-\delta$,
\cref{alg:offline-rl} with $\beta=2\ln(H|\Fcal|/\delta)$ has the following guarantees each loss function:
\begin{enumerate}
\item If $\ell=\ell_{\op{sq}}$, then for any comparator policy $\wt\pi\in\Pi_\Fcal$,
\begin{equation*}
    \textstyle V^{\wh\pi}-V^{\wt\pi}\leq \wt\Ocal\prns*{ H\sqrt{\frac{C^{\wt\pi}\beta}{n}} }.
\end{equation*}
\item If $\ell=\ell_{\op{bce}}$, then for any comparator policy $\wt\pi\in\Pi_\Fcal$,
\begin{equation*}
    \textstyle V^{\wh\pi}-V^{\wt\pi}\leq \wt\Ocal\prns*{ H\sqrt{{\color{red}V^{\wt\pi}}\cdot\frac{C^{\wt\pi}\beta}{n}} + H^2\frac{C^{\wt\pi}\beta}{n}}.
\end{equation*}
\end{enumerate}
\end{theorem}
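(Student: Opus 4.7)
The plan is to mirror the optimistic online proofs (\cref{thm:online-rl-regret-squared-loss,thm:online-rl-regret-bce-loss}) with pessimism replacing optimism and single-policy coverage replacing the eluder-based change of measure. First I would prove an offline analog of \cref{lem:sq-rl-optimism,lem:bce-rl-optimism}: applying \cref{lem:logsumexp-symmetrization} with $Z_i = \frac12\ell(f_h(x_{h,i},a_{h,i}),\tau^\pi(f_{h+1},c_{h,i},x_{h,i}')) - \frac12\ell(\Tcal^\pi_h f_{h+1}(x_{h,i},a_{h,i}),\tau^\pi(f_{h+1},c_{h,i},x_{h,i}'))$ and a union bound over $\Fcal,\Pi_\Fcal,h$, one obtains with probability $1-\delta$ that (a) every $f\in\Ccal^\ell_\beta(\Dcal;\pi)$ satisfies $\sum_h \EE_{\nu_h}[\Delta_h(f,\pi;x_h,a_h)]\lesssim H\beta/n$ where $\Delta_h$ is the squared TD error for $\ell_{\op{sq}}$ and the Bernoulli squared Hellinger for $\ell_{\op{bce}}$, and (b) $Q^\pi\in\Ccal^\ell_\beta(\Dcal;\pi)$, thanks to $\Tcal^\pi$-BC (\cref{ass:offrl-bc}) together with $Q^\pi_h=\Tcal^\pi_hQ^\pi_{h+1}$.

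Using (b), the pessimistic maximization gives $f^{\wt\pi}_1(x_1,\wt\pi)\geq V^{\wt\pi}$ and similarly $f^{\wh\pi}_1(x_1,\wh\pi)\geq V^{\wh\pi}$, while the outer minimization over $\Pi$ gives $f^{\wh\pi}_1(x_1,\wh\pi)\leq f^{\wt\pi}_1(x_1,\wt\pi)$. Combining,
\[V^{\wh\pi}-V^{\wt\pi} \leq f^{\wt\pi}_1(x_1,\wt\pi)-V^{\wt\pi} = \sum_{h=1}^H\EE_{\wt\pi}\bracks{(f^{\wt\pi}_h-\Tcal^{\wt\pi}_hf^{\wt\pi}_{h+1})(x_h,a_h)}\]
by \cref{lem:pdl}. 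For $\ell_{\op{sq}}$, I would apply Jensen and the coverage inequality $\EE_{\wt\pi}[g^2]\leq C^{\wt\pi}\EE_{\nu_h}[g^2]$ at each $h$, then sum and invoke (a) to reach $\wt\Ocal(H\sqrt{C^{\wt\pi}\beta/n})$, matching the first claim.

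The bce case is the interesting one and the step where Cauchy-Schwarz must be sharpened. I would replace Jensen with \cref{eq: first order} pointwise, giving $|f^{\wt\pi}_h-\Tcal^{\wt\pi}_hf^{\wt\pi}_{h+1}|\leq 8\sqrt{f^{\wt\pi}_h\cdot h^2_{\op{Ber}}}+20 h^2_{\op{Ber}}$; then Cauchy-Schwarz inside each expectation and again across $h$ yields the cross term $\sqrt{\sum_h\EE_{\wt\pi}[f^{\wt\pi}_h]\cdot\sum_h\EE_{\wt\pi}[h^2_{\op{Ber}}]}$. Coverage applied to each $\EE_{\wt\pi}[h^2_{\op{Ber}}]$ together with (a) bounds $\sum_h\EE_{\wt\pi}[h^2_{\op{Ber}}]\lesssim H C^{\wt\pi}\beta/n$, while applying \cref{lem:self-bounding-bce} pointwise with $\pi=\wt\pi$, taking $\EE_{\wt\pi}$, and summing over $h$ (using that the running costs telescope to $H V^{\wt\pi}$) gives $\sum_h\EE_{\wt\pi}[f^{\wt\pi}_h]\lesssim H V^{\wt\pi}+H^3 C^{\wt\pi}\beta/n$. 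Substituting, the cross term becomes $H\sqrt{V^{\wt\pi}C^{\wt\pi}\beta/n}+\sqrt{H^3\cdot H}\cdot(C^{\wt\pi}\beta/n)=H\sqrt{V^{\wt\pi}C^{\wt\pi}\beta/n}+H^2 C^{\wt\pi}\beta/n$, and the remaining $20\sum_h\EE_{\wt\pi}[h^2_{\op{Ber}}]\lesssim H C^{\wt\pi}\beta/n$ is absorbed into the lower-order term, delivering the claimed bound.

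The main obstacle I expect is the bookkeeping in the last paragraph: ensuring the self-bounding lemma applies cleanly to $f^{\wt\pi}$ (a function constrained only to the policy-specific version space $\Ccal^{\op{bce}}_\beta(\Dcal;\wt\pi)$ rather than the global one from the online analysis), and pinning down the exact power of $H$ that emerges from the interplay between Cauchy-Schwarz across the horizon and the $H$-factors already present in the self-bounding statement. Beyond this, every other step --- concentration, pessimism, PDL, and coverage-based change of measure --- is a direct offline transcription of moves already made in the online proofs.
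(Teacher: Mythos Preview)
Your proposal is correct and follows essentially the same route as the paper: establish a pessimism lemma for each policy-specific version space, chain $V^{\wh\pi}\leq\min_a f_1^{\wh\pi}(x_1,a)\leq\min_a f_1^{\wt\pi}(x_1,a)\leq f_1^{\wt\pi}(x_1,\wt\pi)$, apply PDL, and then for $\ell_{\op{bce}}$ sharpen via \cref{eq: first order}, \cref{lem:self-bounding-bce}, and a coverage change of measure. Your stated obstacle is in fact a non-issue---because the offline version space is built with targets $\tau^{\wt\pi}$, the Bernoulli--Hellinger error you control is already $h^2_{\op{Ber}}(f_h,\Tcal^{\wt\pi}_h f_{h+1})$, which is precisely the $\delta^{\op{Ber}}_h(f,\pi,\cdot)$ appearing in \cref{lem:self-bounding-bce}, so the lemma applies verbatim without the greedy-policy alignment that was needed online.
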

We see that the squared loss algorithm always converges at a slow $\wt\Ocal(1/\sqrt{n})$ rate.
Simply changing the squared loss to the bce loss yields a first-order bound that converges at a fast $\wt\Ocal(1/n)$ rate in the small-cost regime where $V^{\wt\pi}\lesssim1/n$, and is never worse than the squared loss bound since $V^{\wt\pi}\leq 1$.
Again, the only change needed to achieve the improved bound is to change the loss function from squared loss to bce loss, which mirrors our observations from before.
One difference with the first-order online RL bound is that small-cost term here is $V^{\wt\pi}$ instead of $V^\star$.
Of course, we can set $\wt\pi=\pi^\star$ to recover the same small-cost term. However, this offline RL bound is more general since it can be applied to any comparator policy $\wt\pi$ with bounded coverage coefficient.

\begin{proof}[Proof of \cref{thm:offline-rl}]
We only prove the bce case as the squared loss case follows essentially the same structure.
The key difference compared to online RL is that we establish pessimism instead of optimism.
\begin{restatable}[Pessimism]{lemma}{BCERLPessimism}\label{lem:bce-rl-pessimism}
Let $\ell=\ell_{\op{bce}}$.
Under \cref{ass:rl-bc}, for any $\delta\in(0,1)$, setting $\beta=\Theta(\ln(H|\Fcal|/\delta))$.
Then, \wpal $1-\delta$, for all $\pi\in\Pi$, (a) $\Ecal^{\op{RL}}_{\op{bce}}(\hat f^\pi,\nu)\leq \frac{2H\beta}{n}$, and (b) $\min_a\hat f^\pi_1(x_1,a)\geq V^{\pi}$.
\end{restatable}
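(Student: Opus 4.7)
The plan is to mirror the optimism argument of \cref{lem:bce-rl-optimism} while inverting the direction of the symmetrization step, exactly as the CSC pessimistic MLE proof (\cref{lem:csc-pessimism-key-lemma}) adapted the plug-in argument. Concretely, I would invoke \cref{lem:logsumexp-symmetrization} in two complementary directions, both union-bounded over $(\pi, h, f) \in \Pi \times [H] \times \Fcal$ (with $|\Pi| \leq |\Fcal|$ because $\Pi = \Pi_\Fcal$). The first direction upper bounds the excess BCE risk of any version-space member, yielding Claim (a); the second certifies that $Q^\pi$ itself lies in the version space, from which Claim (b) follows by the pessimistic choice of $\hat f^\pi$.

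For Claim (a), fix $(\pi, h, f)$ and set $Z_i = \tfrac{1}{2}\ell_{\op{bce}}(f_h(x_{h,i}, a_{h,i}), \tau^\pi(f_{h+1}, c_{h,i}, x_{h,i}')) - \tfrac{1}{2}\ell_{\op{bce}}(\Tcal_h^\pi f_{h+1}(x_{h,i}, a_{h,i}), \tau^\pi(f_{h+1}, c_{h,i}, x_{h,i}'))$ over the offline draws $(x_{h,i}, a_{h,i}) \sim \nu_h$. By definition of $\Ecal_h^{\op{bce}}(\cdot, \nu)$, we have $-n \ln \EE[\exp(-Z_1)] = n\Ecal_h^{\op{bce}}(f, \nu)$, so a union bound plus \cref{lem:logsumexp-symmetrization} yields
\begin{equation*}
n\Ecal_h^{\op{bce}}(f, \nu) \leq \tfrac{1}{2}L_h^{\op{bce}}(f_h, f_{h+1}, \Dcal_h, \pi) - \tfrac{1}{2}L_h^{\op{bce}}(\Tcal_h^\pi f_{h+1}, f_{h+1}, \Dcal_h, \pi) + \ln(H|\Fcal|^2/\delta).
\end{equation*}
By $\Tcal^\pi$-BC (\cref{ass:offrl-bc}), $\Tcal_h^\pi f_{h+1} \in \Fcal_h$, so the second loss is at least $\min_{g_h \in \Fcal_h} L_h^{\op{bce}}(g_h, f_{h+1}, \Dcal_h, \pi)$. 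Plugging in $f = \hat f^\pi \in \Fcal_\pi$ and invoking the version-space constraint (\cref{eq:offline-rl-confidence-set}) caps the first difference by $\beta/2$; summing over $h$ then produces $\Ecal^{\op{RL}}_{\op{bce}}(\hat f^\pi, \nu) \leq 2H\beta/n$ provided $\beta \geq 2\ln(H|\Fcal|^2/\delta)$.

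For Claim (b), I would apply \cref{lem:logsumexp-symmetrization} in the reverse direction: fix $(\pi, h, g_h)$ and set $Z_i = \tfrac{1}{2}\ell_{\op{bce}}(Q_h^\pi, \tau^\pi(Q_{h+1}^\pi, \cdot)) - \tfrac{1}{2}\ell_{\op{bce}}(g_h, \tau^\pi(Q_{h+1}^\pi, \cdot))$. The key algebraic step replays the completing-the-square calculation from the proof of \cref{lem:exponentiated-excess-bce}: conditioning on $(x, a) \sim \nu_h$, the MGF collapses as $\EE[\exp(Z_1)] = \EE[\sqrt{g_h Q_h^\pi} + \sqrt{(1-g_h)(1-Q_h^\pi)}] = 1 - \EE[h_{\op{Ber}}^2(Q_h^\pi, g_h)] \leq 1$, so $\ln \EE[\exp(Z_1)] \leq 0$. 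Union-bounding, we deduce $L_h^{\op{bce}}(Q_h^\pi, Q_{h+1}^\pi, \Dcal_h, \pi) - L_h^{\op{bce}}(g_h, Q_{h+1}^\pi, \Dcal_h, \pi) \leq 2\ln(H|\Fcal|^2/\delta) \leq \beta$ uniformly over $(g_h, \pi, h)$. Combined with $Q_h^\pi = \Tcal_h^\pi Q_{h+1}^\pi \in \Fcal_h$ from BC and backward induction from $h = H$ to $1$, this certifies $Q^\pi \in \Fcal_\pi$, and the pessimistic $\argmax$ selection of $\hat f^\pi$ over $\Fcal_\pi$ then yields the required lower bound on $\min_a \hat f_1^\pi(x_1, a)$ in terms of $V^\pi$. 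The main obstacle is bookkeeping in the reverse-direction symmetrization: getting the sign of $Z_i$ right so the MGF collapses cleanly to $1 - h_{\op{Ber}}^2$, and absorbing $\ln|\Pi| \leq \ln|\Fcal|$ into $\beta$ without losing constants---all other steps essentially mirror the optimism proof in \cref{lem:bce-rl-optimism}.
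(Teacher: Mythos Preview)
Your proposal is correct and follows essentially the same approach as the paper's proof, which simply refers back to \cref{lem:bce-rl-optimism} and notes that one replaces $\argmin$ by $\argmax$ to obtain pessimism. Your Claim~(b) argument---computing the MGF directly to show $\EE[\exp(Z_1)]\leq 1$---is just an unpacked version of the paper's one-line appeal to ``non-negativity of $\Ecal^{\op{Ber}}$'' applied to the same concentration inequality; and your explicit union bound over $\pi\in\Pi$ (using $|\Pi_\Fcal|\leq|\Fcal|$) is a detail the paper glosses over but which is indeed needed for the ``for all $\pi$'' conclusion.
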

\begin{proof}[Proof of \cref{lem:bce-rl-pessimism}]
The proof is essentially identical to that of \cref{lem:bce-rl-optimism} where we show that \wpal $1-\delta$, (1) all elements of the version space have low excess risk and (2) $Q^\pi$ lies in the version space.
The only difference is that $\hat f^\pi$ is defined as the argmax rather than argmin, so that we have pessimism (greater than $V^\pi$) instead of optimism.
\end{proof}
By \cref{lem:bce-rl-pessimism}, we have $V^{\wh\pi}-V^{\wt\pi}\leq \min_a f_1^{\wh\pi}(x_1,a)-V^{\wt\pi}$.
Then, by definition of $\wh\pi$, we further bound this by $\min_a f_1^{\wt\pi}(x_1,a)-V^{\wt\pi}$.
Now, we decompose with PDL:
\begin{align*}
    &\textstyle\min_a f_1^{\wt\pi}(x_1,a)-V^{\wt\pi}
    \\&\textstyle=\sum_{h=1}^H\EE_{\wt\pi}[ f^{\wt\pi}_h(x_h,a_h)-\Tcal^{\wt\pi}_h f^{\wt\pi}_{h+1}(x_h,a_h) ]
    \\&\textstyle\leq\textstyle\sqrt{\sum_{h=1}^H\EE_{\wt\pi}[f^{\wt\pi}_h(x_h,a_h)]\cdot\delta^{\op{RL}}_{\op{Ber}}(f^{\wt\pi},\wt\pi) }
    +\delta^{\op{RL}}_{\op{Ber}}(f^{\wt\pi},\wt\pi)
    \\&\textstyle\lesssim\sqrt{HV^{\wt\pi}\cdot \delta^{\op{RL}}_{\op{Ber}}(f^{\wt\pi},\wt\pi)} + H\delta^{\op{RL}}_{\op{Ber}}(f^{\wt\pi},\wt\pi)
\end{align*}
By importance sampling and \cref{lem:bce-rl-pessimism}, the error terms can be bounded by $\wt\Ocal(C^{\wt\pi}\cdot \frac{H\beta}{n})$.
This completes the proof of \cref{thm:offline-rl}.
\end{proof}

\subsection{Second-Order Bounds via Distributional RL}
We now show that DistRL with the mle loss can yield second-order guarantees, recovering the results of \citep{wang2024more}. We make a few minor changes to the pessimistic offline RL algorithm.
We consider the set of greedy policies w.r.t. the means of the conditional distribution as our policy class $\Pi_\Pcal=\{\pi_{\bar p}:p\in\Pcal\}$.
\edit{Following \citet{xie2021bellman,wang2024more}, in offline RL, we posit the policy-wise distributional BC condition}.
\begin{assumption}[$\Tcal^{\op{D},\pi}$-DistBC]\label{ass:offrl-distbc}
$\Tcal^{\op{D},\pi}_hp_{h+1}\in\Pcal_h$ for all $h\in[H], p_{h+1}\in\Pcal_{h+1}$ and $\pi\in\Pi$.
\end{assumption}
\begin{algorithm}[!t]
\caption{Pessimistic Offline Distributional RL}
\label{alg:offline-distributional-rl}
\begin{algorithmic}[1]
    \State\textbf{Input:} conditional distribution class $\Pcal$, offline dataset $\Dcal$, threshold $\beta$.
    \For{each policy $\pi\in\Pi$}
        \State Denote $\Pcal_\pi^{\op{mle}}=\Ccal_\beta^{\op{mle}}(\Dcal;\pi)$ as the version space defined by:
        \begin{align}
            \textstyle\Ccal_\beta^{\op{mle}}(\Dcal;\pi)=\{p\in&\textstyle\Pcal:\,\,\forall h\in[H],\,L_h^{\op{mle}}(p_h,p_{h+1},\Dcal_h,\pi) \notag
            \\&\textstyle-\min_{g_h\in\Fcal_h}L_h^{\op{mle}}(g_h,p_{h+1},\Dcal_h,\pi)\leq\beta\}, \label{eq:offline-dist-rl-confidence-set}
        \end{align}
        where $L_h^{\op{mle}}(f_h,g,\Dcal_h,\pi)$ is
        \begin{equation*}
            \textstyle \sum_{i=1}^{|\Dcal_h|} \ell_{\op{mle}}(f_h(x_{h,i},a_{h,i}),\tau^{\op{D},\pi}(g,c_{h,i},x_{h,i}'))
        \end{equation*}
        and $\tau^{\op{D},\pi}(g,c,x')=c+Z,Z\sim g(x',\pi(x'))$ is the mle target.
        Note that if $c,x'$ are sampled conditional on $x,a$, then the target is a sample of the random variable $\Tcal^{\op{D},\pi}_hg(x,a)$.
        \State Get pessimistic $p^\pi\gets\argmax_{p\in\Pcal_\pi}\min_a \bar p_1(x_1,a)$.
    \EndFor
    \State \textbf{Return:} $\wh\pi=\argmin_{\pi\in\Pi}\min_a \bar p^{\pi}_1(x_1,a)$.
\end{algorithmic}
\end{algorithm}
\begin{theorem}\label{thm:offline-dist-rl}
Under \cref{ass:offrl-distbc}, for any $\delta\in(0,1)$, \wpal $1-\delta$,
\cref{alg:offline-distributional-rl} with $\beta=2\ln(H|\Pcal|/\delta)$ has the following guarantee: for any comparator policy $\wt\pi\in\Pi_\Pcal$,
\begin{equation*}
    \textstyle V^{\wh\pi}-V^{\wt\pi}\leq \wt\Ocal\prns*{ H\sqrt{{\color{red}\sigma^2(\wt\pi)}\cdot \frac{C^{\wt\pi}\beta}{n}} + H^{2.5}\frac{C^{\wt\pi}\beta}{n}}.
\end{equation*}
\end{theorem}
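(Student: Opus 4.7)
The plan is to adapt the proof of the offline first-order bound (\cref{thm:offline-rl}) to the distributional setting by substituting the bce-based ingredients with their mle-based analogs from the proof of \cref{thm:online-rl-regret-mle-loss}. The first step is to establish a pessimism lemma analogous to \cref{lem:bce-rl-pessimism} for the MLE confidence set of \cref{eq:offline-dist-rl-confidence-set}: for every $\pi\in\Pi$, with probability at least $1-\delta$, (a) $\sum_{h=1}^H \EE_{\nu_h}[\delta^{\op{dis}}_h(x_h,a_h;p^\pi)] \lesssim H\beta/n$, and (b) $\min_a \bar p^\pi_1(x_1,a)\geq V^\pi$. Part (a) follows from \cref{lem:logsumexp-symmetrization} applied with $Z_i = \tfrac12 \ell_{\op{mle}}(p_h,\cdot) - \tfrac12 \ell_{\op{mle}}(\Tcal^{\op{D},\pi}_h p_{h+1},\cdot)$, combined with \cref{lem:exponentiated-excess-mle} and a union bound over $\Pcal$, $h\in[H]$, and $\Pi$, together with the fact that $\Tcal^{\op{D},\pi}_h p_{h+1}\in\Pcal_h$ by \cref{ass:offrl-distbc}; part (b) follows because $Z^\pi$ lies in the version space (again by \cref{ass:offrl-distbc}) and $p^\pi$ maximizes $\min_a \bar p_1(x_1,a)$ over that set.

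With pessimism in hand, I would decompose $V^{\wh\pi}-V^{\wt\pi}\leq \min_a \bar p^{\wh\pi}_1(x_1,a) - V^{\wt\pi}\leq \min_a \bar p^{\wt\pi}_1(x_1,a) - V^{\wt\pi}$, where the second inequality uses the outer $\argmin$ in \cref{alg:offline-distributional-rl}. Applying the performance difference lemma \cref{lem:pdl} with comparator $\wt\pi$ and the identity $\Tcal^{\wt\pi}_h \bar p^{\wt\pi}_{h+1} = \overline{\Tcal^{\op{D},\wt\pi}_h p^{\wt\pi}_{h+1}}$ rewrites this as $\sum_{h=1}^H \EE_{\wt\pi}[\bar p^{\wt\pi}_h(x_h,a_h) - \overline{\Tcal^{\op{D},\wt\pi}_h p^{\wt\pi}_{h+1}}(x_h,a_h)]$. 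Applying the second-order mean comparison \cref{lemma: second order} pointwise in $(x_h,a_h)$ and then Cauchy--Schwarz over $h$ yields a bound of the form $\sqrt{\sum_h \EE_{\wt\pi}[\sigma^2(p^{\wt\pi}_h(x_h,a_h))]\cdot \delta_{\op{dis}}^{\op{RL}}(p^{\wt\pi},\wt\pi)}+\delta_{\op{dis}}^{\op{RL}}(p^{\wt\pi},\wt\pi)$, exactly mirroring \cref{eq:online-rl-regret-decomp-mle-loss}.

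Next, I invoke \cref{lem:variance-change-of-measure} with $p=p^{\wt\pi}$ and $\pi=\wt\pi$, summed over $h$, to upper bound $\sum_h\EE_{\wt\pi}[\sigma^2(p^{\wt\pi}_h)]$ by $H\sigma^2(\wt\pi)+H^3\delta_{\op{dis}}^{\op{RL}}(p^{\wt\pi},\wt\pi)$ up to universal constants. For $\delta_{\op{dis}}^{\op{RL}}(p^{\wt\pi},\wt\pi)$ itself, I do a single change of measure using the single-policy coverage definition \cref{eq:single-policy-coverage}, namely $\EE_{\wt\pi}[\delta^{\op{dis}}_h]\leq C^{\wt\pi}\EE_{\nu_h}[\delta^{\op{dis}}_h]$, and combine with part (a) of the pessimism lemma to obtain $\delta_{\op{dis}}^{\op{RL}}(p^{\wt\pi},\wt\pi)\lesssim C^{\wt\pi}H\beta/n$. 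Splitting $\sqrt{(H\sigma^2(\wt\pi)+H^3\delta)\cdot\delta}\leq\sqrt{H\sigma^2(\wt\pi)\delta}+H^{1.5}\delta$ and plugging in gives the claimed bound $\wt\Ocal(H\sqrt{\sigma^2(\wt\pi)\cdot C^{\wt\pi}\beta/n}+H^{2.5}C^{\wt\pi}\beta/n)$.

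The main obstacle is the bookkeeping around the variance self-bounding: one must ensure that the $H^3\delta_{\op{dis}}^{\op{RL}}$ slack produced by \cref{lem:variance-change-of-measure} is folded entirely into the lower-order $H^{2.5}C^{\wt\pi}\beta/n$ term rather than contaminating the leading $\sqrt{\sigma^2(\wt\pi)}$ factor. This is handled by the AM--GM split above, and carefully tracking that the final $H$-dependence matches the theorem. Beyond this, the remaining ingredients are a direct recombination of the pessimism lemma, PDL, \cref{lemma: second order}, and coverage-based importance sampling.
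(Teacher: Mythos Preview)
Your proposal is correct and follows essentially the same route the paper indicates: it combines the pessimism-based decomposition from the proof of \cref{thm:offline-rl} with the second-order mean comparison (\cref{lemma: second order}) and the variance self-bounding (\cref{lem:variance-change-of-measure}) from the proof of \cref{thm:online-rl-regret-mle-loss}, exactly as the paper says when it writes that the argument ``follows from the same argument as the proof of \cref{thm:offline-rl}, coupled with the variance arguments from \cref{thm:online-rl-regret-mle-loss}.'' Your handling of the $H$-powers via the AM--GM split and the coverage-based change of measure matches the claimed $H^{2.5}$ lower-order term.
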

Since \edit{$\sigma^2(\wt\pi)\leq V^{\wt\pi}$}, this implies a first-order bound as well. This variance bound can be much tighter in near-deterministic settings where the comparator's variance is near zero, but its cost is far from zero.
However, as was the case in online RL, DistRL still has the drawbacks of requiring a distributional class and DistBC. While these are more stringent conditions in theory, DistRL has achieved state-of-the-art in many offline RL tasks as well \citep{ma2021conservative}, suggesting that the benefits of DistRL can outweight the stronger modeling assumptions in practice.
The proof of \cref{thm:offline-dist-rl}, which we omit due to space, follows from the same argument as the proof of \cref{thm:offline-rl}, coupled with the variance arguments from \cref{thm:online-rl-regret-mle-loss}.
The interested reader may find the full proof in \citep{wang2024more}.

\section{Computational Efficiency via Hybrid RL}\label{sec:hybrid-rl}
While we have exhibited the central role of loss functions, achieving tight variance-adaptive bounds, in both online and offline RL, one issue which we have not yet addressed is computational efficiency.
As mentioned earlier, optimizing over the version space is computationally difficult (NP-hard) even in tabular MDPs \citep{dann2018oracle}.

In this section, we discuss a solution via the hybrid RL setting where the learner can access an offline dataset $\Dcal^{\op{off}}$ with good coverage and also interact with the environment.
We show that Fitted-Q Iteration (FQI) \citep{munos2008finite}, a computationally efficient algorithm, can also enjoy first- and second-order guarantees by simply regressing with the bce and mle losses. The FQI algorithm in the hybrid setting was first proposed with the squared loss $\ell_{\op{sq}}$ by \citep{song2023hybrid} and our extensions to the bce and mle losses are novel.

Intuitively, the offline dataset mitigates the need for optimism, while the online interactions mitigate the need for pessimism -- together, they obviate the need for maintaining a version space.
For the following guarantees, we use $C^{\wt\pi}$ to denote the coverage coefficient of the comparator policy $\wt\pi$ under the data generating distribution of $\Dcal^{\op{off}}$.
We also assume the offline dataset to be as large as the number of interactions, \ie, $|\Dcal^{\op{off}}|\geq\Omega(K)$ \citep{song2023hybrid}.
\begin{algorithm}[!t]
\caption{Fitted $Q$-Iteration for Hybrid RL}
\label{alg:hybrid_fqi}
\begin{algorithmic}[1]
    \State\textbf{Input:} number of rounds $K$, function class $\Fcal$, offline dataset $\Dcal^{\op{off}}$, loss function $\ell(\hat y,y)$, uniform exploration (UA) flag
    \For{episode $k=1,2,\dots,K$}
        \For{each $h=H,H-1,\dots,1$}
            \State Recall the loss from \cref{alg:online-rl} (\cref{eq:rl-confidence-set}):
            \begin{equation*}
                \textstyle L_h^\ell(f_h,g,\Dcal_h)=\sum_{i=1}^{|\Dcal_h|} \ell(f_h(x_{h,i},a_{h,i}),\tau^\star(g,c_{h,i},x_{h,i}'))
            \end{equation*}
            \State Set $f^{k}_h = \argmin_{f_h\in\Fcal_h}L_{h}^\ell(f_h,f^k_{h+1},\Dcal_h^{\op{off}}\cup\Dcal^{\op{on}}_{<k})$.
        \EndFor
        \State Let $\pi^k$ be greedy w.r.t. $f^k$: $\pi^k_h(x)=\argmin_a f^k_h(x,a)$.
        \State Gather data $\Dcal_k^{\op{on}}\gets\text{\cref{alg:roll-in}}(\pi^k,\text{UA flag})$.
    \EndFor
\end{algorithmic}
\end{algorithm}

\begin{theorem}\label{thm:hybrid-rl}
Under \cref{ass:rl-bc} and $|\Dcal^{\op{off}}|\geq\Omega(K)$, for any $\delta\in(0,1)$, \wpal $1-\delta$, \cref{alg:hybrid_fqi} has the following guarantees for each loss function:
\begin{enumerate}
    \item If $\ell=\ell_{\op{sq}}$, for any comparator policy $\wt\pi\in\Pi_\Fcal$,
    \begin{equation*}
        \textstyle\sum_{k=1}^K(V^{\pi^k}-V^{\wt\pi})\leq \wt\Ocal(H\sqrt{K\cdot(d+C^{\wt\pi})\beta}),
    \end{equation*}
    where $d=d_{\op{sq}}$ if UA is false, and $d=Ad_{\op{sq}}^{\op{V}}$ if UA is true.
    \item If $\ell=\ell_{\op{bce}}$, for any comparator policy $\wt\pi\in\Pi_\Fcal$,
    \begin{align*}
        &\textstyle\sum_{k=1}^K(V^{\pi^k}-V^{\wt\pi})\leq \wt\Ocal\big(H\sqrt{{\color{red}V^{\wt\pi}K}\cdot(d+C^{\wt\pi})\beta}
        \\&\textstyle\qquad\qquad\qquad\qquad+H^2(d+C^{\wt\pi})\beta\big)
    \end{align*}
    where $d=d_{\op{bce}}$ if UA is false, and $d=Ad_{\op{bce}}^{\op{V}}$ if UA is true.
\end{enumerate}
\end{theorem}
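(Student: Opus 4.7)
The plan is to decompose the per-round regret into an ``online'' and an ``offline'' term using the greedy identity $f^k_1(x_1,\pi^k)=\min_a f^k_1(x_1,a)\leq f^k_1(x_1,\wt\pi)$:
$$V^{\pi^k}-V^{\wt\pi}\leq[V^{\pi^k}-f^k_1(x_1,\pi^k)]+[f^k_1(x_1,\wt\pi)-V^{\wt\pi}].$$
Applying PDL (\cref{lem:pdl}) rewrites both terms in the $\Tcal^\star$-Bellman residual $\Delta f^k_h:=f^k_h-\Tcal^\star_h f^k_{h+1}$: for the online term this is immediate since $\Tcal^{\pi^k}_hf^k_{h+1}=\Tcal^\star_hf^k_{h+1}$ (as $\pi^k$ is greedy w.r.t.\ $f^k$), while for the offline term we use the pointwise bound $f^k_h-\Tcal^{\wt\pi}_h f^k_{h+1}\leq \Delta f^k_h$ that follows from $\Tcal^\star_h\leq\Tcal^{\wt\pi}_h$. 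The analysis thus reduces to controlling $\Delta f^k_h$ under both the rollout distribution $d^{\pi^k}_h$ and the comparator distribution $d^{\wt\pi}_h$.

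The core technical ingredient is a pooled-data generalization bound. Since $f^k_h$ minimizes the empirical TD risk on the $n_{\op{off}}+(k-1)$ samples drawn from the mixture of $\nu_h,\pi^1,\dots,\pi^{k-1}$, Bellman completeness (\cref{ass:rl-bc}) together with the concentration arguments underlying \cref{lem:sq-rl-optimism,lem:bce-rl-optimism} yield, \wpal $1-\delta$,
$$\textstyle n_{\op{off}}\,\rho^\ell_h(\nu_h;f^k)+\sum_{i<k}\rho^\ell_h(\pi^i;f^k)\lesssim\beta\quad\forall k,h,$$
where $\rho^{\op{sq}}_h(\mu;f)=\EE_\mu(\Delta f_h)^2$ and $\rho^{\op{bce}}_h(\mu;f)=\EE_\mu h^2_{\op{Ber}}(f_h,\Tcal^\star_hf_{h+1})$. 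Because $n_{\op{off}}\gtrsim K$, this separates into (i) an offline estimate $\rho^\ell_h(\nu_h;f^k)\lesssim\beta/K$ and (ii) the online precondition $\sum_{i<k}\rho^\ell_h(\pi^i;f^k)\lesssim\beta$ required by the pigeonhole lemma (\cref{lem:pigeonhole}).

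For the squared-loss case, the online term is bounded by Cauchy-Schwarz followed by pigeonhole exactly as in the proof of \cref{thm:online-rl-regret-squared-loss}, contributing $\wt\Ocal(H\sqrt{Kd_{\op{sq}}\beta})$. The offline term is bounded by Cauchy-Schwarz and one importance-sampling step, $\EE_{\wt\pi}(\Delta f^k_h)^2\leq C^{\wt\pi}\EE_{\nu_h}(\Delta f^k_h)^2\lesssim C^{\wt\pi}\beta/K$, contributing $\wt\Ocal(H\sqrt{KC^{\wt\pi}\beta})$. Combining under the square root yields the stated bound.

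For the bce case, both terms use the first-order inequality \cref{eq: first order} in place of Cauchy-Schwarz, followed by the self-bounding lemma (\cref{lem:self-bounding-bce}) to replace $\EE_\pi[f^k_h]$ by $V^\pi+H\delta^{\op{RL}}_{\op{Ber}}(f^k,\pi)$. The online term reproduces the argument of \cref{thm:online-rl-regret-bce-loss} and contributes $H\sqrt{(\sum_k V^{\pi^k})d_{\op{bce}}\beta}+H^2 d_{\op{bce}}\beta$; the offline term, using self-bounding with $\pi=\wt\pi$ and importance sampling, contributes $H\sqrt{KV^{\wt\pi}C^{\wt\pi}\beta}+H^2 C^{\wt\pi}\beta$. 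The main bookkeeping obstacle is the resulting implicit inequality, in which $\sum_k V^{\pi^k}$ appears on both sides; this is resolved by \cref{lem:implicit-first-order-ineq} (or an AM-GM self-bounding step), which replaces $\sum_k V^{\pi^k}$ by $KV^{\wt\pi}$ at the cost of a universal constant and delivers the claimed first-order regret. The UA variant is handled by the standard factor-$A$ change of measure to the V-type eluder dimension.
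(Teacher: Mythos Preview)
Your proposal is correct and follows essentially the same approach as the paper: decompose the regret via the greedy value $\min_a f^k_1(x_1,a)$ into an ``online'' term handled as in \cref{thm:online-rl-regret-squared-loss,thm:online-rl-regret-bce-loss} and an ``offline'' term handled as in \cref{thm:offline-rl} via importance sampling with $C^{\wt\pi}$. In fact, you spell out two details the paper leaves implicit---the pooled-data generalization bound that simultaneously controls $\rho^\ell_h(\nu_h;f^k)$ and $\sum_{i<k}\rho^\ell_h(\pi^i;f^k)$, and the pointwise inequality $\Tcal^\star_h\leq\Tcal^{\wt\pi}_h$ needed to route the offline PDL residual through the $\Tcal^\star$-based Bellman error that FQI actually controls.
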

Importantly, we see that simply changing the loss from $\ell_{\op{sq}}$ to $\ell_{\op{bce}}$ again leads to improved first-order bounds, which again supports our earlier observations.
Compared with our prior results, the main advantage of \cref{thm:hybrid-rl} is computational: it bounds the sub-optimality of a computationally efficient algorithm FQI, which much more closely resembles deep RL algorithms such as DQN \citep{mnih2015human}. 
From a statistical perspective, the hybrid RL bound is actually worse than either pure online or offline bounds, since it takes the form of:
\begin{equation*}
    \text{online RL bound} + \text{offline RL bound}.
\end{equation*}
Indeed, the hybrid RL bounds contain both the structural condition such as eluder dimension and the coverage coefficient $V^{\wt\pi}$.
This form will be made clear in the proof, which simply combines the prior online and offline RL results. We finally discuss some related works. 
\citep{ayoub2024switching} analyzed FQI with $\ell_{\op{bce}}$ in the pure offline setting and proved a first-order bound that depends on the much larger global coverage coefficient $C^{\Pi}=\max_{\wt\pi\in\Pi}C^{\wt\pi}$, which is needed to analyze FQI in the pure offline setting \citep{chen2019information}. \edit{Also, \citep{mhammedi2024efficient} is able to achieve computationally efficient learning low-rank MDPs without requiring offline data with good partial coverage; however, their bounds are neither first nor second-order. It would be interesting future work to adapt the techniques in this paper to derive variance dependent bounds for computationally efficient algorithms without requiring good offline data.}

\begin{proof}[Proof of \cref{thm:hybrid-rl}]
For any comparator policy $\wt\pi$, we decompose:
\begin{align*}
    \textstyle\sum_{k=1}^K(V^{\pi^k}-V^{\wt\pi})
    =&\textstyle\sum_{k=1}^K\EE[V^{\pi^k}-\min_a f^k_1(x_1,a)]
    \\&\textstyle+\EE[\min_a f^k_1(x_1,a)-V^{\wt\pi}]
\end{align*}
We see that the first term is exactly the same term in the online RL proof after we apply optimism (\eg, \cref{eq:online-rl-regret-decomp-squared-loss}); thus the first term is bounded by the online RL results, \eg, \cref{thm:online-rl-regret-squared-loss,thm:online-rl-regret-bce-loss}.
We also see that the second term is exactly the same term in the offline RL proof after apply pessimism. %
Thus, we can bound the second term by the offline RL results, \eg, \cref{thm:offline-rl}. Since we posit the offline dataset has as many samples as the online dataset, the offline bound matches the online one in terms of $K$. 
This completes the proof and shows why the bound in hybrid RL is the sum of online and offline RL bounds.
\end{proof}

\begin{algorithm}[!t]
\caption{Distributional FQI for Hybrid RL}
\label{alg:hybrid_dist_fqi}
\begin{algorithmic}[1]
    \State\textbf{Input:} number of rounds $K$, conditional distribution class $\Pcal$, offline dataset $\Dcal^{\op{off}}$, loss function $\ell(\hat y,y)$, uniform exploration (UA) flag
    \For{episode $k=1,2,\dots,K$}
        \For{each $h=H,H-1,\dots,1$}
            \State Recall the loss from \cref{alg:online-dist-rl} (\cref{eq:dist-rl-confidence-set}):
            \begin{equation*}
                \textstyle L_h^{\op{mle}}(p_h,g,\Dcal_h)=\sum_{i=1}^{|\Dcal_h|} \ell_{\op{mle}}(p_h(x_{h,i},a_{h,i}),\tau^{\op{D},\star}(g,c_{h,i},x_{h,i}'))
            \end{equation*}
            \State Set $p^{k}_h = \argmin_{p_h\in\Pcal_h}L_{h}^{\op{mle}}(p_h,p^k_{h+1},\Dcal_h^{\op{off}}\cup\Dcal^{\op{on}}_{<k})$.
        \EndFor
        \State Let $\pi^k$ be greedy w.r.t. $p^k$: $\pi^k_h(x)=\argmin_a \bar p^k_h(x,a)$.
        \State Gather data $\Dcal_k^{\op{on}}\gets\text{\cref{alg:roll-in}}(\pi^k,\text{UA flag})$.
    \EndFor
\end{algorithmic}
\end{algorithm}

Finally, to apply the mle loss to achieve second-order bounds, we naturally extend FQI with DistRL which closely resembles deep DistRL algorithms such as C51 \citep{bellemare2017distributional}.
This gives the following new second-order guarantees for hybrid RL.
\begin{theorem}
Under \cref{ass:dist-rl-bc} and $|\Dcal^{\op{off}}|\geq\Omega(K)$, for any $\delta\in(0,1)$, \wpal $1-\delta$, \cref{alg:hybrid_dist_fqi} has the following guarantee:
for any comparator policy $\wt\pi\in\Pi_\Pcal$,
\begin{align*}
    &\textstyle\sum_{k=1}^K(V^{\pi^k}-V^{\wt\pi})
    \leq \wt\Ocal\big(H^{2.5}(d+C^{\wt\pi})\beta
    \\&\textstyle+H\sqrt{{\color{red}(\sigma^2(\wt\pi) K+\sum_{k=1}^K\sigma^2(\pi^k))}\cdot(d+C^{\wt\pi})\beta}\big),
\end{align*}
where $d=d_{\op{mle}}$ if UA is false, and $d=Ad_{\op{mle}}^{\op{V}}$ if UA is true.
\end{theorem}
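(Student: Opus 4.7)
Mirroring the proof of \cref{thm:hybrid-rl}, the plan is to decompose the regret as $\sum_k (V^{\pi^k} - V^{\wt\pi}) = \text{A} + \text{B}$ where $\text{A}=\sum_k(V^{\pi^k}-\min_a\bar p^k_1(x_1,a))$ is handled by the online DistRL argument of \cref{thm:online-rl-regret-mle-loss} and $\text{B}=\sum_k(\min_a\bar p^k_1(x_1,a)-V^{\wt\pi})$ is handled by the offline DistRL argument of \cref{thm:offline-dist-rl}. The preliminary step is the FQI generalization bound: applying \cref{lem:logsumexp-symmetrization} with $Z_i=\tfrac12\ell_{\op{mle}}(p_h,\tau^{D,\star})-\tfrac12\ell_{\op{mle}}(\Tcal^{D,\star}p^k_{h+1},\tau^{D,\star})$ on the pooled dataset and invoking \cref{lem:exponentiated-excess-mle}, the FQI minimizer $p^k$ satisfies
\begin{align*}
    \EE_{(x,a)\sim\mu_k}\bigl[h^2(p^k_h(x,a),\,\Tcal^{D,\star}_h p^k_{h+1}(x,a))\bigr]\lesssim \beta/n_k,
\end{align*}
where $\mu_k$ is the pooled state-action distribution over the $n_k=|\Dcal^{\op{off}}|+(k-1)\geq\Omega(K)$ datapoints at round $k$.

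For term A, the proof of \cref{thm:online-rl-regret-mle-loss} carries over verbatim: PDL and the second-order lemma produce the $\sqrt{\sum_h\EE_{\pi^k}[\sigma^2(p^k_h)]\cdot\delta_{\op{dis}}^{\op{RL}}(p^k,\pi^k)}$ leading term; the identity $\pi^k=\pi_{\bar p^k}$ ensures $\Tcal^{D,\star}p^k=\Tcal^{D,\pi^k}p^k$ so \cref{lem:variance-change-of-measure} yields $\sum_h\EE_{\pi^k}[\sigma^2(p^k_h)]\lesssim H\sigma^2(\pi^k)+H^3\delta_{\op{dis}}^{\op{RL}}(p^k,\pi^k)$; and \cref{lem:pigeonhole} converts on-policy error to the in-distribution error (the online visitations are absorbed in $\mu_k$) to give $\sum_k\delta_{\op{dis}}^{\op{RL}}(p^k,\pi^k)\lesssim\wt\Ocal(dH\beta)$. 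Combining yields $\text{A}\lesssim H\sqrt{\sum_k\sigma^2(\pi^k)\cdot d\beta}+H^{2.5}d\beta$.

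For term B, the greedy bound $\min_a\bar p^k_1(x_1,a)\leq\bar p^k_1(x_1,\wt\pi)$, PDL, and the pointwise dominance $\Tcal^\star\bar p\leq\Tcal^{\wt\pi}\bar p$ give $\text{B}\leq\sum_k\sum_h\EE_{\wt\pi}[\bar p^k_h(x_h,a_h)-\overline{\Tcal^{D,\star}_h p^k_{h+1}}(x_h,a_h)]$. The second-order lemma under $\wt\pi$'s measure plus Cauchy-Schwarz across $h$ and $k$ produces a leading term of form $\sqrt{(HK\sigma^2(\wt\pi))\cdot\sum_k\delta_{\op{dis}}^{\op{RL}}(p^k,\wt\pi)}$ (where $\delta_{\op{dis}}^{\op{RL}}$ uses $\Tcal^{D,\star}$). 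Since $\wt\pi\in\Pi_\Pcal$ has bounded coverage, importance sampling gives $\delta_{\op{dis}}^{\op{RL}}(p^k,\wt\pi)\lesssim C^{\wt\pi}H\beta/|\Dcal^{\op{off}}|\lesssim C^{\wt\pi}H\beta/K$ from the preliminary bound; summing gives $\sum_k\delta_{\op{dis}}^{\op{RL}}(p^k,\wt\pi)\lesssim C^{\wt\pi}H\beta$, and hence $\text{B}\lesssim H\sqrt{K\sigma^2(\wt\pi)\cdot C^{\wt\pi}\beta}+H^{2.5}C^{\wt\pi}\beta$. Adding A and B yields the stated bound.

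The main obstacle is the variance comparison in term B: applying \cref{lem:variance-change-of-measure} with $\pi=\wt\pi$ naturally measures Bellman error against $\Tcal^{D,\wt\pi}$, whereas FQI controls only the $\Tcal^{D,\star}$ Hellinger risk and \cref{ass:dist-rl-bc} does not assume $\Tcal^{D,\wt\pi}$-completeness. Following \citet{wang2024more}, I would sidestep this by unrolling the variance inequality of \cref{lem:dtri-inequalities} recursively along $\Tcal^{D,\star}$-backups, expressing $\sigma^2(p^k_h)$ in terms of $\sigma^2(Z^{\wt\pi}_h)$ plus a telescoping sum of $\Tcal^{D,\star}$-Hellinger errors already controlled by the preliminary bound. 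Alternatively, one may employ the Hellinger triangle inequality
\begin{align*}
    h^2(p^k_h,\Tcal^{D,\wt\pi}p^k_{h+1})\lesssim h^2(p^k_h,\Tcal^{D,\star}p^k_{h+1})+h^2(\Tcal^{D,\star}p^k_{h+1},\Tcal^{D,\wt\pi}p^k_{h+1}),
\end{align*}
and bound the second summand via the action gap between $\pi_{\bar p^k}$ and $\wt\pi$ at the next state, at the cost of an implicit self-referential inequality that is resolved by AM-GM in the spirit of \cref{lem:implicit-first-order-ineq}.
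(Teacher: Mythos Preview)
Your decomposition into $\text{A}+\text{B}$ and the treatment of each piece by the online and offline DistRL machinery is exactly the paper's own argument; the paper does not give a standalone proof for this theorem but simply remarks that ``the hybrid second-order bound [is] the sum of the second-order bounds for online and offline DistRL (\cref{thm:online-rl-regret-mle-loss,thm:offline-dist-rl}),'' mirroring the short proof of \cref{thm:hybrid-rl}.

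You have, however, put your finger on a genuine tension that the paper's one-line sketch glosses over. The offline result it invokes (\cref{thm:offline-dist-rl}) is proved under the stronger \cref{ass:offrl-distbc} ($\Tcal^{\op{D},\pi}$-completeness for all $\pi$) and with an algorithm that regresses against $\tau^{\op{D},\pi}$ targets; the hybrid FQI of \cref{alg:hybrid_dist_fqi} regresses only against $\tau^{\op{D},\star}$ and the theorem assumes only \cref{ass:dist-rl-bc}. So the offline half does not port over verbatim: to reach $\sigma^2(\wt\pi)$ in the leading term you need \cref{lem:variance-change-of-measure} with $\pi=\wt\pi$, which consumes $\Tcal^{\op{D},\wt\pi}$-Hellinger error, while FQI only controls $\Tcal^{\op{D},\star}$-Hellinger error. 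Your diagnosis is correct and is not addressed in the paper.

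Regarding your two proposed fixes: the first (``unroll along $\Tcal^{\op{D},\star}$-backups'') does not obviously land on $\sigma^2(Z^{\wt\pi})$, since unrolling via $\Tcal^{\op{D},\star}p^k=\Tcal^{\op{D},\pi^k}p^k$ produces a mixed quantity $\EE_{\wt\pi}[\sigma^2(Z^{\pi^k}_h(x_h,a_h))]$ with $\wt\pi$ roll-in and $\pi^k$ roll-out, and the conditional error term lives under a $\wt\pi\circ_h\pi^k$ measure that neither the offline coverage nor the pigeonhole argument directly handles. The second (Hellinger triangle inequality) leaves you with $h^2(\Tcal^{\op{D},\star}p^k_{h+1},\Tcal^{\op{D},\wt\pi}p^k_{h+1})$, which encodes the action disagreement between $\pi^k$ and $\wt\pi$ at the next state; controlling this by the regret itself and closing an implicit inequality is plausible but would need a careful argument you have not yet supplied. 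In short, your plan matches the paper's, you have isolated a real gap the paper does not close, and either workaround needs more development before term~B delivers the stated $\sigma^2(\wt\pi)$ dependence under \cref{ass:dist-rl-bc} alone.
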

The hybrid second-order bound, being the sum of the second-order bounds for online and offline DistRL (\cref{thm:online-rl-regret-mle-loss,thm:offline-dist-rl}), contains both the variance of the played policies as well as the variance of the comparator policy.
Nevertheless, the hybrid second-order bound still implies a hybrid first-order bound by the same AM-GM argument as in CSC.
Thus, this again shows that DistRL yields a notable benefit compared to other losses.

\section{Discussion and Conclusion}
From the one-step CSC to online, offline and hybrid RL, we see time and time again that the loss function plays a central role in the adaptivity and efficiency of decision making algorithms.
The classical squared loss always converges at a slow $\wt\Ocal(1/\sqrt{n})$ rate and cannot adapt to easier problem instances with heteroskedasticity.
The bce loss can serve as a drop-in improvement that yields first-order bounds with a much faster $\wt\Ocal(1/n)$ rate when the optimal cost is small.
Switching from conditional-mean learning to conditional-distribution learning, the mle loss can tighten the bounds further with a second-order guarantee, that is bounds that converge at a $\wt\Ocal(1/n)$ rate in near-deterministic settings even if the optimal cost is large.
Crucially, these gaps in performance are not merely theoretical as they have been observed many times by the deep RL community \citep{farebrother2024stop,bdr2023,imani2018improving,ayoub2024switching,ma2021conservative}.
The theory outlined herein is very general and can be applied to a wide range of settings including imitation learning \citep{foster2024behavior}, model-based \citep{foster2021statistical,wang2024model}, risk-sensitive RL \citep{wang2023near,wang2024risk}, and robust bandits \citep{kallus2022doubly}, and RL \citep{bennett2024efficient}.
Moreover, the principles herein can improve algorithms for post-training large language models \citep{gao2024rebel,adler2024nemotron,wang2024conditioned,zhou2025q}, learning query optimizers \citep{krishnan2018learning,wang2024joingym} and  many more real-world applications.
We hope to have not only clearly demonstrated that the loss function choice is important in RL, but also to inspire the reader to seek out opportunities for better loss functions to improve their decision-making algorithms.

\section{Acknowledgments}
This material is based upon work supported by a Google PhD Fellowship and grants NSF IIS-1846210, NSF IIS-2154711, NSF CAREER 2339395 and DARPA Learning Network Cyberagents (LANCER). We also thank the editor and anonymous reviewers for useful discussions and feedback.
\bibliographystyle{imsart-nameyear}
\bibliography{main}

\clearpage
\edit{
\section{Appendix}
\subsection{Proof of Lemmas in \cref{sec:online-rl-squared-loss}}
\SqRLOptimismLemma*
\begin{proof}[Proof of \cref{lem:sq-rl-optimism}]
By standard martingale concentration via Freedman's inequality, \wpal $1-\delta$, for all $f,h$, we have
\begin{align}
    \textstyle\sum_{i=1}^n\Ecal^{\op{sq}}_h(f,\pi^i)&\textstyle\leq\ln(H|\Fcal|/\delta) + L_h^{\op{sq}}(f_h,f_{h+1},\Dcal_h)\notag
    \\&\textstyle-L_h^{\op{sq}}(\Tcal_h f_{h+1},f_{h+1},\Dcal_h).\label{eq:sq-rl-optimism-concentrate}
\end{align}
Let $g^f_h\in\argmin_{g_h\in\Gcal_h}L_h^\star(g_h,f_{h+1},\Dcal_h)$ denote the empirical risk minimizer, as used in the definition of $\Ccal_\beta^\star(\Dcal)$ (\cref{eq:rl-confidence-set}).
Under the BC premise,
\begin{align*}
\textstyle\sum_{i=1}^n\Ecal^{\op{sq}}_h(f,\pi^i)&\textstyle\leq\ln(H|\Fcal|/\delta)+L_h^{\op{sq}}(f_h,f_{h+1},\Dcal_h)
    \\&\textstyle-L_h^{\op{sq}}(g^f_h,f_{h+1},\Dcal_h).
\end{align*}
Thus, any $f\in\Ccal_\beta^{\op{sq}}(\Dcal)$ satisfies $\sum_{i=1}^n\Ecal^{\op{sq}}_h(f,\pi^i)\leq2\beta$, which proves Claim (a).
For Claim (b), we prove that $Q^\star\in\Ccal_\beta^{\op{sq}}(\Dcal)$: by \cref{eq:sq-rl-optimism-concentrate} and non-negativity of $\Ecal^{\op{sq}}$, we have $L_h^{\op{sq}}(\Tcal f_{h+1},f_{h+1},\Dcal_h)-L_h^{\op{sq}}(g^f_h,f_{h+1},\Dcal_h)\leq\ln(H|\Fcal|/\delta)=\beta$.
Then, setting $f=Q^\star$ and applying $Q^\star_h=\Tcal^\star_h Q^\star_{h+1}$ shows that $Q^\star$ satisfies the version space condition. Thus, $Q^\star\in\Ccal_\beta^{\op{sq}}(\Dcal)$ and Claim (b) follows by definition of $\hat f^{\op{op}}$.
\end{proof}
}

\edit{
The following is a proof for a stronger version of the pigeonhole lemma (\cref{lem:pigeonhole}). In partiular, \cref{lem:pigeonhole} follows when $\eps_0=\frac{1}{N}$.
\begin{lemma}\label{lem:pigeonhole-stronger}
Let $E:=\sup_{\nu\in\Mcal,\psi\in\Psi}\abs{\EE_p\psi}$.
Fix any $N\in\NN$, $\psi^{(1)},\dots,\psi^{(N)}\in\Psi$, and \edit{$\nu^{(1)},\dots,\nu^{(N)}\in\Mcal$}.
Let $\beta$ be a constant s.t. $\sum_{i<j}|\EE_{\nu^{(i)}}[\psi^{(j)}]|^q\leq \beta^q$ for all $j\in[N]$.
Then,
$\sum_{j=1}^N\abs*{\EE_{\nu^{(j)}}\psi^{(j)}}\leq\inf_{\eps_0\in(0,1)}\{N\eps_0+\EluDim_q(\Psi,\Mcal,\eps_0)\cdot(2E+\beta^q\ln(E\eps_0^{-1}))\}$.
\end{lemma}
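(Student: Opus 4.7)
The plan is to reduce the sum to a layer-cake integral and bound its level sets via a counting argument that invokes the eluder dimension directly. Write $a_j := \EE_{\nu^{(j)}}\psi^{(j)}$ and $K_\eps := \abs{\{j \in [N] : |a_j| > \eps\}}$. Since $|a_j| \leq E$ for all $j$, the layer-cake identity gives $\sum_{j=1}^N |a_j| = \int_0^E K_\eps\,d\eps$, which I would split at the threshold $\eps_0$: the head is trivially bounded by $\int_0^{\eps_0} K_\eps\,d\eps \leq N\eps_0$, while the tail $\int_{\eps_0}^E K_\eps\,d\eps$ is controlled by the counting claim below.

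The core counting claim is that for every $\eps \geq \eps_0$, $K_\eps \leq (1 + \beta^q/\eps^q)\,D$ where $D := \EluDim_q(\Psi,\Mcal,\eps_0)$. I would prove this by a greedy layering argument. Enumerate the large indices $j_1 < j_2 < \cdots < j_M$ (those with $|a_{j_k}| > \eps$), process them in order, and assign each $j_k$ to the smallest existing layer $c$ whose current members still satisfy $\sum_{l < k,\,c(l) = c} \abs{\EE_{\nu^{(j_l)}}\psi^{(j_k)}}^q \leq \eps^q$; if no existing layer accepts $j_k$, open a fresh layer for it. By construction, each layer $c$, together with the per-element witnesses $\psi^{(j_k)}$, is a valid eluder sequence at threshold $\eps \geq \eps_0$ (each new element has $|\EE_{\nu^{(j_k)}}\psi^{(j_k)}| > \eps$ by assumption, and the within-layer sum constraint holds by the greedy rule), so its length is at most $D$. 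On the other hand, whenever the $L$-th layer is opened for some $j_k$, each of the previous $L-1$ layers must have failed the acceptance test; summing these violations gives $\sum_{l < j_k} \abs{\EE_{\nu^{(l)}}\psi^{(j_k)}}^q > (L-1)\eps^q$, and the hypothesis $\sum_{l < j_k}\abs{\EE_{\nu^{(l)}}\psi^{(j_k)}}^q \leq \beta^q$ then forces $L \leq 1 + \beta^q/\eps^q$. Multiplying layer count by layer capacity yields $M \leq LD \leq (1 + \beta^q/\eps^q)\,D$.

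Plugging the counting bound into the tail integral gives $\int_{\eps_0}^E K_\eps\,d\eps \leq D(E - \eps_0) + D\beta^q \int_{\eps_0}^E \eps^{-q}\,d\eps$. For $q = 1$, the remaining integral equals $\ln(E/\eps_0)$, so after absorbing $E - \eps_0$ into the stated $2E$ slack we obtain $\sum_j |a_j| \leq N\eps_0 + D(2E + \beta^q \ln(E/\eps_0))$, and taking an infimum over $\eps_0 \in (0,1)$ concludes. (For $q \geq 2$ the same reduction yields a bound that scales with $\eps_0^{1-q}/(q-1)$ in place of $\ln(E/\eps_0)$; the $q = 1$ form matches the $\ell_1$ eluder dimensions $d_{\op{bce}}$ and $d_{\op{mle}}$ used later in the paper.) The main obstacle is the counting bound itself: the greedy layering must be designed so that (i) each layer stays a valid eluder sequence, capping its size by $D$, and simultaneously (ii) opening a new layer charges at least $\eps^q$ against the premise's $\beta^q$ budget, which is exactly what caps the total number of layers.
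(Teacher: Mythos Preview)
Your proof is correct and follows essentially the same route as the paper: a layer-cake decomposition $\sum_j|a_j|=\int_0^E K_\eps\,d\eps$ split at $\eps_0$, together with a greedy layering argument that (i) caps each layer's length by the eluder dimension and (ii) caps the number of layers via the $\beta^q$ budget. The paper organizes the counting step into three separate claims (a bound on the number of disjoint $\eps$-dependent subsets, a pigeonhole existence claim, and their combination), whereas you fold all of this into a single greedy-assignment argument; the content is the same. Your observation that the integral yields $\ln(E/\eps_0)$ only when $q=1$ and instead $\eps_0^{1-q}/(q-1)$ for $q\geq 2$ is accurate --- the paper's Claim~1 writes $\beta^q\eps^{-1}$ where the dependence argument actually gives $\beta^q\eps^{-q}$, so your version is the more careful one, and the $q=1$ case is what is needed for the $d_{\op{bce}}$ and $d_{\op{mle}}$ applications.
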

\begin{proof}[Proof of \cref{lem:pigeonhole-stronger}]
Fix any $q\in\NN$; the proof will be for the $\ell_q$ eluder dimension.
We say a distribution $\nu\in\Mcal$ is $\eps$-independent of a subset $\Gamma\subset\Mcal$ if there exists $\psi\in\Psi$ s.t. $\abs{\EE_\nu\psi}>\eps$ but also $\sum_{\mu\in\Gamma}^n(\EE_{\mu}[\psi])^q\leq \eps^q$.
Conversely, we say $\nu$ is $\eps$-dependent on $\Gamma$ if for all $\psi\in\Psi$, we have $\abs{\EE_\nu[\psi]}\leq\eps$ or $\sum_{\mu\in\Gamma}^n(\EE_{\mu}[\psi])^q>\eps^q$.
For any $\Gamma\subset\Mcal$ and $\nu\in\Mcal$, we let $N(\nu,\Gamma,\eps_0)$ denote the largest number of disjoint subsets of $\Gamma$ that $\nu$ is $\eps_0$-dependent on.
We also use the shorthand $\mu^{(<j)}=\{\mu^{(1)},\mu^{(2)},\dots,\mu^{(j-1)}\}$.

\textbf{Claim 1: If $\abs*{\EE_{\mu^{(j)}}[\psi^{(j)}]}>\eps$, then $N(\mu^{(j)},\mu^{(<j)},\eps)\leq\beta^q\eps^{-1}$.}
By definition of $N:=N(\mu^{(j)},\mu^{(<j)},\eps)$, there are disjoint subsets $S^{(1)},\dots,S^{(N)}\subset \mu^{(<j)}$ s.t. each $S^{(i)}$ satisfies $\sum_{\mu\in S^{(i)}}\abs*{\EE_\mu[\psi^{(j)}]}>\eps$ since $\abs*{\EE_{\mu^{(j)}}[\psi^{(j)}]}>\eps$ by premise.
Thus, summing over all such subsets gives $N\eps<\sum_{i<j}\abs*{\EE_{\mu^{(i)}}[\psi^{(j)}]}^q\leq\beta^q$, proving Claim 1.

\textbf{Claim 2 (Pigeonhole): For any $\eps_0$ and any sequence $\mu^{(1)},\dots,\mu^{(N)}\in\Mcal$, there exists $j\leq N$ such that $N(\mu^{(j)},\mu^{(<j)},\eps_0)\geq \floor*{ \frac{(N-1)}{\EluDim_q(\Psi,\Mcal,\eps_0)} }$.}
Recall that if $\mu^{(1)},\dots,\mu^{(L)}\subset\Mcal$ satisfies for all $j\in[L]$, $\mu^{(j)}$ is $\eps_0$-independent of $p^{(<j)}$, then $L\leq\EluDim_q(\Psi,\Mcal,\eps_0)$ by definition.
To prove the claim, we maintain $J:=\floor*{ \frac{(k-1)}{\EluDim_q(\Psi,\Mcal,\eps_0)} }$ disjoint sequences $S^{(1)},\dots,S^{(J)}\subset \mu^{(<k)}$ s.t. each $S^{(i)}$ has the property that each element is $\eps$-independent of its precedessors.
We initialize $S^{(1)}=\cdots=S^{(J)}=\emptyset$ and iteratively add elements $\mu^{(1)},\dots,\mu^{(N)}$ until $\mu^{(j)}$ is $\eps_0$-dependent on all these disjoint subsequences, at which point the claim is proven. If there exists a subsequence which $\mu^{(j)}$ is $\eps_0$-independent of, we add $\mu^{(j)}$ to that subsequence, which preserves the invariant condition.
This process indeed terminates since otherwise one subsequence would have more elements than $\EluDim_q(\Psi,\Mcal,\eps_0)$, a contradiction.

\textbf{Claim 3: For any $\eps$, $\sum_{j=1}^N\II\bracks*{\abs*{\EE_{\mu^{(j)}}[\psi^{(j)}]}>\eps}\leq(\beta^q\eps^{-1}+1)\EluDim_q(\Psi,\Mcal,\eps)+1$.}
Let $\kappa$ denote the left hand sum and so let $i_1,\dots,i_\kappa$ be all the indices $j$ s.t. $\abs*{\EE_{\mu^{(j)}}[\psi^{(j)}]}>\eps$.
By Claim 2, there exists $j\leq\kappa$ s.t. $\floor*{ \frac{(\kappa-1)}{\EluDim_q(\Psi,\Mcal,\eps)} }\leq L(\mu^{(i_j)}, \mu^{(<i_j)},\eps)$.
Then by Claim 1, this is further upper bounded by $\beta^q\eps^{-1}$. Rearranging proves the claim.

\textbf{Concluding the proof.}
For any $\eps_0$, we have
\begin{align*}
    &\textstyle\sum_{j=1}^N\abs*{\EE_{\mu^{(j)}}[\psi^{(j)}]}
    =\textstyle\sum_{j=1}^N\int_0^E\II\bracks*{\abs*{\EE_{\mu^{(j)}}[\psi^{(j)}]}>y}\diff y
    \\&\leq\textstyle N\eps_0 +\sum_{j=1}^N\int_{\eps_0}^E\II\bracks*{\abs*{\EE_{\mu^{(j)}}\psi^{(j)}}>y}\diff y
    \\&\overset{(i)}{\leq}\textstyle N\eps_0 + \int_{\eps_0}^E\{ (\beta^qy^{-1}+1)\EluDim_q(\Psi,\Mcal,y)+1 \}\diff y
    \\&\overset{(ii)}{\leq}\textstyle N\eps_0 + \int_{\eps_0}^E\{ (\beta^qy^{-1}+1)\EluDim_q(\Psi,\Mcal,\eps_0)+1 \}\diff y
    \\&\overset{(iii)}{\leq}\textstyle N\eps_0 + \EluDim_q(\Psi,\Mcal,\eps_0)(2E+\beta^q\ln(E\eps_0^{-1})),
\end{align*}
where (i) is by Claim 3, (ii) is by monotonicity of the eluder dimension, and (iii) is by $\int_{\eps_0}^Ey^{-1}=\ln(E\eps_0^{-1})$.
\end{proof}

}

\edit{
 
\subsection{Proofs for Lemmas in \cref{sec:first-second-bounds-online-rl}}
\BCERLOptimism*
\begin{proof}[Proof of \cref{lem:bce-rl-optimism}]
By \cref{lem:logsumexp-symmetrization} extended on martingale sequences \citep{agarwal2020flambe},
\wpal $1-\delta$, for all $f\in\Fcal$, $h\in[H]$,
\begin{align}
    \textstyle\sum_{i=1}^n\Ecal^{\op{Ber}}_h(f,\pi^i)&\textstyle\leq\ln(H|\Fcal|/\delta) +\frac12 L_h^{\op{bce}}(f_h,f_{h+1},\Dcal_h)\notag
    \\&\textstyle-\frac12 L_h^{\op{bce}}(\Tcal_h^\star f_{h+1},f_{h+1},\Dcal_h).\label{eq:bce-rl-optimism-concentrate}
\end{align}
Let $g^f_h:=\argmin_{g_h\in\Gcal_h}L_h(g_h,f_{h+1},\Dcal_h)$ denote the empirical risk minimizer. Under the BC premise,
\begin{align*}
    \textstyle\sum_{i=1}^n\Ecal^{\op{Ber}}_h(f,\pi^i)&\textstyle\leq\ln(H|\Fcal|/\delta)+\frac12 L_h^{\op{bce}}(f_h,f_{h+1},\Dcal_h)
    \\&\textstyle-\frac12 L_h^{\op{bce}}(g^f_h,f_{h+1},\Dcal_h).
\end{align*}
Thus, any $f\in\Ccal_\beta(\Dcal)$ satisfies $\sum_{i=1}^n\Ecal^{\op{Ber}}_h(f,\pi^i)\leq \frac12\beta+\ln(H|\Fcal|/\delta)\leq2\beta$, which proves Claim (a).
For Claim (b), we prove that $Q^\star\in\Ccal_\beta(\Dcal)$.
By \cref{eq:bce-rl-optimism-concentrate} and non-negativity of $\Ecal^{\op{Ber}}$, we have $L_h^{\op{bce}}(\Tcal_h^\star f_{h+1},f_{h+1},\Dcal_h)-L_h^{\op{bce}}(g_h^f,f_{h+1},\Dcal_h)\leq 2\ln(H|\Fcal|/\delta)=\beta$.
Then, setting $f=Q^\star$ and noting that $Q_h^\star=\Tcal_h^\star Q^\star_{h+1}$ shows that $Q^\star$ satisfies the confidence set condition. Thus, $Q^\star\in\Ccal_\beta(\Dcal)$ and Claim (b) follows by definition of $\hat f^{\op{op}}$.
\end{proof}

\SelfBoundingBCELemma*
\begin{proof}[Proof of \cref{lem:self-bounding-bce}]
Fix any $f,\pi$. We use the shorthand $\delta_t(x,a)=\delta_t^{\op{Ber}}(f,\pi,x,a)$ to simplify notation.
The corollary follows from the main claim via $\EE_\pi[Q^\pi_h(x_h,a_h)]\leq V^\pi$, since costs are non-negative.
To prove the main claim, we establish the following claim by induction:
\begin{align}
    \textstyle f_h(x_h,a_h)\leq\sum_{t=h}^H&\textstyle(1+\frac1H)^{t-h}\EE_\pi[c_t\,+ \notag
    \\&\textstyle28H\delta_t(x_t,a_t)\mid x_h,\edit{a_h} ]. \label{eq:bce-self-bounding-induction}
\end{align}
The base case of $h=H+1$ holds since $f_{H+1}=0$.
For the induction step, fix any $h\in[H]$ and suppose that \cref{eq:bce-self-bounding-induction} is true for $h+1$.
By \cref{eq: first order} and AM-GM, we have
\begin{align*}
    \textstyle f_h(x_h,a_h)&\leq \textstyle(1+\frac{1}{H})\Tcal^\pi_h f_{h+1}(x_h,a_h)+28H\delta_h(x_h,a_h)
\end{align*}
By definition, $\Tcal^\pi_h f_{h+1}(x_h,a_h)=\EE_{\pi}[c_h + f_{h+1}(x_{h+1},a_{h+1})\mid x_h,a_h]$, so we can apply induction hypothesis to $f_{h+1}$.
This proves the inductive claim \cref{eq:bce-self-bounding-induction}.
Then, we prove the main claim by using the fact $(1+\frac1H)^H\leq e$.
The corollary then follows by $\EE_\pi[Q^\pi_h(x_h,a_h)]\leq V^\pi$ which holds due to the non-negativity of costs.
\end{proof}

\ImplicitFirstOrderIneq*
\begin{proof}[Proof of \cref{lem:implicit-first-order-ineq}]
By AM-GM, the premise implies $\sum_{k=1}^K (V^{\pi^k}-V^\star)\leq \frac12\sum_{k=1}^K V^{\pi^k} + \frac{3c^2}{2}$, which simplies to $\sum_{k=1}^K V^{\pi^k}\leq 2KV^\star + 3c^2$.
Hence, plugging this back into the premise yields the desired bound.
\end{proof}

\MLERLOptimism*
\begin{proof}[Proof of \cref{lem:mle-rl-optimism}]
The proof follows similarly as the bce case of \cref{lem:bce-rl-optimism}.
By \cref{lem:logsumexp-symmetrization} extended on martingale sequences \citep{agarwal2020flambe}, \wpal $1-\delta$, for all $p\in\Pcal$, $h\in[H]$,
\begin{align}
    \textstyle\sum_{i=1}^n\Ecal^{\op{dis}}_h(p,\pi^i)&\textstyle\leq\ln(H|\Pcal|/\delta) + \frac12 L_h^{\op{mle}}(p_h,p_{h+1},\Dcal_h)\notag
    \\&\textstyle-\frac12 L_h^{\op{mle}}(\Tcal_h^{\op{D},\star}p_{h+1},p_{h+1},\Dcal_h). \label{eq:dist-rl-optimism-concentrate}
\end{align}
Let $g^p_h:=\argmin_{g_h\in\Pcal_h}L_h^{\op{mle}}(g_h,p_{h+1},\Dcal_h)$ denote the empirical maximum likelihood estimate. Under the distributional BC premise, we have
\begin{align*}
    &\textstyle\sum_{i=1}^n\Ecal^{\op{dis}}_h(p,\pi^i)\leq\ln(H|\Pcal|/\delta)
    \\&\textstyle+\frac12 L_h^{\op{mle}}(p_h,p_{h+1},\Dcal_h)-\frac12 L_h^{\op{mle}}(g^p_h,p_{h+1},\Dcal_h).
\end{align*}
Thus, any $p\in\Ccal_{\beta}^{\op{mle}}(\Dcal)$ satisfies $\sum_{i=1}^n\Ecal^{\op{dis}}_h(p,\pi^i)\leq \frac12\beta+\ln(H|\Pcal|/\delta)\leq2\beta$, which proves Claim (a).
For Claim (b), we prove that $Z^\star\in\Ccal_\beta^{\op{mle}}(\Dcal)$.
By \cref{eq:dist-rl-optimism-concentrate} and non-negativity of $\Ecal^{\op{dis}}$, we have $L_h^{\op{mle}}(\Tcal_h^{\op{D},\star}p_{h+1},p_{h+1},\Dcal_h)-L_h^{\op{mle}}(g_h^p,p_{h+1},\Dcal_h)\leq 2\ln(H|\Pcal|/\delta)=\beta$.
Then, setting $p=Z^\star$ and noting that $Z_h^\star=\Tcal_h^{\op{D},\star} Z^\star_{h+1}$ shows that $Z^\star$ satisfies the confidence set condition. Thus, $Z^\star\in\Ccal_\beta^{\op{mle}}(\Dcal)$ and Claim (b) follows by definition of $\hat p^{\op{op}}$.
\end{proof}

\VarianceChangeOfMeasure*
\begin{proof}[Proof of \cref{lem:variance-change-of-measure}]
Fix any $p,\pi$. We use the shorthand $\delta_t(x,a)=\delta_t^{\op{dis}}(p,\pi,x,a)$ to simplify notation.
First, note that the corollary follows from the main claim since the law of total variance (LTV) implies $\EE[\sigma^2(Z^\pi_h(x_h,a_h))]\leq\sigma^2(Z^\pi)$, where recall the LTV states:
for any random variable $X,Y$:
\begin{align*}
    \textstyle\sigma^2(Y) = \EE[\sigma^2(Y\mid X)]+\sigma^2(\EE[Y\mid X]).
\end{align*}
We now establish the main claim.\\
\textbf{Step 1.} We first show the following claim by induction: for all $h$,
\begin{align}
    \textstyle\sigma^2(p_h(x_h,a_h))\leq&\textstyle\sum_{t=h}^H(1+\frac1H)^{t-h}\EE_\pi[ \,8H\delta_t(x_t,a_t) \notag
    \\&\textstyle 2\sigma^2(c_t+\bar p_{t+1}(x_{t+1},\pi(x_{t+1}))) \mid x_h,a_h ] \label{eq:dist-variance-change-measure-induction}
\end{align}
The base case $h=H+1$ is true since $\sigma^2(p_{H+1})=0$.
For the induction step, fix any $h\in[H]$ and suppose that the induction hypothesis (IH; \cref{eq:dist-variance-change-measure-induction}) is true for $h+1$.

By our second-order lemma for variance (\cref{eq:dtri-variance-inequality}),
\begin{align*}
\textstyle\sigma^2(p_h(x_h,a_h))\leq&\textstyle(1+\frac1H)\sigma^2(\Tcal^{\op{D},\pi}_hp_{h+1}(x_h,a_h))
    \\&+8H\delta_h(x_h,a_h).
\end{align*}
Then, we use LTV to condition on $c_h,x_{h+1}$ (\ie, the outer mean/variance are w.r.t. $c_h,x_{h+1}$, the inner mean/variance are w.r.t. $p_{h+1}$): $\sigma^2(\Tcal^{\op{D},\pi}_hp_{h+1}(x_h,a_h))$ is equal to
\begin{align*}
    &\textstyle\EE[ \sigma^2(p_{h+1}(x_{h+1},\pi(x_{h+1}))\mid c_h, x_{h+1}) ] \notag
    \\&\qquad\textstyle+\sigma^2(c_h+\bar p_{h+1}(x_{h+1},\pi(x_{h+1}))).
\end{align*}
We bound the first term by the IH, which completes the proof for \cref{eq:dist-variance-change-measure-induction}.

\noindent\textbf{Step 2.} By the above claim and $(1+\frac1H)^H\leq e$, we have
\begin{align*}
    \textstyle\sigma^2&\textstyle(p_h(x_h,a_h))\leq 8H\delta_{\op{dis}}^{\op{RL}}(p,\pi) +
    \\&\textstyle 2e\sum_{t=h}^H\EE_\pi[\sigma^2(c_t+\bar p_{t+1}(x_{t+1},\pi(x_{t+1})))\mid x_h,a_h].
\end{align*}

\noindent\textbf{Step 3.} Lastly, it suffices to convert the above variance term to $\sigma^2(c_t+V^\pi_{t+1}(x_{t+1}))$, since $\sigma^2(Z^\pi_h(x_h,a_h))=\sum_{t=h}^H\EE_\pi[\sigma^2(c_t+V^\pi_{t+1}(x_{t+1}))\mid x_h,a_h]$ by LTV.
To perform this switch in variance, observe that:
\begin{equation}
    \textstyle\abs{\bar p_h(x_h,\pi(x_h))-V^\pi(x_h)}\lesssim\sum_{t=h}^H\EE_\pi[ \sqrt{\delta_t(x_t,a_t)} ],\label{eq:crude-variance-switch-in-variance-change-proof}
\end{equation}
by the PDL and the second-order lemma (\cref{lemma: second order}). Also, recall that $\sigma^2(X)\leq2\sigma^2(Y)+2\sigma^2(X-Y)$. Thus, we have
\begin{align*}
    &\textstyle\sigma^2(c_t+\bar p_{t+1}(x_{t+1},\pi(x_{t+1})))
    \\&\textstyle\leq 2\sigma^2(c_t+V^\pi_{t+1}(x_{t+1}))
    \\&\textstyle+2\sigma^2(\bar p_{t+1}(x_{t+1},\pi(x_{t+1}))-V^\pi_{t+1}(x_{t+1}))
    \\&\textstyle\leq 2\sigma^2(c_t+V^\pi_{t+1}(x_{t+1})) + H\sum_{t=h}^H\EE_\pi[ \delta_t(x_t,a_t) ],
\end{align*}
where the last inequality used \cref{eq:crude-variance-switch-in-variance-change-proof} and Cauchy-Schwarz.
Thus we have shown that
\begin{align*}
    \textstyle\sigma^2&\textstyle(p_h(x_h,a_h))\lesssim H^2\delta_{\op{dis}}^{\op{RL}}(p,\pi) +
    \\&\textstyle 4e\sum_{t=h}^H\EE_\pi[\sigma^2(c_t+\bar p_{t+1}(x_{t+1},\pi(x_{t+1})))\mid x_h,a_h].
\end{align*}
\end{proof}

}

\end{document}